\newcommand{\method}{FedFed\xspace}
\newcommand{\xrf}{performance-robust features\xspace}
\newcommand{\xef}{performance-sensitive features\xspace}
\definecolor{greyC}{RGB}{180,180,180}
\definecolor{greyL}{RGB}{235,235,235}
\newtheorem{theorem}{Theorem}[section]
\newtheorem{lemma}[theorem]{Lemma}
\newtheorem{definition}[theorem]{Definition}
\newcounter{question}
\newenvironment{question}[1][]{\refstepcounter{question}\par\medskip
   \noindent\textit{Q.\thequestion} \rmfamily}{\medskip}
\title{FedFed: Feature Distillation against Data Heterogeneity in Federated Learning}
 \author{
Zhiqin Yang$^{1}\thanks{Equal contributions.}$ \quad Yonggang Zhang$^{2*}$ \quad  Yu Zheng$^3$ \quad Xinmei Tian$^5$ \quad
\\\bf Hao Peng$^{1, 6 \thanks{  Corresponding author (penghao@act.buaa.edu.cn)}}$ \quad   Tongliang Liu$^4$ \quad   
 Bo Han$^{2}$ \quad   \\
 $^1$Beihang University \quad
 $^2$Hong Kong Baptist University \quad
 $^3$Chinese University of Hong Kong \\
 $^4$The University of Sydney \quad
 $^5$University of Science and Technology of China \\
 $^6$ Kunming University of Science and Technology
 \setcounter{footnote}{0}
}
\begin{document}

\maketitle

\begin{abstract}
Federated learning (FL) typically faces data heterogeneity, i.e., distribution shifting among clients. 
Sharing clients' information has shown great potentiality in mitigating data heterogeneity, yet incurs a dilemma in preserving privacy and promoting model performance. To alleviate the dilemma, we raise a fundamental question: \textit{Is it possible to share partial features in the data to tackle data heterogeneity?}
In this work, we give an affirmative answer to this question by proposing a novel approach called {\textbf{Fed}erated \textbf{Fe}ature \textbf{d}istillation} (\method).
Specifically, \method partitions data into performance-sensitive features (i.e., greatly contributing to model performance) and performance-robust features (i.e., limitedly contributing to model performance).
The performance-sensitive features are globally shared to mitigate data heterogeneity, while the performance-robust features are kept locally.
\method enables clients to train models over local and shared data. Comprehensive experiments demonstrate the efficacy of \method in promoting model performance. The code is available at \url{https://github.com/visitworld123/FedFed}.
\end{abstract}

\section{Introduction}

Federated learning (FL), beneficial for training over multiple distributed data sources, has recently received increasing attention~\citep{kairouz2021advances,li2021survey,yang2019federated}. 
In FL, many clients collaboratively train a global model by aggregating gradients (or model parameters) without the need to share local data. 
However, the major concern is heterogeneity issues~\citep{mcmahan2017communication,li2022federated,li2020federated} 
caused by Non-IID distribution of distributed data and diverse computing capability across clients.
The heterogeneity issues can cause unstable model convergence and degraded prediction accuracy, hindering further FL deployments in practice~\citep{kairouz2021advances}.

To address the heterogeneity challenge, the seminal work, federated averaging (FedAvg), introduces the model aggregation of locally trained models~\citep{mcmahan2017communication}.
It addresses the diversity of computing and communication but still faces the issue of client drift induced by data heterogeneity~\citep{karimireddy2020scaffold}.
Therefore, a branch of works for defending data heterogeneity has been explored by devising new learning objectives~\citep{li2020federated}, aggregation strategies~\citep{yurochkin2019bayesian}, and constructing shareable information across clients~\citep{zhao2018federated}. 
Among explorations as aforementioned, sharing clients' information has been considered to be a straightforward and promising approach to mitigate data heterogeneity~\citep{zhao2018federated,luo2021no}.

However, the dilemma of preserving data privacy and promoting model performance hinders the practical effectiveness of the information-sharing strategy. Specifically, it shows that a limited amount of shared data could significantly improve model performance~\citep{zhao2018federated}. 
Unfortunately, no matter for sharing raw data, synthesized data, logits, or statistical information~\citep{luo2021no,goetz2020federated,hao2021towards,karimireddy2020scaffold} can incur privacy concerns~\citep{nasr2019comprehensive,mothukuri2021survey,yan2022membership}.
Injecting random noise to data provides provable security for protecting privacy~\citep{dppaper,du2023dpforward}. Yet, the primary concern for applying noise to data lies in performance degradation~\citep{tramer2020differentially} as the injected noise negatively contributes to model performance. Thus, perfectly playing the role of shared information requires tackling the dilemma of data privacy and model performance.

We revisit the purpose of sharing information to alleviate the dilemma in information sharing and performance improvements and ask:
\vspace{-0.3cm}

\begin{question}\label{q1}
\textit{Is it possible to share partial features in the data to mitigate heterogeneity in FL? }
This question is fundamental to reaching a new phase for mitigating heterogeneity with the information-sharing strategy. 
Inspired by the partition strategy of spurious feature and robust feature \citep{zhang2021adversarial}, the privacy and performance dilemma could be solved if the data features were separated into performance-robust and performance-sensitive parts without overlapping, where the performance-robust features contain almost all the information in the data. 
Namely, the performance-robust features that limitedly contribute to model performance are kept locally. Meanwhile, the performance-sensitive features that could contribute to model generalization are selected to be shared across clients. Accordingly, the server can construct a global dataset using the shared performance-sensitive features, which enables clients to train their models over the local and shared data.
\end{question}
\vspace{-0.4cm}

\begin{question}\label{q2}
\textit{How to divide data into performance-robust features and performance-sensitive features?}
The question is inherently related to the spirit of the information bottleneck (IB) method~\citep{IB0}. In IB, ideal features should discard all information in data features except for the minimal sufficient information for generalization~\citep{IB}. Concerning the information-sharing and performance-improvement dilemma, the features discarded in IB may contain performance-robust features, i.e., private information, thus, they are unnecessary to be shared for heterogeneity mitigation. Meanwhile, the performance-sensitive features contain information for generalization, which are the minimum sufficient information and should be shared across clients for heterogeneity mitigation. Therefore, IB provides an information-theoretic perspective of dividing data features for heterogeneity mitigation in FL.
\vspace{-0.3cm}
\end{question}

\begin{question}\label{q3}
\textit{What if performance-sensitive features contain private information?}
This question lies in the fact that a non-trivial information-sharing strategy should contain necessary data information to mitigate the issue of data heterogeneity.
That is, the information-sharing strategy unavoidably causes privacy risks. 
Fortunately, we can follow the conventional style in applying random noise to protect performance-sensitive features because the noise injection approach can  provide a de facto standard way for provable security~\citep{dppaper}. Notably, applying random noise to performance-sensitive features differs from applying random noise for all data features. More specifically, sharing partial features in the data is more accessible to preserve privacy than sharing complete data features, which is fortunately consistent with our theoretical analysis, see Theorem \ref{the::noisecom} in Sec. \ref{subsection:dp}.
\vspace{-0.3cm}
\end{question}

\textbf{Our Solution.} Built upon the above analysis, we propose a novel framework, named {\textbf{Fed}erated \textbf{Fe}ature \textbf{d}istillation} (\method), to tackle data heterogeneity by generating and sharing performance-sensitive features. 
According to the question \textit{Q.\ref{q1}}, \method introduces a competitive mechanism by decomposing data features $\mathbf{x} \in \mathbb{R}^{d}$ with dimension $d$ into performance-robust features $\mathbf{x}_r \in \mathbb{R}^{d}$ and performance-\underline{s}ensitive features $\mathbf{x}_s \in \mathbb{R}^{d}$, i.e., $ \mathbf{x}=\mathbf{x}_r + \mathbf{x}_s$. Then following the question \textit{Q.\ref{q2}}, \method generates performance-robust features $\mathbf{x}_r$ in an IB manner for data $\mathbf{x}$. In line with \textit{Q.\ref{q3}}, \method enables clients to securely share their protected features $\mathbf{x}_p$ by applying random noise $\mathbf{n}$ to performance sensitive features $\mathbf{x}_s$, i.e., $\mathbf{x}_p = \mathbf{x}_s + \mathbf{n}$, where $\mathbf{n}$ is drawn from a Gaussian distribution $\mathcal{N}(\mathbf{0},\sigma^2_s\mathbf{I})$ with variance $\sigma^2_s$. Finally, the server can construct a global dataset to tackle data heterogeneity using the protected features $\mathbf{x}_p$, enabling clients to train models over the local private and globally shared data.

We deploy \method on four popular FL algorithms, including FedAvg~\citep{mcmahan2017communication}, FedProx~\citep{li2020federated}, SCAFFOLD~\citep{karimireddy2020scaffold}, and FedNova~\citep{wang2020tackling}. Atop them, we conduct comprehensive experiments on various scenarios regarding different amounts of clients, varying degrees of heterogeneity, and four datasets. Extensive results show that the \method achieves considerable performance gains in all settings. 

Our contributions are summarized as follows:
\vspace{-3mm}
\begin{enumerate}
\setlength{\itemsep}{0pt}
\item We pose a foundation question to challenge the necessity of sharing all data features for mitigating heterogeneity in FL with information-sharing strategies. 
The question sheds light on solving the privacy and performance dilemma, in a way of sharing partial features that contribute to data heterogeneity mitigation (Sec~\ref{subsection:moti}) 

\item 
To solve the dilemma in information sharing and
performance improvements, we propose a new framework \method. 
In \method, each client performs feature distillation---partitioning local data into performance-robust and performance-sensitive features (Sec~\ref{subsection:feat_dis}) ---and shares the latter with random noise globally (Sec~\ref{subsection:dp}).
Consequently, \method can mitigate data heterogeneity by enabling clients to train their models over the local and shared data.
\item We conduct comprehensive experiments to show that \method consistently and significantly enhances the convergence rate and generalization performance of FL models across different scenarios (Sec~\ref{subsection:main_res}).
\end{enumerate}

\section{Preliminary}
\textbf{Federated Learning.} Federated learning allows multiple clients to collaboratively train a global model parameterized by $\mathbf{\phi}$ without exposing clients' data ~\citep{mcmahan2017communication,kairouz2021advances}. 
In general, the global model aims to minimize a global objective function $\mathcal{L}(\mathbf{\phi})$ over all clients' data distributions:
\begin{equation} \label{eq:global_obj}
    \min_{\mathbf{\phi}} \mathcal{L}(\mathbf{\phi}) = \sum_{k=1}^{K} \lambda_k \mathcal{L}_{k}(\mathbf{\phi}_k),
\end{equation}
where $K$ represents the total number of clients, $\lambda_k$ is the weight of the $k$-th client. The local objective function $\mathcal{L}_{k}(\mathbf{\phi}_k)$ of client $k$ is defined on the distribution $P(X_k,Y_k)$ with random variables $X_k,Y_k$:
\begin{equation} \label{eq:local_obj_norm}
    \mathcal{L}_{k}(\mathbf{\phi}_k) \triangleq \mathbb{E}_{(\mathbf{x},\mathbf{y}) \sim P(X_k,Y_k)} \ell(\mathbf{\phi}_k;\mathbf{x},\mathbf{y}),
\end{equation}
where $\mathbf{x}$ is input data with its label $\mathbf{y}$ and $\ell(\cdot)$ stands for the loss function, e.g., cross-entropy loss. Due to the distributed property of clients' data, the global objective function $\mathcal{L}(\mathbf{\phi})$ is optimized round-by-round. Specifically, within the $r$-th communication round, a set of clients $C$ are selected to perform local training on their private data, resulting in $|C|$ optimized models $ \{ \mathbf{\phi}^{r}_k \}^{|C|}_{k=1}$. These optimized models are then sent to a central server to derive a global model for the $(r+1)$-th communication round by an aggregation mechanism $AGG(\cdot)$, which may vary from different FL algorithms:
\begin{equation} \label{eq:updates}
    \mathbf{\phi}^{r+1} = AGG(\{  \mathbf{\phi}^{r}_k \}^{|C|}_{k=1}).
\end{equation}

\textbf{Differential Privacy.} Differential privacy~\citep{dppaper} is a framework to quantify to what extent individual privacy in a dataset is preserved while releasing the data.
\begin{definition}\label{def::dp}
	(Differential Privacy). A randomized mechanism $\mathcal{M}$ provides $(\epsilon,\delta)$-differential privacy (DP) if for any two neighboring datasets $D$ and $D'$ that differ in
	a single entry, $\forall S \subseteq Range(\mathcal{M})$,
	\begin{equation} \notag
		\mathrm{Pr}(\mathcal{M}(D)\in S)\leq e^\epsilon \cdot\mathrm{Pr}(\mathcal{M}(D')\in S)+\delta. 
	\end{equation}
	where $\epsilon$ is the privacy budget and $\delta$ is the failure probability. 
\end{definition}
The definition of $(\epsilon,\delta)$-DP shows the difference of two neighboring datasets in the probability that the output of $\mathcal{M}$ falls within an arbitrary set $S$ is related to $\epsilon$ and an error term $\delta$. Similar outputs of $\mathcal{M}$ on $D,D'$ (i.e., smaller $\epsilon$) represent a stronger privacy guarantee. In sum, DP as a de facto standard of quantitative privacy, provides provable security for protecting privacy. 

\textbf{Information Bottleneck.} Traditionally, in the context of the Information Bottleneck (IB) framework, the goal is to effectively capture the information relevant to the output label $Y$, denoted as $Z$, while simultaneously achieving maximum compression of the input $X$. $Z$ represents the latent embedding, which serves as a compressed and informative representation of $X$, preserving the essential information while minimizing redundancy. This objective can be formulated as:
\begin{equation}\label{eq:pre_ib}
    \mathcal{L}_{\mathsf{IB}} = I(X;Y|Z), \quad s.t. \ I(X;Z) \le I_{\mathsf{IB}}
\end{equation}
where $I(\cdot)$ denote the mutual information and $I_{\mathsf{IB}}$ is a constant. This function is defined as a rate-distortion problem, indicating that IB is to extract the most efficient and informative features.

\section{Methodology}\label{sec:method}
We detail the Federated Feature distillation (\method) proposed to mitigate data heterogeneity in FL.
\method adopts the information-sharing strategy with the spirits of information bottleneck (IB).
Roughly, it shares the minimal sufficient features, while keeping other features at clients\footnote{Appendix~\ref{appendx:overview} displays \method's overview.  }. 

\subsection{Motivation}
\label{subsection:moti}
Data heterogeneity inherently comes from the difference in data distribution among clients.
Aggregating models without accessing data would inevitably bring performance degradation.
Sharing clients' data benefits model performance greatly, but intrinsically violates privacy.
 
Applying DP to protect shared data looks feasible; however, it is not a free lunch with paying the accuracy loss on the model.

Draw inspiration from the content and style partition of data causes~\citep{zhang2021adversarial}, we investigate the dilemma in information sharing and performance improvements through a feature partition perspective. By introducing an appropriate partition, we can share performance-sensitive features while keeping \xrf locally.
Namely, \textit{performance-sensitive features in the data are all we need for mitigating data heterogeneity!} 

We will elucidate the definitions of \xef and \xrf. Firstly, we provide a precise definition of a valid partition (Definition~\ref{def::valid_par}), which captures the desirable attributes when partitioning features. Subsequently, adhering to the rules of a valid partition, we formalize the two types of features (Definition~\ref{def::two_features}).
\begin{definition}\label{def::valid_par}
(Valid Partition). A partition strategy is to partition a variable $X$ into two parts in the same measure space such that $X=X_1 + X_2$. We say the partition strategy is valid if it holds: (i) $H(X_1, X_2|X)=0$; (ii) $H(X|X_1, X_2)=0$; (iii) $I(X_1;X_2)=0$; where $H(\cdot)$ denotes the information entropy and $I(\cdot)$ is the mutual information.
\end{definition}

\begin{definition}\label{def::two_features}
(Performance-sensitive and Performance-robust Features). Let $X=X_s + X_r$ be a valid partition strategy. We say $X_s$ are \xef such that $I(X;Y|X_s)=0$, where $Y$ is the label of $X$. Accordingly, $X_r$ are the \xrf.
\end{definition}

Intuitively, performance-sensitive features contain all label information, while performance-robust features contain all information about the data except for the label information. More discussions about Definition~\ref{def::valid_par} and Definition~\ref{def::two_features} can be found in Appendix~\ref{appendix:feature_par}. Now, the challenge turns out to be: 
Can we derive performance-sensitive features to mitigate data heterogeneity? \method provides an affirmative answer from an IB perspective, boosting a simple yet effective data-sharing strategy.

\subsection{Feature Distillation}
\label{subsection:feat_dis}
\begin{figure}[t]
\centering
\includegraphics[width=0.7\linewidth]{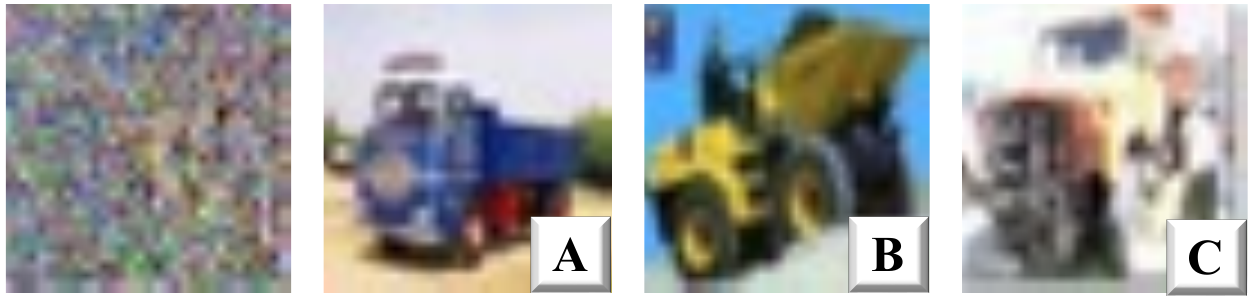}
\caption{Have a Guessing Game! Question: Which one is the first image from? A or B or C? 
}\label{fig::guess}
\vspace{-0.6cm}
\end{figure}

Given the motivation above, we propose to distil features such that data features can be partitioned into performance-robust features depicting mostly data and performance-sensitive features favourable to model performance. 
We design feature distillation and answer Q.\ref{q2} below.

We draw inspiration from the information bottleneck method~\citep{IB}. Specifically, only minimal sufficient information is preserved in learned representations while the other features are dismissed~\citep{IB}. This can be formulated as~\footnote{We follow the formulation used in Tishby and Zaslavsky \citep{IB,tishby2015deep}.}:
\begin{equation}
\label{eq:IB_general}
    \min_{Z}I(X;Y|Z), \ \textit{s.t.} \ I(X;Z) \le I_{\mathsf{IB}},
\end{equation}
where $Z$ represents the desired representation extracted from input $X$ with its label $Y$, $I(\cdot)$ is the mutual information, and $I_{\mathsf{IB}}$ stands for a constant. Namely, given the learned representation $Z$, the mutual information between $X$ and $Y$ is minimized. Meanwhile, $Z$ dismisses most information about the input $X$ so that the mutual information $I(X;Z)$ is less than a constant $I_{\mathsf{IB}}$.

Similarly, in \method, only minimal sufficient information is necessary to be shared across clients to mitigate data heterogeneity, while the other features are dismissed before sharing data, i.e., kept locally on the client. The differences between IB and \method have two folds: 1) \method aims to make the dismissed features close to the original features (i.e., depict most private data information), while IB focuses on the dissimilarity between the preserved and original features (i.e., contain minimal sufficient information about data); 2) \method dismisses information in the data space while IB does that in the representation space. More specifically, the objective of feature distillation is:
\begin{equation} \label{eq:1}
    \min_{Z}{I(X;Y|Z)}, \ \textit{s.t.} \ I(X;X-Z|Z)\ge I_{\mathsf{FF}}, 
\end{equation}
where $Z$ represents the \xef in $X$, $Y$ is the label of $X$, $(X-Z)$ stands for \xrf that are unnecessary to be shared, and $I_{\mathsf{FF}}$ is a constant.
In Eq. (\ref{eq:1}), the $(X-Z)$ should represent data mostly conditioned on \xef $Z$, while the $Z$ should contain necessary information about the label $Y$.
Consequently, learning with the objective can divide data features into \xef and \xrf, achieving the goal of feature distillation.

To make the feature distillation tractable, we derive an objective equal to the original objective in Eq. (\ref{eq:1}) (see Appendix \ref{ib_derivation} for more details) for client $k$ as follows:
\begin{equation} \label{eq:ib_realization}
\min_{\theta}{-\mathbb{E}_{(\mathbf{x},\mathbf{y}) \sim P(X_k,Y_k)} \log p(\mathbf{y}|\mathbf{z}(\mathbf{x};\mathbf{\theta}))}, \ 
\textit{s.t.} \ ||\mathbf{z}(\mathbf{x};\theta)||_{2}^{2} \le \rho,
\end{equation}
where $\theta$ is the parameter to be optimized for generating \xef $\mathbf{z}(\cdot;\theta)$, $(\mathbf{x}, \mathbf{y})$ represents the input pair drawn from the joint distribution $P(X_{k}, Y_k)$ of client $k$, $p(\mathbf{y}|\cdot)$ is the probability of predicting $Y=\mathbf{y}$, and $\rho>0$ stands for a constant. The underlying insight of the objective function in Eq. (\ref{eq:ib_realization}) is intuitive. Specifically, the learned \xef $\mathbf{z}(\cdot;\theta)$ are capable of predicting the label $\mathbf{y}$ while having the minimal $\ell_2$-norm. 

Unfortunately, the original formulation in Eq. (\ref{eq:ib_realization}) can hardly be used for dividing data features. This is because the feature distillation in Eq. \ref{eq:ib_realization} cannot make the preserved features $\mathbf{x}-\mathbf{z}(\mathbf{x};\theta)$ similar to the raw features $\mathbf{x}$. Namely, the original Eq. (\ref{eq:ib_realization}) cannot guarantee the desired property of the preserved features. To solve the problem, we propose an explicit competition mechanism in the data space. Specifically, we model preserved features, i.e., \xrf, with $q(\mathbf{x};\theta)$ explicitly and model the \xef $\mathbf{z}(\mathbf{x};\theta) \triangleq \mathbf{x} - q(\mathbf{x};\theta)$ implicitly:
\begin{equation} \label{eq:fedfed_realization}
\begin{split}
    \min_{\theta}-&\mathbb{E}_{(\mathbf{x},\mathbf{y}) \sim P(X_k,Y_k)} \log p(\mathbf{y}|\mathbf{x} - q(\mathbf{x};\theta)), \  \textit{s.t.} \ ||\mathbf{x} - q(\mathbf{x};\theta)||_{2}^{2} \le \rho.
\end{split}
\end{equation}
Thus, by Eq. \ref{eq:fedfed_realization}, \xef can predict labels and \xrf are almost the same as raw features.

The realization of Eq. (\ref{eq:fedfed_realization}) is straightforward. To be specific, we can employ a generative model parameterized with $\theta$ to generate \xrf, i.e., $q(\mathbf{x};\theta)$ in Eq. (\ref{eq:fedfed_realization}). Meanwhile, we train a local classifier $f(\cdot;\mathbf{w}_k)$ parameterized with $\mathbf{w}_k$ for client-$k$ to model the process of predicting labels, i.e., $-\log p(\mathbf{y}|\cdot)$ in Eq.~\eqref{eq:fedfed_realization}. Accordingly, the realization of feature distillation is formulated as follows with $\mathbf{z}(\mathbf{x};\theta) \triangleq \mathbf{x} - q(\mathbf{x};\theta)$:
\begin{equation} \label{eq:fedfed_final}
\begin{split}
    \min_{\theta,\mathbf{w}_k}
    - & \mathbb{E}_{(\mathbf{x},\mathbf{y}) \sim P(X_k,X_k)} \ell(f(\mathbf{z}(\mathbf{x};\theta);\mathbf{w}_k), \mathbf{y}), \ \textit{s.t.}
    ||\mathbf{z}(\mathbf{x};\theta)||_{2}^{2} \leq \rho,
\end{split}
\end{equation}
where $\ell(\cdot)$ is the cross-entropy loss, $q(\mathbf{x};\theta)$ stands for a generative model, and $\rho$ represents a tunable hyper-parameter. Built upon Eq. \eqref{eq:fedfed_final}, we can perform feature distillation, namely, the outputs of a generator $q(\mathbf{x};\theta)$ serve as \xrf and $\mathbf{x} - q(\mathbf{x};\theta)$ are used as \xef. We merely share $\mathbf{x} - q(\mathbf{x};\theta)$ to tackle data heterogeneity by training models over both local and shared data. Algorithm~\ref{algo:FedFed} summarizes the procedure of feature distillation. 

\subsection{Protection for Performance-Sensitive Features} \label{subsection:dp}
Until now, we have intentionally overlooked the overlap between performance-sensitive features and performance-robust features. 
Since we prioritize data heterogeneity, overlapping is almost unavoidable in practice, i.e., \xef containing certain data privacy. 
Accordingly, merely sharing \xef can risk privacy. Thus, we answer Q.~\ref{q3} below.

\begin{algorithm}[t]
	\caption{Feature Distillation}\label{algo:FedFed}

	\textbf{Server input: } communication round $ T_d $,      DP noise level $\sigma^2_{s}$\\
    \textbf{Client $k$'s input:} local epochs of feature distillation  $E_d$,  $k$-th local dataset $\mathcal{D}^k$ and rescale to $[0,1]$
    
	\begin{algorithmic}

		\STATE {\bfseries Initialization:} server distributes the initial model $\mathbf{w}^0,\theta^0$ to all clients
        \STATE \textbf{Server Executes:}
            \FOR{each round   $ t=1,2, \cdots, T_d$}
        	\STATE server samples a subset of clients $C_t \subseteq \left \{1, ..., K \right \}$, 
    
                \STATE server \textbf{communicates} $\mathbf{w}^t,\theta^t$ to selected clients 
        		\FOR{ each client $ k \in C_{t}$ \textbf{ in parallel }}
                       \STATE $ \mathbf{w}_{k}^{t+1}, \theta_{k}^{t+1} \leftarrow $ \text{Local\_FeatDis} $(\mathbf{w}^t,\theta^t,\sigma^2_s)$
                       
        		\ENDFOR
             \STATE $ \mathbf{w}^{t+1}, \theta^{t+1}\leftarrow $ \text{AGG} $( \mathbf{w}_{k}^{t}, \theta_{k}^{t}, k \in C_t)$
        	\ENDFOR
        \STATE $\mathcal{D}^s =\{\mathcal{D}^s_k\}^K_{k=1}\leftarrow $\text{Collecting} $\mathbf{x}_p$ generated by $k$-th client use Eq~(\ref{eq:fedfed_final}), where $\mathbf{x}_p = \mathbf{x}_s + \mathbf{n}$
       \STATE
        \STATE \textbf{Local\_FeatDis($\mathbf{w}^{t},\theta ^t,\sigma^2_s$):}
            \FOR{each local epoch $e$ with $e=1,\cdots, E_d $}
             \STATE $ \mathbf{w}_{k}^{t+1},\theta_{k}^{t+1} \leftarrow $ SGD update use Eq~
             (\ref{eq:fedfed_final}).
            \ENDFOR
        \STATE \textbf{Return} $\mathbf{w}_{k}^{t+1}, \theta_{k}^{t+1} $ to server
        \STATE
\end{algorithmic}
\end{algorithm}

\textbf{Why employ DP?} The constructed \xef may contain individual privacy, thus, the goal of introducing a protection approach is for individual privacy. In addition, the employed protection approach is expected to be robust against privacy attacks~\citep{sp/ShokriSSS17,uss/0001TO21}. According to the above analysis, differential privacy (DP) is naturally suitable in our scenario of feature distillation. Thus, we employ DP to protect \xef before sending them to the server.

\textbf{How to apply DP?} Applying noise (e.g., Gaussian or Laplacian) to \xef before sharing can protect features with DP guarantee~\citep{stoc/BlumLR08,dppaper}, i.e., $\mathbf{x}_p = \mathbf{x}_s+\mathbf{n}$. Consequently, the server can collect protected features $\mathbf{x}_p$ from clients to construct a global dataset and send the dataset back to clients. To give a vivid illustration, we provide a guessing game\footnote{We reveal the answer in Appendix~\ref{appendix:guess_game}} in Figure~\ref{fig::guess}, showing $4$ images. The first image represents the DP-protected performance-sensitive features of one image (A, B, or C). It is hard to identify which one of the raw images (A, B, C) the first image belongs to.

\textbf{Training with DP.} 
The protected \xef are shared, thus, the server can construct a globally shared dataset. Using the global dataset, clients can train local classifier $f(\cdot;\phi_k)$ parameterized by $\phi_k$ with the private and shared data:
\begin{equation}\label{local_obj}
\begin{split}
    \min_{\phi_k}
    \mathbb{E}_{(\mathbf{x},\mathbf{y}) \sim P(X_k,Y_k)}
    \ell (f(\mathbf{x};\phi_k), \mathbf{y})
    +
    \mathbb{E}_{(\mathbf{x}_{p},\mathbf{y})\sim P(\mathbf{h}(X_k),Y_k)}
    \ell(f(\mathbf{x}_{p};\phi_k), \mathbf{y}),
\end{split}
\end{equation}
where $f(\cdot;\phi_k)$ is trained for model aggregation, $(\mathbf{x}_{p},\mathbf{y})$ are protected \xef that are collected from all clients, and $\mathbf{h}(X_k):=\mathbf{z}(X_k) + n$ with noise $n$. Pseudo-code of how to apply \method are listed in Appendix~\ref{appendix: more_alg}. 

\textbf{DP guarantee.} 
Following DP-SGD~\cite{abadi2016deep}, we employ the idea of the $\ell_2$-norm clipping relating to the selection of the noise level $\sigma$. Specifically, we realize the constraint in Eq. \eqref{eq:fedfed_final} as follows: $\|\mathbf{z}(\mathbf{x};\theta)\|_{2} \leq \rho \|\mathbf{x}\|$, with $ 0<\rho < 1$ (i.e., $\|\mathbf{x}_s\| \leq \rho \|\mathbf{x}\|$). For aligning analyses with conventional DP guarantee, we let the random mechanism on \xrf $\mathbf{x}_r$ be $\mathbf{x}_r+n, n \sim \mathcal{N}(0,\sigma_{r}^2\mathbf{I})$. Similar for \xef, we have $\mathbf{x}_s + n, n\sim \mathcal{N}(0,\sigma_{s}^2\mathbf{I})$.
Since $\mathbf{x}_r$ is kept locally, an adversary views nothing (or random data without any entropy). This keeps equal to adding a sufficiently large noise, i.e.,  $\sigma_r=\infty$ on $\mathbf{x}_r$ to make it random enough. Consequently, \method can provide a strong privacy guarantee.  Moreover, for each client, we show that \method requires a relatively small noise level $\sigma$ for achieving identical privacy, which is given in Theorem~\ref{the::noisecom} (detailed analysis is left in Appendix~\ref{appendix:full_ana_DP}).
\begin{theorem}
\label{the::noisecom}
Let $\sigma$ be the noise scale for \method and $\sigma'$ be the noise scale for sharing raw $\mathbf{x}$. Given identical $\epsilon,\delta$, we attain $\sigma < \sigma'$ such that  $\sigma\propto \|\mathbf{x}_s\|^2$.
\end{theorem}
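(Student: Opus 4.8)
The plan is to reduce the statement to a comparison of the $\ell_2$-sensitivities of the two release mechanisms under the Gaussian mechanism, which is exactly the tool underlying DP-SGD~\citep{abadi2016deep}. Recall that to release a vector-valued quantity $g$ with $(\epsilon,\delta)$-DP it suffices to add noise $\mathcal{N}(\mathbf{0},\sigma^2\mathbf{I})$ whose standard deviation satisfies the lower bound $\sigma \ge \sqrt{2\ln(1.25/\delta)}\,\Delta_2(g)/\epsilon$, where $\Delta_2(g)$ denotes the $\ell_2$-sensitivity of $g$, i.e.\ the maximal change of $g$ across a pair of neighbouring datasets. Because the prefactor $\sqrt{2\ln(1.25/\delta)}/\epsilon$ is identical in the two scenarios (we fix the same $\epsilon,\delta$), the whole comparison between the minimum required $\sigma$ for \method and $\sigma'$ for raw sharing is driven solely by the ratio of the two sensitivities. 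Hence it is enough to bound the sensitivity of sharing the \xef $\mathbf{x}_s$ against that of sharing the full $\mathbf{x}$.

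Next I would pin down each sensitivity explicitly. Under the add/remove-one neighbouring relation, the contribution of a single record to the globally shared dataset is precisely its released feature vector, so the sensitivity $\Delta_x$ of the raw-sharing mechanism is governed by $\max\|\mathbf{x}\|$, whereas the sensitivity $\Delta_s$ of \method is governed by $\max\|\mathbf{x}_s\|$, the norm of the shared \xef. The crucial ingredient is the $\ell_2$-norm clipping realized for Eq.~\eqref{eq:fedfed_final}, namely $\|\mathbf{z}(\mathbf{x};\theta)\|_2 = \|\mathbf{x}_s\| \le \rho\|\mathbf{x}\|$ with $0<\rho<1$. Propagating this constraint through the worst-case record yields $\Delta_s \le \rho\,\Delta_x$, which I would then substitute back into the Gaussian-mechanism bound to obtain $\sigma \le \rho\,\sigma' < \sigma'$, establishing the strict inequality. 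For the proportionality, writing the required noise variance as $\sigma^2 = 2\ln(1.25/\delta)\,\Delta_2^2/\epsilon^2$ and inserting the clipped sensitivity $\Delta_s = \|\mathbf{x}_s\|$ (up to the fixed neighbouring constant) shows that the variance-level noise scale grows quadratically in the shared norm, i.e.\ $\sigma \propto \|\mathbf{x}_s\|^2$, consistent with the variance parameterization $\sigma_s^2$ used throughout.

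The step I expect to be the main obstacle is making the sensitivity comparison both rigorous and fair. I must commit to a single neighbouring-dataset relation and verify that the clipping inequality $\|\mathbf{x}_s\|\le\rho\|\mathbf{x}\|$ transfers to a genuine bound on the \emph{worst-case} sensitivity rather than merely a typical record, and I must state clearly whether $\sigma$ refers to the standard deviation or the variance, since the stated quadratic proportionality only matches the linear-in-sensitivity scaling of the Gaussian mechanism once $\sigma$ is read as the variance-level quantity. A secondary subtlety is that \method keeps $\mathbf{x}_r$ entirely local (formally $\sigma_r=\infty$); I would confirm that leaving $\mathbf{x}_r$ unreleased, and composing it with the released $\mathbf{x}_s$, cannot leak additional information and therefore does not inflate the effective sensitivity beyond $\|\mathbf{x}_s\|$, so that the reduction to $\Delta_s$ is legitimate.
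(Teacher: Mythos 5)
Your proposal is correct and follows essentially the same route as the paper's own proof: both arguments calibrate Gaussian noise to the $\ell_2$-norm of what is actually released (DP-SGD-style, with the $\epsilon,\delta$-dependent prefactor cancelling in the comparison), invoke the clipping constraint $\|\mathbf{x}_s\|\le\rho\|\mathbf{x}\|$ with $0<\rho<1$ to conclude $\sigma\le\rho\,\sigma'<\sigma'$, and treat the locally kept $\mathbf{x}_r$ as contributing no privacy loss --- the paper formalizes this by letting $\sigma_r\to\infty$ in its two-part decomposition of raw sharing (its Lemmas on $\epsilon'$ and $\epsilon$), which is quantitatively identical to your direct worst-case sensitivity comparison plus post-processing. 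Your caveat about reading $\sigma$ at the variance level to make sense of $\sigma\propto\|\mathbf{x}_s\|^2$ also matches the paper, whose appendix actually derives the linear relation $\sigma_s\propto\|\mathbf{x}_s\|_2$.
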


Besides each client, we give the privacy analysis for the FL system paired with the proposed \method.

\begin{table}[!t]
    \renewcommand\arraystretch{1.2}
    \setlength{\abovecaptionskip}{-2pt}
        \caption{Top-1 accuracy with(without) \method under different heterogeneity degree, local epochs, and clients number on CIFAR-10.}\label{tab:reulst_table_cifar10}
        \vspace{1pt}
        \centering
        \scalebox{0.78}{
        \begin{threeparttable}
        \begin{tabular}{ccccc|cccc}
        \toprule[1.5pt]
        \multirow{3}*{}  &\multicolumn{8}{c}{\cellcolor{greyL}centralized training ACC = 95.48\% \hspace{5pt}w/(w/o) \textbf{\method}}  \\
        \cmidrule[1.5pt]{2-9}
             &  ACC$\uparrow$  & Gain$\uparrow$  & Round $\downarrow$ & Speedup$\uparrow$ & ACC$\uparrow$  &   Gain$\uparrow$  & Round $\downarrow$ & Speedup$\uparrow$ \\
        \cmidrule[1.5pt]{2-9}
        &\multicolumn{4}{c}{\cellcolor{greyL}$\alpha=0.1, E=1, K=10$ \hspace{2pt}(Target ACC =79\%)}& \multicolumn{4}{c}{\cellcolor{greyL}$\alpha=0.05, E=1, K=10$\hspace{2pt}(Target ACC =69\%)} \\
        \cmidrule[1.5pt]{1-9}
        \cellcolor{greyL}FedAvg  & \textbf{92.34}(79.35) & 12.99$\uparrow$ & \textbf{39}(284) & $\times$\textbf{7.3}($\times$1.0) & 
        \textbf{90.02}(69.36) & 20.66$\uparrow$ & \textbf{44}(405) & $\times$\textbf{9.2}($\times$1.0)\\
        
        \cellcolor{greyL}FedProx&\textbf{92.12}(83.06) & 9.06$\uparrow$ & \textbf{62}(192) & $\times$\textbf{4.6}($\times$1.5) & 
        \textbf{90.73}(78.98) & 11.75$\uparrow$ & \textbf{48}(203) & $\times$\textbf{8.4}($\times$2.0)\\
        
        \cellcolor{greyL}SCAFFOLD&\textbf{89.66}(83.67) & 5.99$\uparrow$ & \textbf{34}(288) & $\times$\textbf{8.4}($\times$1.0) & 
        \textbf{81.04}(37.87) &  43.17$\uparrow$ & \textbf{37}(None) & $\times$\textbf{10.9}(None)\\
        
        \cellcolor{greyL}FedNova&\textbf{92.23}(80.95) & 11.28$\uparrow$ & \textbf{33}(349) & $\times$\textbf{8.6}($\times$0.8) & 
        \textbf{91.21}(65.08) & 26.13$\uparrow$ & \textbf{32}(None) & $\times$\textbf{12.7}(None)\\
        
         \cmidrule[1.5pt]{1-9}
        &\multicolumn{4}{c}{\cellcolor{greyL}$\alpha=0.1, E=5, K=10$\hspace{2pt}(Target ACC =85\%)}& \multicolumn{4}{c}{\cellcolor{greyL}$\alpha=0.1, E=1, K=100$\hspace{2pt}(Target ACC =49\%)} \\
         \cmidrule[1.5pt]{1-9}
        \cellcolor{greyL}FedAvg&\textbf{93.24}(83.79) & 9.45 $\uparrow$ & \textbf{17}(261) & $\times$\textbf{15.4}($\times$1.0) &
        \textbf{84.06}(49.72) & 34.34$\uparrow$ & \textbf{163}(967) & $\times$\textbf{5.9}($\times$1.0)\\
        \cellcolor{greyL}FedProx&\textbf{91.39}(82.32) & 8.97 $\uparrow$ & \textbf{76}(None) & $\times$\textbf{3.4}(None) & 
        \textbf{87.01}(50.01) & 37.00$\uparrow$ & \textbf{127}(831) & $\times$\textbf{7.6}($\times$1.2)\\
        \cellcolor{greyL}SCAFFOLD&\textbf{92.34}(85.31) & 7.03 $\uparrow$ & \textbf{15}(66) & $\times$\textbf{17.0}($\times$4.0) & 
        \textbf{79.60}(52.76) & 26.84$\uparrow$ & \textbf{171}(627) & $\times$\textbf{5.7}($\times$1.5)\\
        \cellcolor{greyL}FedNova&\textbf{92.85}(86.21) & 6.64 $\uparrow$ & \textbf{31}(120) & $\times$\textbf{8.4}($\times$2.2) & 
        \textbf{86.64}(45.97) & 40.67$\uparrow$ & \textbf{199}(None) & $\times$\textbf{4.9}(None)\\
        \bottomrule
        \end{tabular}
        \begin{tablenotes}
    \AtBeginEnvironment{tablenotes}{}
    \item ``Round'': communication rounds arriving at the target accuracy. ``None'': not attaining the target accuracy. Results in ``$(\cdot)$'' indicates training without \method.
        \end{tablenotes}\end{threeparttable}}
    \end{table}
    
\normalsize

\begin{theorem}[Composition of \method]
\label{the::comp}
For $k$ clients with $(\epsilon,\delta)$-differential privacy, \method satisfies $(\hat{\epsilon}_{\hat{\delta}}, 1-(1-\hat{\delta})\Pi_{i}(1-\delta_i))$-differential privacy, where, $\hat{\epsilon}_{\hat{\delta}}=\min \{
        \frac{k\rho\sqrt{R\log(1/\delta)}}{\sigma_s},
        \frac{\rho\sqrt{R\log(1/\delta)}}{\sigma_s} [\frac{e^{{\rho\sqrt{R\log(1/\delta)}}/{\sigma_s}}-1}{e^{{\rho\sqrt{R\log(1/\delta)}}/{\sigma_s}} + 1}k +  \epsilon(2k\log(e+(\rho\sqrt{kR\hat{\delta}\log(1/\delta
        )} ) / ( \sigma_s^2 \hat{\delta}) ))^{1/2} ],
         \frac{\rho\sqrt{R\log(1/\delta)}}{\sigma_s}\cdot \allowbreak[\frac{e^{{\rho\sqrt{R\log(1/\delta)}}/{\sigma_s}}-1}{e^{{\rho\sqrt{R\log(1/\delta)}}/{\sigma_s}} + 1}k + \sqrt{2k\log (1/\hat{\delta})}]
        \}$.
\end{theorem}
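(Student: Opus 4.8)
The plan is to view the entire federated system as an adaptive composition of $k$ per-client release mechanisms and to invoke an optimal composition theorem for differential privacy, which is precisely what produces the three-way minimum in the statement. First I would isolate the per-client guarantee. Each client releases $\mathbf{x}_p=\mathbf{x}_s+\mathbf{n}$ with $\mathbf{n}\sim\mathcal{N}(\mathbf{0},\sigma_s^2\mathbf{I})$, and the $\ell_2$-clipping $\|\mathbf{x}_s\|\le\rho\|\mathbf{x}\|$ (with the data rescaled to $[0,1]$) bounds the $\ell_2$-sensitivity of the released features by a multiple of $\rho$. Composing the $R$ communication rounds of this Gaussian mechanism through advanced composition (equivalently the moments-accountant/RDP argument already used for Theorem~\ref{the::noisecom}) yields a per-client guarantee whose privacy parameter can be written as $\epsilon=\rho\sqrt{R\log(1/\delta)}/\sigma_s$, with slack $\delta_i$ for client $i$. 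This is exactly the quantity that appears, factored out, in every branch of $\hat\epsilon_{\hat\delta}$.

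With the per-client $(\epsilon,\delta_i)$-guarantees in hand, the second step is to apply the Kairouz--Oh--Viswanath optimal composition theorem in its heterogeneous form to the $k$ client mechanisms; since the server's view is the concatenation of all $k$ protected datasets, the releases compose adaptively. For any target slack $\hat\delta\in[0,1]$, that theorem certifies $\big(\hat\epsilon_{\hat\delta},\,1-(1-\hat\delta)\prod_i(1-\delta_i)\big)$-DP, where $\hat\epsilon_{\hat\delta}$ is the minimum of the basic-composition bound $k\epsilon$ and two advanced bounds, each built from the characteristic term $\tfrac{e^\epsilon-1}{e^\epsilon+1}k\epsilon$ plus a concentration term of the form $\epsilon\sqrt{2k\log(\cdot)}$. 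The $\delta$-accounting $1-(1-\hat\delta)\prod_i(1-\delta_i)$ is read off directly from this theorem and is what accommodates the possibly distinct $\delta_i$ across clients.

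The third step is bookkeeping: substitute $\epsilon=\rho\sqrt{R\log(1/\delta)}/\sigma_s$ into the three branches and factor $\rho\sqrt{R\log(1/\delta)}/\sigma_s$ out of each. The basic branch collapses to $k\rho\sqrt{R\log(1/\delta)}/\sigma_s$; the two advanced branches become $\frac{\rho\sqrt{R\log(1/\delta)}}{\sigma_s}\big[\frac{e^{\epsilon}-1}{e^{\epsilon}+1}k+\epsilon\sqrt{2k\log(e+\sqrt{k\epsilon^2}/\hat\delta)}\big]$ and $\frac{\rho\sqrt{R\log(1/\delta)}}{\sigma_s}\big[\frac{e^{\epsilon}-1}{e^{\epsilon}+1}k+\sqrt{2k\log(1/\hat\delta)}\big]$, which recover the displayed expressions once $\sqrt{k\epsilon^2}$ is rewritten in terms of $\rho,\sigma_s,R,k$ and constants are absorbed into the logarithm's argument.

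I expect the main obstacle to be the first step rather than the composition itself: one must pin down the exact $\ell_2$-sensitivity of the released performance-sensitive features under the implicit parametrization $\mathbf{z}(\mathbf{x};\theta)=\mathbf{x}-q(\mathbf{x};\theta)$, and verify that the clipping $\|\mathbf{x}_s\|\le\rho\|\mathbf{x}\|$ translates cleanly into the sensitivity constant $\rho$ that drives $\epsilon$. A secondary subtlety is discharging the adaptive-composition hypotheses of the optimal composition theorem in the federated setting---namely that conditioning on earlier clients' releases does not inflate later clients' per-round sensitivity---so that the heterogeneous bound applies to the $k$ mechanisms verbatim.
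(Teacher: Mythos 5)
Your proposal takes essentially the same route as the paper: derive the per-client guarantee $\epsilon = O(\rho\sqrt{R\log(1/\delta)}/\sigma_s)$ from the DP-SGD-style analysis with $\ell_2$-clipping $\|\mathbf{x}_s\|\le\rho\|\mathbf{x}\|$ and sampling ratio $q=1$ over $R$ rounds (the paper's Lemma~\ref{lem::relat}), then apply the Kairouz--Oh--Viswanath optimal composition theorem for $k$-fold adaptive composition (the paper's Theorem~\ref{the::odpcom2}) and substitute the per-client $\epsilon$ into its three branches, with the $\delta$-accounting $1-(1-\hat{\delta})\Pi_i(1-\delta_i)$ read off from that theorem. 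This matches the paper's proof, which concludes ``by Lemma~\ref{lem::relat}, we get Theorem~\ref{the::comp} directly'' after restating and proving the KOV composition result.
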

Atop composition theorem~\cite{icml/KairouzOV15}, Theoreom~\ref{the::comp} shows that \method is able to provide a strong privacy guarantee for FL systems.

\section{Experiments}
We organize this section in the following aspects: a) the main experimental settings that we primarily followed (Sec~\ref{main_exp_set}); b) the main results and observations of applying \method to existing popular FL algorithms (Sec~\ref{subsection:main_res}); c) the sensitivity study of hyper-parameters on effecting model performance (Sec~\ref{main_abl}); and d) empirical analysis of privacy with two kinds of attacks (Sec~\ref{main_emp}).

\subsection{Experimental Setup}\label{main_exp_set}
\begin{table}[!t]
\renewcommand\arraystretch{1.2}
    \caption{Top-1 accuracy with(without) \method under different heterogeneity degree, local epochs, and clients number on FMNIST.}\label{tab:reulst_table_fmnist}
    \centering
    \scalebox{0.78}{
    \begin{tabular}{ccccc|cccc}
    \toprule[1.5pt]
    \multirow{3}*{}  &\multicolumn{8}{c}{\cellcolor{greyL}centralized training ACC = 95.64\% \hspace{5pt}w/(w/o) \textbf{\method}}  \\
    \cmidrule[1.5pt]{2-9}
         &  ACC$\uparrow$  & Gain$\uparrow$  & Round $\downarrow$ & Speedup$\uparrow$ & ACC$\uparrow$  &   Gain$\uparrow$  & Round $\downarrow$ & Speedup$\uparrow$ \\
    \cmidrule[1.5pt]{2-9}
    &\multicolumn{4}{c}{\cellcolor{greyL}$\alpha=0.1, E=1, K=10$ \hspace{2pt}(Target ACC =86\%)}& \multicolumn{4}{c}{\cellcolor{greyL}$\alpha=0.05, E=1, K=10$\hspace{2pt}(Target ACC =78\%)} \\
    \cmidrule[1.5pt]{1-9}
    \cellcolor{greyL}FedAvg  & \textbf{92.34}(86.73) & 5.61$\uparrow$ & \textbf{14}(121) & $\times$\textbf{8.6}($\times$1.0) & 
    \textbf{90.69}(78.34) & 12.35$\uparrow$ & \textbf{16}(420) & $\times$\textbf{26.3}($\times$1.0)\\
    
    \cellcolor{greyL}FedProx&\textbf{92.09}(87.73) & 4.36 $\uparrow$ & \textbf{32}(129) & $\times$\textbf{2.1}($\times$0.9) 
    &\textbf{89.68}(82.03) & 7.65$\uparrow$ & \textbf{16}(44) & $\times$\textbf{26.3}(9.5)\\
    
    \cellcolor{greyL}SCAFFOLD&\textbf{91.62}(86.31) & 3.89$\uparrow$ & \textbf{29}(147) & $\times$\textbf{4.2}($\times$ 0.8) & 
    \textbf{80.48}(76.63) &  3.85$\uparrow$ & \textbf{139}(None) & $\times$\textbf{6.2}(None)\\
    
    \cellcolor{greyL}FedNova&\textbf{92.39}(87.03) & 5.36$\uparrow$ & \textbf{18}(88) & $\times$\textbf{6.7}($\times$1.4) & 
    \textbf{89.72}(79.98) & 9.74$\uparrow$ & \textbf{16}(531) & $\times$\textbf{26.3}($\times$ 0.8)\\
    
     \cmidrule[1.5pt]{1-9}
    &\multicolumn{4}{c}{\cellcolor{greyL}$\alpha=0.1, E=5, K=10$\hspace{2pt}(Target ACC =87\%)}& \multicolumn{4}{c}{\cellcolor{greyL}$\alpha=0.1, E=1, K=100$\hspace{2pt}(Target ACC =90\%)} \\
     \cmidrule[1.5pt]{1-9}
    \cellcolor{greyL}FedAvg&\textbf{92.26}(87.43) & 4.83$\uparrow$ & \textbf{19}(276) & $\times$\textbf{14.5}($\times$1.0) &
    \textbf{92.71}(90.21) & 2.5$\uparrow$ & \textbf{243}(687) &  $\times$\textbf{2.8}($\times$1.0)\\
     
    \cellcolor{greyL}FedProx&\textbf{91.79}(86.63) & 5.16$\uparrow$ & \textbf{34}(None) & $\times$\textbf{8.1}(None) & 
    \textbf{92.82}(90.17) & 2.65$\uparrow$ & \textbf{284}(501) & $\times$\textbf{2.4}($\times$1.4)\\
    
    \cellcolor{greyL}SCAFFOLD&\textbf{92.92}(87.21) & 5.71$\uparrow$ & \textbf{8}(112) & $\times$\textbf{34.5}($\times$2.5) & 
    \textbf{90.28}(84.87) & 5.41$\uparrow$ & \textbf{952}(None) & $\times$\textbf{0.7 }(None)\\
    
    \cellcolor{greyL}FedNova&\textbf{92.30}(87.67) & 4.63$\uparrow$ & \textbf{8}(187) & $\times$\textbf{34.5}($\times$1.5) & 
    \textbf{91.04}(85.32) & 5.72$\uparrow$ & \textbf{589}(None) & $\times$\textbf{1.2}(None)\\
    \bottomrule
    \end{tabular}}
\end{table}

\textbf{Federated Non-IID Datasets.} Following previous works~\citep{luo2021no, zhang2022federated}, we conduct experiments over CIFAR-10, CIFAR100~\citep{krizhevsky2009learning}, Fashion-MNIST(FMNIST)~\citep{xiao2017fashion}, and SVHN~\citep{netzer2011reading}. 
Following~\citep{li2022federated}, we employ latent Dirichlet sampling (LDA)~\citep{hsu2019measuring} to simulate Non-IID distribution.
In our experiments, following~\citep{li2022federated,vhl}, we set $\alpha=0.1$ and $\alpha=0.05$ by LDA. 

Besides, we evaluate \method with other two widely adopted 
partition strategies: $\#C=k$~\citep{mcmahan2017communication,li2022federated} and Subset method~\citep{zhao2018federated}, mainly including label skew and quantity skew (Appendix~\ref{appendix:VisDataHetrto} shows more detail of data distribution).

\textbf{Models, Metrics and Baselines.} We use ResNet-18~\citep{he2016deep} both in the feature distillation and classifier in FL.
We evaluate the model performance via two popular metrics in FL: a) communication rounds 
and b) best accuracy.
Typically, the target accuracy is set
to be the best accuracy of vanilla FedAvg.
As a plug-in approach, we apply \method to prevailing existing FL algorithms, such as FedAvg~\citep{mcmahan2017communication}, FedProx~\citep{li2020federated}, SCAFFOLD~\citep{karimireddy2020scaffold}, and FedNova~\citep{wang2020tackling} to compare the efficiency of our method. More experimental settings and details are listed in Appendix~\ref{appendix:hyper_param}.

\subsection{Main Results}\label{subsection:main_res}

\textbf{Generative Model Selection.} We consider two generative models in this paper to distill raw data, i.e., Resnet Generator, a kind of vanilla generator used in many works~\citep{zhu2017unpaired,poursaeed2018generative}, and $\beta$-VAE~\citep{higgins2016beta}, an encoder-decoder structure by variational inference. We verify the effectiveness of \method with different generative models. According to results shown in Figure~\ref{fig:main_result} (a), we observe that $\beta$-VAE gets better performance in \method. Thus, we mainly report the results of $\beta$-VAE in the rest.

\begin{table}[t]
\renewcommand\arraystretch{1.2}
\setlength{\abovecaptionskip}{0pt}
    \caption{Top-1 accuracy with(without) \method under different heterogeneity degree, local epochs, and clients number on CIFAR-100.}\label{tab:reulst_table_cifar100}
    \vspace{1pt}
    \centering
    \scalebox{0.75}{
    \begin{tabular}{ccccc|cccc}
    \toprule[1.5pt]
    \multirow{3}*{}  &\multicolumn{8}{c}{\cellcolor{greyL}centralized training ACC = 75.56\% \hspace{5pt}w/(w/o) \textbf{\method}}  \\
    \cmidrule[1.5pt]{2-9}
         &  ACC$\uparrow$  & Gain$\uparrow$  & Round $\downarrow$ & Speedup$\uparrow$ & ACC$\uparrow$  &   Gain$\uparrow$  & Round $\downarrow$ & Speedup$\uparrow$ \\
    \cmidrule[1.5pt]{2-9}
    &\multicolumn{4}{c}{\cellcolor{greyL}$\alpha=0.1, E=1, K=10$ \hspace{2pt}(Target ACC =67\%)}& \multicolumn{4}{c}{\cellcolor{greyL}$\alpha=0.05, E=1, K=10$\hspace{2pt}(Target ACC =61\%)} \\
    \cmidrule[1.5pt]{1-9}
    \cellcolor{greyL}FedAvg  & \textbf{69.64}(67.84) & 1.8$\uparrow$ & \textbf{283}(495) & $\times$\textbf{1.7 }($\times$1.0) & 
    \textbf{68.49}(62.01) & 6.48$\uparrow$ & \textbf{137}(503) & $\times$\textbf{3.7}($\times$1.0)\\
    
    \cellcolor{greyL}FedProx&\textbf{70.02}(65.34) & 4.68 $\uparrow$ & \textbf{233}(None) & $\times$\textbf{2.1}(None) 
    &\textbf{69.03}(61.29) & 7.74$\uparrow$ & \textbf{141}(485) & $\times$\textbf{3.6}(1.0)\\
    
    \cellcolor{greyL}SCAFFOLD&\textbf{70.14}(67.23) & 2.91$\uparrow$ & \textbf{198}(769) & $\times$\textbf{2.5}($\times$ 0.6) & 
    \textbf{69.32}(58.78) &  10.54$\uparrow$ & \textbf{81}(None) & $\times$\textbf{6.2}(None)\\
    
    \cellcolor{greyL}FedNova&\textbf{70.48}(67.98) & 2.5$\uparrow$ & \textbf{147}(432) & $\times$\textbf{3.4}($\times$1.1) & 
    \textbf{68.92}(60.53) & 8.39$\uparrow$ & \textbf{87}(None) & $\times$\textbf{5.8}(None)\\
    
     \cmidrule[1.5pt]{1-9}
    &\multicolumn{4}{c}{\cellcolor{greyL}$\alpha=0.1, E=5, K=10$\hspace{2pt}(Target ACC =69\%)}& \multicolumn{4}{c}{\cellcolor{greyL}$\alpha=0.1, E=1, K=100$\hspace{2pt}(Target ACC =48\%)} \\
     \cmidrule[1.5pt]{1-9}
    \cellcolor{greyL}FedAvg&\textbf{70.96}(69.34) & 1.62$\uparrow$ & \textbf{79}(276) & $\times$\textbf{3.5}($\times$1.0) &
    \textbf{60.58}(48.21) & 12.37$\uparrow$ & \textbf{448}(967) &  $\times$\textbf{2.2}($\times$1.0)\\
    
    \cellcolor{greyL}FedProx&\textbf{69.66}(62.32) & 7.34$\uparrow$ & \textbf{285}(None) & $\times$\textbf{1.0}(None) & 
    \textbf{67.69}(48.78) & 18.91$\uparrow$ & \textbf{200}(932) & $\times$\textbf{4.8}($\times$1.0)\\
    
    \cellcolor{greyL}SCAFFOLD&\textbf{70.76}(70.23) & 0.53$\uparrow$ & \textbf{108}(174) & $\times$\textbf{2.6}($\times$1.6) & 
    \textbf{66.67}(51.03) & 15.64$\uparrow$ & \textbf{181}(832) & $\times$\textbf{5.3}($\times$1.2)\\
    
    \cellcolor{greyL}FedNova&\textbf{69.98}(69.78) & 0.2$\uparrow$ & \textbf{89}(290) & $\times$\textbf{3.1}($\times$1.0) & 
    \textbf{67.62}(48.03) & 19.59$\uparrow$ & \textbf{198}(976) & $\times$\textbf{4.9}($\times$1.0)\\
    \bottomrule
    \end{tabular}}
\end{table}

\begin{table}[t]
\renewcommand\arraystretch{1.2}
\setlength{\abovecaptionskip}{0pt}
\vspace{-0.2cm}
    \caption{Top-1 accuracy with(without) \method under different heterogeneity degree, local epochs, and clients number on SVHN.}\label{tab:reulst_table_svhn}
    \vspace{1pt}
    \centering
    \scalebox{0.75}{
    \begin{tabular}{ccccc|cccc}
    \toprule[1.5pt]
    \multirow{3}*{}  &\multicolumn{8}{c}{\cellcolor{greyL}centralized training ACC = 96.56\% \hspace{5pt}w/(w/o) \textbf{\method}}  \\
    \cmidrule[1.5pt]{2-9}
         &  ACC$\uparrow$  & Gain$\uparrow$  & Round $\downarrow$ & Speedup$\uparrow$ & ACC$\uparrow$  &   Gain$\uparrow$  & Round $\downarrow$ & Speedup$\uparrow$ \\
    \cmidrule[1.5pt]{2-9}
    &\multicolumn{4}{c}{\cellcolor{greyL}$\alpha=0.1, E=1, K=10$ \hspace{2pt}(Target ACC =88\%)}& \multicolumn{4}{c}{\cellcolor{greyL}$\alpha=0.05, E=1, K=10$\hspace{2pt}(Target ACC =82\%)} \\
    \cmidrule[1.5pt]{1-9}
    \cellcolor{greyL}FedAvg  & \textbf{93.21}(88.34) & 4.87$\uparrow$ & \textbf{105}(264) & $\times$\textbf{2.5}($\times$1.0) & 
    \textbf{93.49}(82.76) & 10.73$\uparrow$ & \textbf{194}(365) & $\times$\textbf{1.9}($\times$1.0)\\
    
    \cellcolor{greyL}FedProx&\textbf{91.80}(86.23) & 5.574$\uparrow$ & \textbf{233}(None) & $\times$\textbf{1.1}(None) & 
    \textbf{93.21}(79.43) & 13.78$\uparrow$ & \textbf{37}(None) & $\times$\textbf{9.9}(None)\\
    
    \cellcolor{greyL}SCAFFOLD&\textbf{88.41}(80.12) & 8.29$\uparrow$ & \textbf{357}(None) & $\times$\textbf{0.}(None) 
    & \textbf{90.27}(75.87) &  14.4$\uparrow$ & \textbf{64}(None) & $\times$\textbf{5.7}(None)\\
    
    \cellcolor{greyL}FedNova&\textbf{92.98}(89.23) & 3.75$\uparrow$ & \textbf{113}(276) & $\times$\textbf{2.3}($\times$1.0) & 
    \textbf{93.05}(82.32) & 10.73$\uparrow$ & \textbf{37}(731) & $\times$\textbf{9.9}($\times$0.5)\\
    
     \cmidrule[1.5pt]{1-9}
    &\multicolumn{4}{c}{\cellcolor{greyL}$\alpha=0.1, E=5, K=10$\hspace{2pt}(Target ACC =87\%)}& \multicolumn{4}{c}{\cellcolor{greyL}$\alpha=0.1, E=1, K=100$\hspace{2pt}(Target ACC =89\%)} \\
     \cmidrule[1.5pt]{1-9}
    \cellcolor{greyL}FedAvg&\textbf{93.77}(87.24) &  6.53$\uparrow$ & \textbf{105}(128) & $\times$\textbf{1.2}($\times$1.0) &\textbf{91.04}(89.32) & 1.72$\uparrow$ & \textbf{763}(623) & $\times$0.8($\times$\textbf{1.0})\\
    
    \cellcolor{greyL}FedProx&\textbf{91.15}(77.21) & 13.94$\uparrow$ & \textbf{142}(None) & $\times$\textbf{0.9}(None) &
    \textbf{91.41}(88.76) & 2.65$\uparrow$ & \textbf{733}(645) & $\times$0.8($\times$\textbf{1.0})\\
    
    \cellcolor{greyL}SCAFFOLD&\textbf{93.78}(80.98) &  12.8$\uparrow$ & \textbf{20}(None) & $\times$\textbf{6.4}(None) 
    & \textbf{92.73}(88.32) & 4.41$\uparrow$ & \textbf{507}(687) & $\times$\textbf{1.2}($\times$0.9)\\
    
    \cellcolor{greyL}FedNova&\textbf{93.66}(89.03) & 4.63$\uparrow$ & \textbf{52}(177) & $\times$\textbf{2.5}($\times$0.7) & \textbf{84.05}(81.87) & 2.18$\uparrow$ & None(None) & None(None)\\
    \bottomrule
    \end{tabular}}
\end{table}

\begin{figure}[ht]
\centering
\setlength{\abovecaptionskip}{-5pt}
\includegraphics[width=1\linewidth]{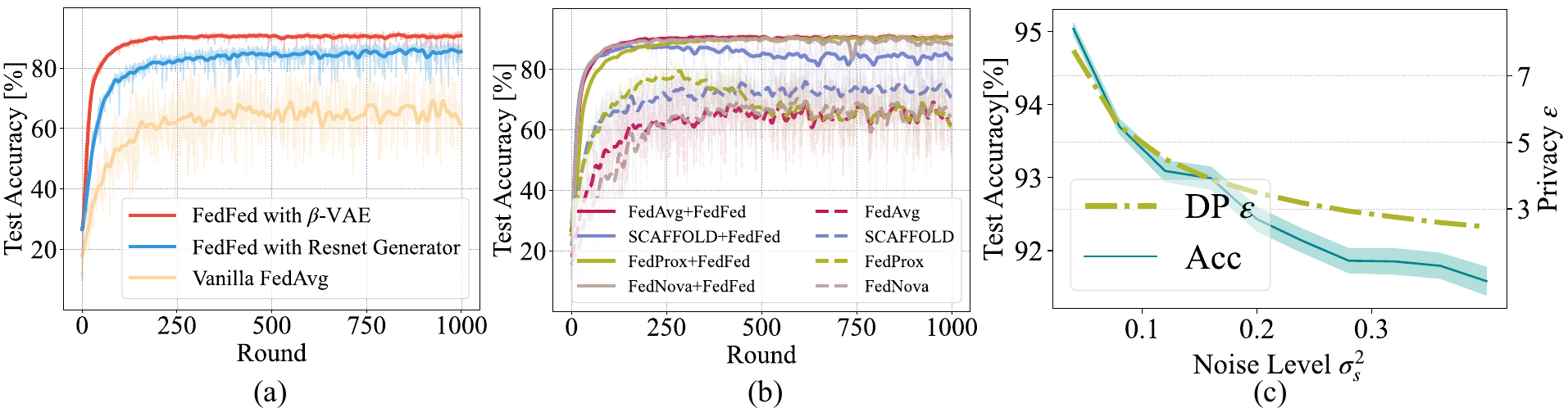}
\caption{More facets of \method. (a) Convergence rate of different generative models (i.e., $\beta$-VAE\citep{higgins2016beta} and ResNet generator\citep{poursaeed2018generative}) compared with vanilla FedAvg. (b) Test accuracy and convergence rate on different federated learning algorithms with or without \method under $\alpha=0.1, E=1, K=100$. (c) Test accuracy on FMNIST with different noise level $\sigma_{s}^2$ in Theorem~\ref{the::noisecom}, obtaining various privacy $\epsilon$ (lower $\epsilon$ is preferred). As the noise increased, the level of protection gradually increased.}\label{fig:main_result}
\end{figure}

\textbf{Result Analysis.} The experimental results on CIFAR-10, CIFAR-100, FMNIST, and SVHN are shown respectively in Tables~\ref{tab:reulst_table_cifar10}, \ref{tab:reulst_table_fmnist}, \ref{tab:reulst_table_cifar100}, and \ref{tab:reulst_table_svhn}. It can be seen that the proposed method can consistently and significantly improve model accuracy under various settings. Moreover, \method can promote the convergence rate of different algorithms\footnote{ More supporting figures of convergence rate are moved to appendix~\ref{appendix:conv_results}}. Specifically, \method brings significant performance gain in CIFAR-10 under the $K=100$ setting shown in Table~\ref{tab:reulst_table_cifar10} and also notably speeds up the convergence rate in FMNIST under the $\alpha=0.05, E=1, K=10$ setting. However, attributable to the original FL methods almost reaching a performance bottleneck, \method achieves limited performance gain in SVHN and FMNIST. \method also has limited improvement in CIFAR-100. A possible reason is that existing methods can achieve performance comparable to centralized training.
Moreover, we conducted an additional experiment involving sharing the full data with DP protection. The results, presented in Appendix~\ref{appendix:diff_share_data}, indicate that sharing the full data with DP protection leads to a degradation in the performance of the FL system. This degradation occurs because protecting the full data necessitates a relatively large noise to achieve the corresponding protection strength required to safeguard \xef.

\textbf{Surprising Observations.} We find that training among $100$ clients in CIFAR-10 and CIFAR100 reaches a significant improvement (e.g., at most $40.67\%$!). 
A possible reason is that the missed data knowledge can be well replenished by \method. Moreover, all methods paired with \method under various settings can achieve similar prediction accuracies, demonstrating that \method endows FL models robustness against data heterogeneity. Table~\ref{table:otherNonIID} shows that two kinds of heterogeneity partition cause more performance decline than LDA ($\alpha=0.1$). Yet, \method attains noteworthy improvement, indicating the robustness against Non-IID partition.

\subsection{Ablation Study}\label{main_abl}
\textbf{DP Noise.} To explore the relationship between privacy level $\epsilon$ and prediction accuracy, we conduct experiments with different noise levels $ \sigma_s^2 $. As shown in Figure~\ref{fig:main_result}(c), the prediction accuracy decreases with increasing noise level (more results in Appendix~\ref{appendix:impact_DP}). To verify whether the \method is robust against the selection of noise, we also consider Laplacian noise in applying DP to privacy protection. The results of Laplacian noise are reported in Table~\ref{table:otherNoise}, demonstrating the robustness of \method.

\begin{table}[t]
\setlength{\abovecaptionskip}{-2pt}
	\begin{minipage}{0.49\linewidth}
		\centering
		
		\caption{Experiment results of different Non-IID partition methods on CIFAR-10 with 10 clients.}
		\label{table:otherNonIID}
		\resizebox{1\textwidth}{!}{
        \begin{tabular}{ccccc}
        \toprule[1.5pt]
        &  \multicolumn{4}{c}{Test Accuracy  w/(w/o) \textbf{\method}} \\ 
        \cmidrule[1.5pt]{2-5}
        \multirow{-2}{*}{\makecell{Partition \\Method}} & \makecell{FedAvg} &   \makecell{FedProx} & SCAFFOLD & FedNova    \\
        \cmidrule[1.5pt]{1-5}
        $\alpha=0.1$ &\textbf{92.34}(79.35)&\textbf{92.12}(83.06)&\textbf{89.66}(83.67)&\textbf{92.23}(80.95)\\
        $\#C=2$& \textbf{89.23}/42.54 & \textbf{88.17}/58.45 & \textbf{84.43}/46.82 & \textbf{89.54}/45.42  \\
        Subset& \textbf{90.29}/39.53 & \textbf{89.11}/32.87  & \textbf{89.92}/35.26 &  \textbf{90.00}/38.52 \\
        \bottomrule[1.5pt]
        \end{tabular}
		}
		
	\end{minipage}
	\hfill
	\begin{minipage}{0.49\linewidth}  
		\centering
		
		\caption{Experiment results with different noise adding on CIFAR-10.
		}
		\label{table:otherNoise}
		\resizebox{1\textwidth}{!}{
        \begin{tabular}{lcccc}
        \toprule[1.5pt]
        &  \multicolumn{4}{c}{Test Accuracy on Different Noise with \textbf{\method}} \\ 
        \cmidrule[1.5pt]{2-5}
        \multirow{-2}{*}{\makecell{Noise Type}} & \makecell{FedAvg} &   \makecell{FedProx} & SCAFFOLD & FedNova    \\
        \cmidrule[1.5pt]{1-5}
        Gaussian Noise&92.34&92.12&89.66&92.23\\
        Laplacian Noise& 92.30&91.36&91.24&91.73 \\
        \bottomrule[1.5pt]
        \end{tabular}
		}
	\end{minipage}
 \vspace{-0.6cm}
\end{table}

\begin{figure}[t]
\centering
\setlength{\abovecaptionskip}{-2pt}
\includegraphics[width=0.85\linewidth]{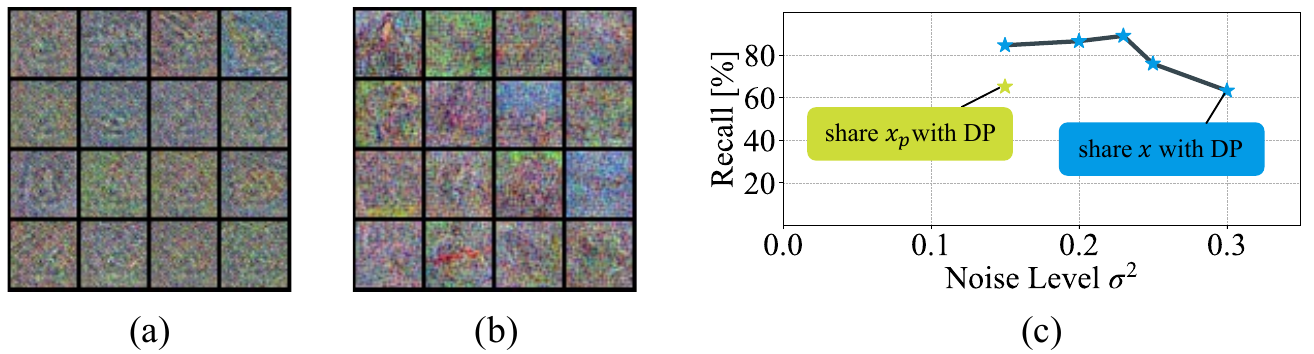}
\caption{Attack results on \method. (a) shows the protected data $\mathbf{x}_p$. (b) reports the model inversion attacked data. (c) shows results of membership inference attacks: the green star represents the recall for \method, while the blue stars show the searching process varying with $\sigma_s^2$ for sharing raw data.}\label{fig::attack_result}
\end{figure}

\textbf{Hyper-parameters.} We further evaluate the robustness of \method against hyper-parameters. During the feature distillation process, as the constraint parameter $\rho$ in Eq.~\ref{eq:fedfed_final} decreases, the DP strength to protect \xef decreases, and the performance of the global model decreases, owing to the less information contained in \xef.

\subsection{Privacy Verification}\label{main_emp}
Besides the theoretical analysis, we provide empirical analysis to support the privacy guarantee of \method. We wonder whether the globally shared data can be inferred by some attacking methods. Thus, we resort to model inversion attack~\citep{he2019model}, widely used in the literature to reconstruct data. The results\footnote{More details and results can be found in Appendix~\ref{appendix:model_inversion_attack}.} in Figure~\ref{fig::attack_result} (a) and (b) indicate that \method could protect globally shared data. We also conduct another model inversion attack~\citep{zhang2020secret} and the results can be found in Appendix~\ref{appendix:model_inversion_attack}. Additionally, we perform membership inference attack~\citep{shokri2017membership} to illustrate the difference between \method and sharing all data features with DP protection. The results illustrated in Figure~\ref{fig::attack_result} (c) show that noise level $\sigma^2=0.3$ over raw data $\mathbf{x}$ can achieve similar protection as noise $\sigma_s^2=0.15$ over globally shared data $\mathbf{x}_p$, which aligns with Theorem~\ref{the::noisecom}. More details can be found in Appendix~\ref{appendix:mia}

\section{Related Work}

 Federated Learning (FL) models typically perform poorly when meeting severe Non-IID data~\citep{li2021survey,karimireddy2020scaffold}. 
To tackle data heterogeneity, advanced works introduce various learning objectives to calibrate the updated direction of local training from being far away from the global model, e.g., FedProx\citep{li2020federated}, FedIR~\citep{hsu2020federated}, SCAFFOLD~\citep{karimireddy2020scaffold}, and MOON~\citep{li2021model}. 
Designing model aggregation schemes like FedAvgM~\citep{hsu2019measuring}, FedNova~\citep{wang2020tackling}, FedMA~\citep{wang2020federated}, and FedBN~\citep{li2021fedbn} shows the efficacy on heterogeneity mitigation. Another promising direction is the information-sharing strategy, which mainly focuses on synthesizing and sharing clients' information to mitigate heterogeneity~\citep{zhao2018federated,jeong2018communication,long2021g}. To avoid exposing privacy caused by the shared data, some methods utilize the statistics of data~\citep{shin2020xor}, representations of data~\citep{hao2021towards}, logits~\citep{chang2019cronus,luo2021no}, embedding~\citep{tan2022fedproto}. However, advanced attacks pose potential threats to methods with data-sharing strategies~\citep{zhao2020makes}.

\method is inspired by~\citep{vhl}, where pure noise is shared across clients to tackle data heterogeneity. In this work, we relax the privacy concern by sharing partial features in the data with DP protection. In addition, our work is technically similar to an adversarial learning approach~\citep{yang2021class}. Our method distinguishes by defining various types of features and delving into the exploration of data heterogeneity within FL. More discussion can be found in Appendix~\ref{appendix:MoreRelatedWork}.

\section{Conclusion}
In this work, we propose a novel framework, called FedFed, to tackle data heterogeneity in FL by employing the promising information-sharing approach. Our work extends the research line of constructing shareable information of clients by proposing to share partial features in data. This shares a new route of improving training performance and maintaining privacy in the meantime. 
Furthermore, \method has served as a source of inspiration for a new direction focused on improving performance in open-set scenarios~\citep{fang2022out}. Another avenue of exploration involves deploying \method in other real-time FL application scenarios, such as recommendation systems~\citep{liu2022federated} and medical image analysis~\citep{zhou2022sparse}, to uncover its potential benefits.

\textbf{Limitation.} Limited local hardware resources may limit the power of \method, since \method introduces some extra overheads like communication overheads and storage overheads. We leave it as future work to explore a hardware-friendly version. Additionally, \method raises potential privacy concerns that we leave as a future research exploration, such as integrating cryptography~\cite{sp/NgC23}. We anticipate that our work will inspire further investigations to comprehensively evaluate the privacy risks associated with information-sharing strategies aimed at mitigating data heterogeneity.

\section*{Acknowledgments and Disclosure of Funding}
We thank the reviewers for their valuable comments. Hao Peng was supported by the National Key R\&D Program of China through grant 2021YFB1714800, NSFC through grants 62002007, U21B2027, 61972186, and 62266027, S\&T Program of Hebei through grant 20310101D, Natural Science Foundation of Beijing Municipality through grant 4222030, and the Fundamental Research Funds for the Central Universities, Yunnan Provincial Major Science and Technology Special Plan Projects through grants 202103AA080015, 202202AD080003 and 202303AP140008, General Projects of  Basic Research in Yunnan Province through grant 202301AS070047. Yonggang Zhang and Bo Han were supported by the NSFC Young Scientists Fund No. 62006202, NSFC General Program No. 62376235, Guangdong Basic and Applied Basic Research Foundation No. 2022A1515011652, CCF-Baidu Open Fund, HKBU RC-FNRA-IG/22-23/SCI/04, and HKBU CSD Departmental Incentive Scheme. Tongliang Liu was partially supported by the following Australian Research Council projects: FT220100318, DP220102121, LP220100527, LP220200949, and IC190100031. Xinmei Tian was supported in part by NSFC No. 62222117.

\normalem
\bibliographystyle{unsrt}
\bibliography{reference}

\newpage
\appendix
\title{Supplementary Material for FedFed: Feature Distillation against Data Heterogeneity in Federated Learning}
\section{Overview of \method Framework}
\label{appendx:overview}

\begin{figure}[h]
   \centering
 
    {\includegraphics[width=0.65\textwidth]{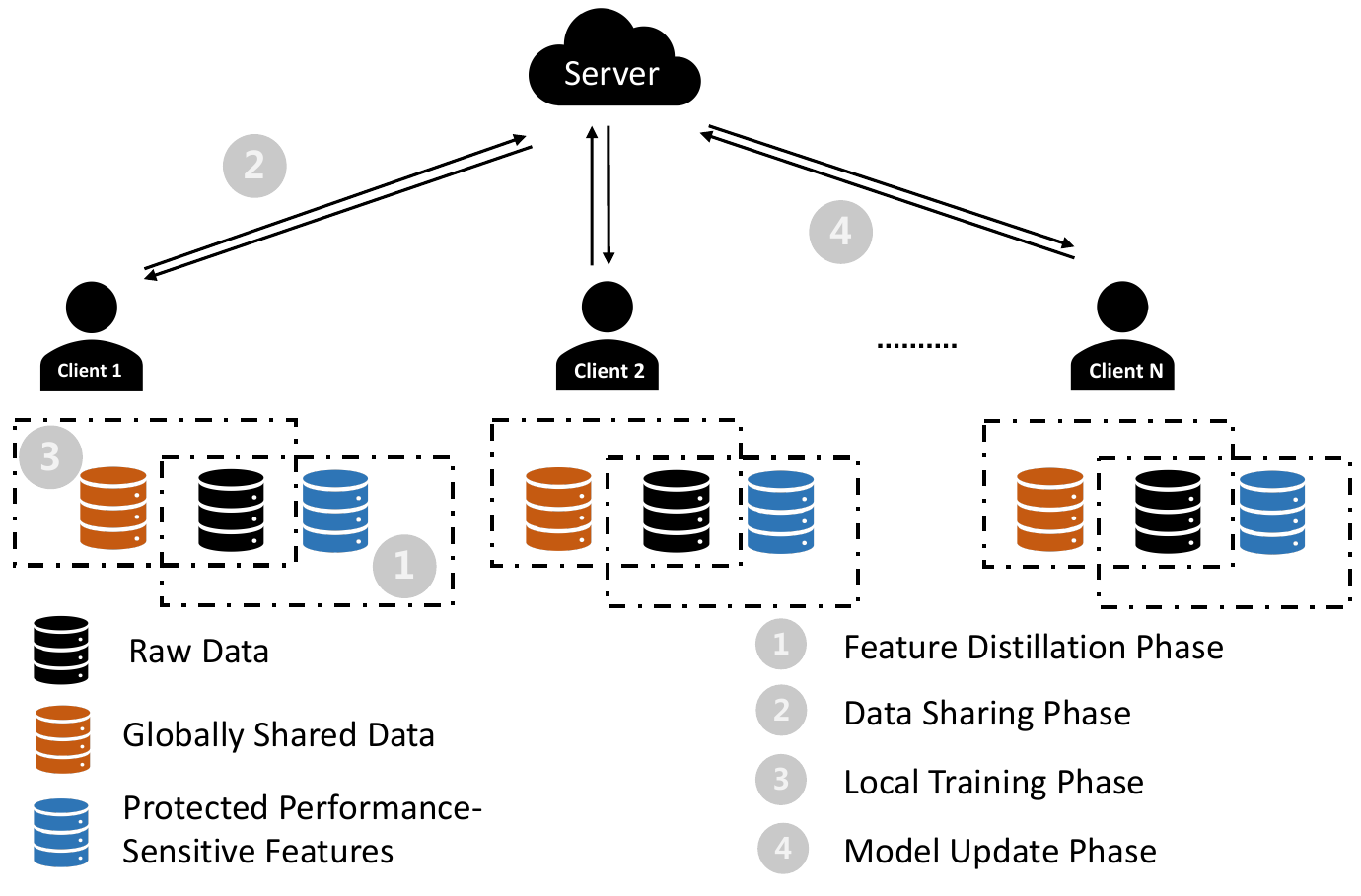}}

    \caption{Overview of \method }
    \label{fig:Framework}
\end{figure}

\begin{figure}[h]
   \centering
 {\includegraphics[width=0.75\textwidth]{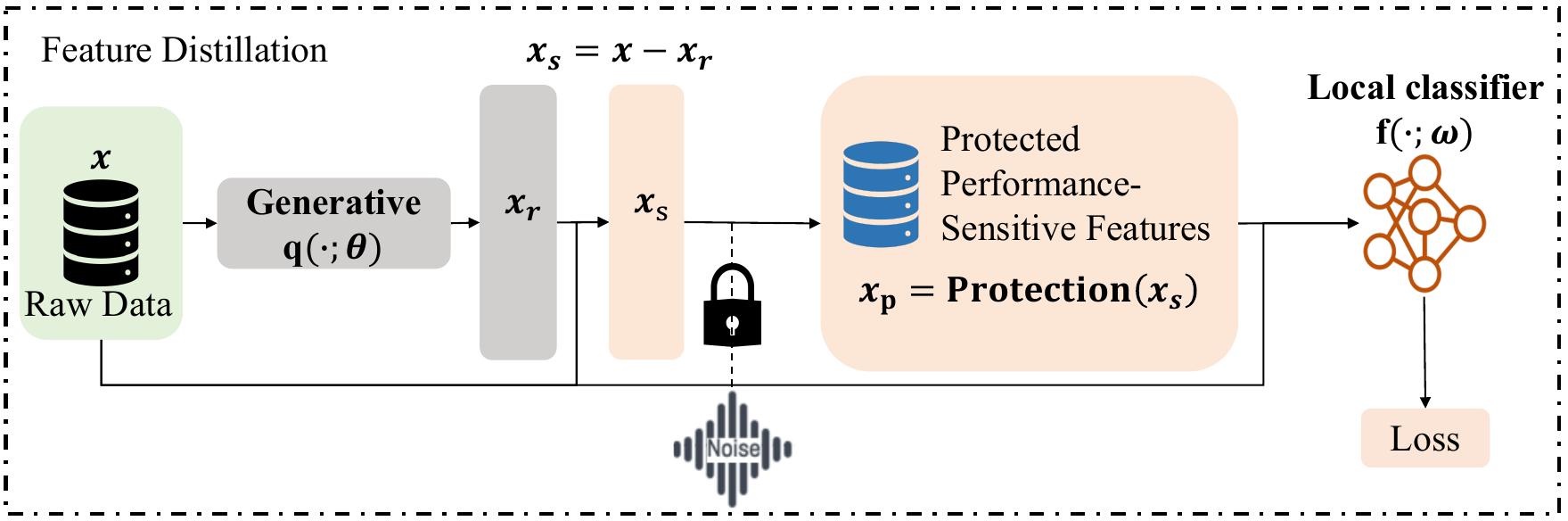}\label{fig:fedfed_pipeline}}
    \caption{The Pipeline of Feature Distillation}
    \label{fig:pipeline}
\end{figure}

The training process of the federated learning system with \method can be summarized into four phases. Firstly, every participant of FL distils the local private raw data $\mathbf{x}$ and decouples $\mathbf{x}$ into \xrf $\mathbf{x}_r$ and \xef $\mathbf{x}_s$ during the feature distillation phase. Then all clients send $\mathbf{x}_p$ to construct a global dataset, $\mathbf{x}_s$ with DP protection and keep $\mathbf{x}_r$ locally. In the local training phase, the selected clients sample a subset from the globally shared dataset and update the local model with local raw data $\mathbf{x}$
and sampled information $\mathbf{x}_p$ from server. Finally, clients upload the latest local model and aggregate a global model based on a certain aggregation algorithm, like the weighted average strategy in FedAvg.
Figure~\ref{fig:Framework} shows the overview of \method framework, while Figure~\ref{fig:pipeline} depicts its pipeline of feature distillation. The loss function of feature distillation in Figure~\ref{fig:pipeline} is Eq~(\ref{eq:fedfed_final}).
\section{Federated Learning Algorithms with \method}
\label{appendix: more_alg}

\subsection{Implementation Details}\label{appendix:hyper_param}

\begin{table}[ht]
\centering
\caption{The values of all parameters in this paper}
\label{tab:symbol}
\setlength{\arrayrulewidth}{1.5pt}
\begin{tabular}{ccc}
\toprule[1.5pt]
\multicolumn{1}{c}{Symbolic representation} & \multicolumn{1}{c}{Value}             & Description                                  \\ \midrule[1.5pt]
\multicolumn{3}{c}{\textbf{Federated Learning Relevant}}                                                                                      \\ \midrule[1.5pt]
\multicolumn{1}{c}{$\alpha$}                & \multicolumn{1}{c}{0.1/0.05}          & heterogeneity degree                         \\ \midrule[1.5pt]
\multicolumn{1}{c}{$T_d$}                   & \multicolumn{1}{c}{15}                & communication round of feature distillation  \\ \midrule[1.5pt]
\multicolumn{1}{c}{$T_r$}                   & \multicolumn{1}{c}{1,000}             & communication round of classifier training   \\ \midrule[1.5pt]
\multicolumn{1}{c}{$E_d$}                   & \multicolumn{1}{c}{1}                 & local epochs of feature distillation         \\ \midrule[1.5pt]
\multicolumn{1}{c}{$E/E_r$}                 & \multicolumn{1}{c}{1/5}               & local epochs of classifier training          \\ \midrule[1.5pt]
\multicolumn{1}{c}{$\sigma_s^2$}            & \multicolumn{1}{c}{0.15}              & DP noise level, added to $\mathbf{x}_s$      \\ \midrule[1.5pt]
\multicolumn{1}{c}{$|C_t|$/$|C_r|$}     & \multicolumn{1}{c}{5/10}              & \#selected clients every communication round \\ \midrule[1.5pt]
\multicolumn{1}{c}{K}                       & \multicolumn{1}{c}{10/100}            & \#clients of federated system                \\ \midrule[1.5pt]
\multicolumn{3}{c}{\textbf{Training Process Relevant}}                                                                                        \\ \midrule[1.5pt]
\multicolumn{1}{c}{$\eta_k$}                & \multicolumn{1}{c}{0.01/0.001/0.0001} & learning rate                                \\ \midrule[1.5pt]
\multicolumn{1}{c}{$B$}                     & \multicolumn{1}{c}{32/64}             & batch size                                   \\ \midrule[1.5pt]
\multicolumn{1}{c}{$M$}                     & \multicolumn{1}{c}{0.9}               & momentum                                     \\ \midrule[1.5pt]
\multicolumn{1}{c}{wd}                    & \multicolumn{1}{c}{0.0001}            & weight decay for regularization              \\ \bottomrule[1.5pt]
\end{tabular}
\begin{tablenotes}
\item ``\#'' represents the number of, and ``/'' denotes we try these values for these symbolic representations.
    \end{tablenotes}
\end{table}

We list all relevant parameters in this paper in Table~\ref{tab:symbol}. We fine-tune learning rate in the set of $\{0.0001,0.001,0.01,0.1\}$ and report the best results and corresponding learning rate. Whereas, we use 0.01 as the learning rate. In our work, we deploy \method on SCAFFOLD and set the learning rate $\eta_k$ to 0.0001 for the SVHN dataset. Furthermore, we use $\eta_k=0.001$ in SVHN while FedNove is deployed with our method. 

The batch size is set as 64 when $K=10$ and 32 for $K=100$. On the server side, we select 5 clients for aggregation per round when $K=10$, and 10 clients per round when $K=100$. This corresponds to a sampling rate of 50\% for 10 clients ($K=10$) and 10\% for 100 clients ($K=100$). The noise level in our experiments is $\mathcal{N}(0,0.15)$. Besides, all experiments are performed on Python 3.8, 36 core 3.00GHz Intel Core i9 CPU, and  NVIDIA RTX A6000 GPUs.

\subsection{Pseudo-code and Explaination}
\begin{algorithm}[ht]
	\caption{FedAvg/\textcolor{blue}{FedProx} with \method }
	\label{algo:FedFed_with_FedAvg}
	\textbf{Server Input: } initial global model $\phi^0$, communication round $ T_r  $. \\
	\textbf{Client $k$'s Input: } local epochs $ E_r$, local private datasets $\mathcal{D}^k$, learning rate $\eta_k$.
 
	\begin{algorithmic}
		\STATE {\bfseries Initialization:} server distributes the initial model $\phi^0$ to all clients, 
        \STATE \colorbox{olive!10}{Generate globally shared dataset $\mathcal{D}^s$.} $\leftarrow$ Detail in Algorithm~\ref{algo:FedFed}
        \STATE \colorbox{olive!10}{Distribute $\mathcal{D}^s$ to all clients and $\mathcal{D}_{t}^k = \mathcal{D}^k \cup \mathcal{D}^s$.}
        \STATE \textbf{Server Executes:}
            \FOR{each round   $ r=1,2, \cdots,  T_r $}
        		\STATE server samples a subset of clients $C_r \subseteq \left \{1, ..., K \right \}$
                \STATE server \textbf{communicates} $\phi^r$ to selected clients $k \in C_r $
        		\FOR{ each client $ k \in C_{r}$ \textbf{ in parallel }}
                       \STATE $\ \phi_{k}^{r+1} \leftarrow \text{Client\_Training}(k, \phi^{r}$)
        		\ENDFOR
             \STATE $\phi^{r+1} \leftarrow AGG(\phi_{k}^{r+1}) $
        	\ENDFOR
       \STATE
        \STATE \textbf{Client\_Training($ k, \phi^r$):}
        \STATE $\phi^r$ initialize local model $\phi_k^r$
            \FOR{each local epoch $e$ with $e=1,2,\cdots, E_r $}
             \STATE$ \phi_{k}^{r+1} \leftarrow$ SGD update with $\mathcal{D}_t^k$ using Eq.~(\ref{local_obj}), \textcolor{blue}{FedProx uses Eq.~(\ref{eq:local_obj_fedprox})} 
             \ENDFOR
        \STATE \textbf{Return} $\phi_{k}^{r+1} $ to server
\end{algorithmic}
\end{algorithm}

\begin{algorithm}[ht]
	\caption{SCAFFOLD with \method }
	\label{algo:FedFed_with_scaffold}
	\textbf{Server Input: } initial global model $\phi^0$, communication round $T_r$, server control variate $c=0$. \\
	\textbf{Client $k$'s Input: }local epochs $E_r$, local private datasets $\mathcal{D}^k$, learning rate $\eta_k$, variate $c_k=0$.
	\begin{algorithmic}
		\STATE {\bfseries Initialization:} server distributes the initial model $\phi^0$ to all clients, 
        \STATE \colorbox{olive!10}{Generate globally shared dataset $\mathcal{D}^s$.} $\leftarrow$ Detail in Algorithm~\ref{algo:FedFed} 
        \STATE \colorbox{olive!10}{Distribute $\mathcal{D}^s$ to all clients and $\mathcal{D}_{t}^k = \mathcal{D}^k \cup \mathcal{D}^s$.}
        \STATE \textbf{Server Executes:}
            \FOR{each round   $ r=1,2, \cdots, T_r$}
        		\STATE server samples a subset of clients $C_r \subseteq \left \{1, ..., K \right \}$	
                \STATE server \textbf{communicates} $\phi^r$ to selected clients $k \in C_r $
        		\FOR{ each client $ k \in C_{r}$ \textbf{ in parallel }}
                       \STATE $\Delta \phi_k^{r+1}, \Delta c \leftarrow \text{Client\_Training}(k, \phi^{r}, c$)
        		\ENDFOR
             \STATE $ \phi^{r+1} \leftarrow AGG( \Delta \phi_k^{r+1}), \ \ \ \ c \leftarrow c + \frac{|C_r|}{K}\Delta c$
        	\ENDFOR
       \STATE
        \STATE \textbf{Client\_Training($ k, \phi^r, c)$:}
        \STATE$\phi^r$ initialize local model $\phi_k^r$
            \FOR{each local epoch $e$ with $e=1,2,\cdots, E_r $}
             \STATE$ \phi_{k}^{r+1} \leftarrow \phi_{k}^{r} - \eta_k[\nabla_{\phi_k^r}L_k(\phi_k^r)-c_k + c] $, (SGD update with $\mathcal{D}_t^k$ use Eq.~(\ref{local_obj}))
             \ENDFOR
        \STATE $c_k^{\star} \leftarrow (i) \nabla_{\phi_k^r}L_k(\phi_k^r), or (ii) c_k-c + \frac{1}{E\eta_k}(\phi_k^r - \phi_{k}^{r+1})$ \\
        \STATE $\Delta c \leftarrow c_k^{\star} - c_k, \ \ \ \ \Delta \phi_k^{r+1} \leftarrow \phi_{k}^{r+1} - \phi^r$ \\
        \STATE $c_k \leftarrow c_k^{\star}$\\
        \STATE \textbf{Return} $(\Delta \phi_k^{r+1}, \Delta c)$ to server
\end{algorithmic}
\end{algorithm}

\begin{algorithm}[ht]
	\caption{FedNova with \method }
	\label{algo:FedFed_with_FedNova}
	\textbf{Server Input: } initial $\phi^0$, communication round $ T_r  $. \\
	\textbf{Client $k$'s Input: } Epochs $E_r$, local private datasets $\mathcal{D}^k$, learning rate $\eta_k$,  momentum factor $\varrho_k$.
 
	\begin{algorithmic}
		\STATE {\bfseries Initialization:} server distributes the initial model $\phi^0$ to all clients, 
        \STATE \colorbox{olive!10}{Generate globally shared dataset $\mathcal{D}^s$.} $\leftarrow$ Detail in Algorithm~\ref{algo:FedFed}
        \STATE \colorbox{olive!10}{Distribute $\mathcal{D}^s$ to all clients and $\mathcal{D}_t^k = \mathcal{D}^k\cup \mathcal{D}^s$}
        \STATE \textbf{Server Executes:}
            \FOR{each round   $ r=1,2, \cdots, T_r$}
        		\STATE server samples a subset of clients $C_r \subseteq \left \{1, ..., K \right \}$
                \STATE server \textbf{communicates} $\phi^r$ to selected clients $k \in C_r $ 
        		\FOR{ each client $ k \in C_{r}$ \textbf{ in parallel }}
                       \STATE $ \Delta \phi_{k}^{r+1}, a_k\leftarrow \text{Client\_Training}(k, \phi^{r}$)
        		\ENDFOR
             \STATE $ \phi^{r+1} \leftarrow \phi^r - \frac{\sum_{k \in C_r }a_k}{K} \sum_{k \in C_r}\frac{\Delta \phi_k^{r+1}}{K}$
        	\ENDFOR
       \STATE
        \STATE \textbf{Client\_Training($ k, \phi^r$):}
        \STATE $\phi^r$ initialize local model $\phi_k^r$
            \FOR{each local epoch $e$ with $e=1,2,\cdots, E_r $}
             \STATE$ \phi_{k}^{r+1} \leftarrow$ SGD updata with $\mathcal{D}_t^k$ use Eq.~(\ref{local_obj}))
             \ENDFOR
             \STATE $a_k \leftarrow [E-\varrho_k(1-\varrho^E)/(1-\varrho_k)]/(1-\varrho_k)$
             \STATE $\Delta \phi_k^{r+1} \leftarrow (\phi_{k}^{r+1} - \phi_k^{r}) / (\eta_{k} a_{k})$
        \STATE \textbf{Return} $\Delta \phi_k^{r+1}, a_k $ to server
\end{algorithmic}
\end{algorithm}

In this section, we give the pseudo-code of FL algorithm with \method. Algorithm~\ref{algo:FedFed_with_FedAvg} gives the procedure of FedAvg and FedProx with \method. The difference between FedAvg and FedProx is the local objective function, and we highlight it in the following equation:

\begin{equation} \label{eq:local_obj_fedprox}
\min_{\phi_k}\mathbb{E}_{(\mathbf{x},\mathbf{y}) \sim P(X_k,X_k)}
    \ell (f(\mathbf{x};\phi_k), \mathbf{y})+\mathbb{E}_{(\mathbf{x}_{p},\mathbf{y})\sim P(\mathbf{h}(X_k),Y_k)}\ell(f(\mathbf{x}_{p};\phi_k), \mathbf{y})+\textcolor{blue}{\frac{\mu}{2}||\phi - \phi^r||^2},
\end{equation}

SCAFFOLD attempts to estimate the client shift degree to correct client update direction. Particularly, it works through server control variate $c$ and client control variate $c_k$ of client k. SCAFFOLD with \method is summarized in Algorithm~\ref{algo:FedFed_with_scaffold}. And FedNova is proposed to resolve the objective inconsistency for defying data heterogeneity. The outline of our plugin deploys on FedNova in the Algorithm~\ref{algo:FedFed_with_FedNova}. In accordance with our process of how to apply \method, every participant of the FL system can deploy our framework easily without any modification of objective function or aggregation scheme.
\section{Full Analysis on Information Theory Perspective} 
\subsection{Features Partition}\label{appendix:feature_par}
From an information theory perspective, we present formal definitions, namely Definition~\ref{def::valid_par} and Definition~\ref{def::two_features}, to define \xrf and \xef.
Specifically, Definition~\ref{def::valid_par} captures the desirable attributes when we partition the features, which are abstracted to be a variable in general. 
A valid partition should maintain all information of the original variable $X$ losslessly. That is, neither extra information is introduced nor key information is lost. So we can get (i) and (ii) in Definition~\ref{def::valid_par}. 
After determining the measure space, we partition the $X$ to be $X=X_1+X_2$. The mutual information of $X_1$ and $X_2$ is none. Here, $X_1$ and $X_2$ can be symmetric. Or saying, there is no overlap between $X_1$ and $X_2$, as described by (iii) in Definition~\ref{def::valid_par}.

Built upon the Definition~\ref{def::valid_par}, we present the definition of \xef and \xrf, as in Definition~\ref{def::two_features}.
Intuitively, \xef contain all label information, while \xrf contains all information about the data except for the label information. That is, the features to be partitioned are either \xef or \xrf.

\subsection{\method on Information Bottleneck Perspective}\label{ib_derivation}
In this section, we analyse \method under an information bottleneck(IB) aspect. In conventional, information bottleneck~\citep{IB} can be formalized as:
\begin{equation}
\label{eq:appendix_IB_general}
    \min_{Z}I(X;Y|Z), \ \textit{s.t.} \  I(X;Z) \le I_c.
\end{equation}
where to restrict the complexity of encoding $Z$. Information entropy is denoted as $H(\cdot)$ and $I(\cdot;\cdot)$ is the mutual information of two variables. In \method, our competitive mechanism stems from IB getting the following definition:
\begin{equation}\label{eq:IB_process_eq}
\begin{split}
    & \min_{Z}{I(X;Y|Z)}, \ \textit{s.t.} \  I(X-Z;X|Z)\ge I_c, \\
    & \Leftrightarrow  \min_{Z}H(Y|Z), \ \textit{s.t.} \  H(Z) \le H_c, \\
    &  \Leftrightarrow  \min_{\mathbf{z}}{-\mathbb{E} \ \log p(\mathbf{y}|\mathbf{z})}, \ \textit{s.t.} \  ||\mathbf{z}||_{2}^{2} \le \rho.
\end{split}
\end{equation}

According to our framework, $\mathbf{z}$ can be defined as: $\mathbf{z}=\mathbf{z}-\mathbf{x}_r=\mathbf{x}-q(\mathbf{x};\theta)$, and then we can the get the optimization view of \method:
\begin{equation}\label{eq:fedfed_IB_pers}
    \min_{\theta}{-\mathbb{E}_{(\mathbf{x},\mathbf{y})\sim P(X_k,Y_k)} \log p(\mathbf{y}|\mathbf{x}-q(\mathbf{x};\theta))}, 
\ \textit{s.t.} \ ||\mathbf{x}-q(\mathbf{x};\theta)||_{2}^{2} \le \rho.
\end{equation}
Ultimately, in an information bottleneck perspective, $\min_{\theta}{-\mathbb{E}_{(\mathbf{x},\mathbf{y})\sim P(X_k,Y_k)} \log p(\mathbf{y}|\mathbf{x}-q(\mathbf{x};\theta))}$ intend to contain more information that can be used for generalization. $||\mathbf{x}-q(\mathbf{x};\theta)||_{2}^{2} \le \rho$ aims at distilling raw data for minimum features which means maximal compression of $\mathbf{x}_s$.
\begin{proof}\let\qed\relax
To get Eq~(\ref{eq:IB_process_eq}), we firstly have:
\begin{equation}
\begin{split}
    I(X;Y|Z)   &= I(X;Y) - I(X;Y;Z) \\
               &= H(X) + H(Y) -H(X,Y)-[I(Y;Z)-I(Y;Z|X) ] \\
               &= H(X) + H(Y) -H(X,Y)-[H(Y) + H(Z) - H(Y,Z)]\\
               &+[H(Y|X) - H(Y|X,Z)]\\
               &= H(X)+H(Y)-H(X,Y)-H(Y)-H(Z)\\
               &+H(Y,Z)+H(Y|X)-H(Y|X,Z)\\
               &= H(X)-H(X,Y)-H(Z)+H(Y,Z)+H(Y|X)-H(Y|X,Z) \\
               &= H(Y,Z)-H(Z)-H(Y|X,Z)\\
               &= H(Z)+H(Y|Z)-H(Z)-H(Y|X,Z) \\
               &= H(Y|Z)-H(Y|X,Z)\\
               &= H(Y|Z)-H(Y|X).\\
    I(X-Z;X|Z) &= I(X-Z;X) - I(X-Z;X;Z)\\
               &= H(X-Z)+ H(x) - H(X-Z,X) - [I(X;Z)-I(X;Z|X-Z)]\\
               &= H(X-Z)+H(X)-H(X-Z,X)-[H(X)+H(Z)-H(X,Z)] \\
               &+ [H(X|X-Z)-H(X|X-Z,Z)]\\
               &= H(X-Z)+H(X)-H(X-Z,X)- H(X)-H(Z)\\
               &+ H(X,Z)+H(X|X-Z)-H(X|X-Z,Z) \\
               &= H(X-Z)-H(X-Z,X)-H(Z)+H(X,Z)+H(X|X-Z) \\
               &= H(X-Z)-H(X)-H(Z)+H(X)+H(X|X-Z)\\
               &= H(X-Z)-H(Z) + H(X|X-Z) \\
               &= H(X,X-Z) -H(Z)\\
               &= H(X)-H(Z).\\
    I(X-Z;X|Z) &\ge I_c \Leftrightarrow H(X)-H(Z) \ge I_c \Rightarrow H(Z) \le H(x)-I_c = H_c \Rightarrow H(Z) \le H_c.
\end{split}
\end{equation}
Then, we get $\min_{Z}{I(X;Y|Z)}, \ \textit{s.t.} \ I(X-Z;X|Z)\ge I_c \Leftrightarrow  \min_{Z}H(Y|Z), \ \textit{s.t.} \  H(Z) \le H_c.$

Furthermore, we have:
\begin{equation}
    \begin{split}
        H(Y|Z) &= \int_{\mathbf{z}} p(z)H(Y|Z=\mathbf{z})d\mathbf{z} = \int_{\mathbf{z}}p(\mathbf{z})\int_{\mathbf{y}}p(\mathbf{y}|\mathbf{z})\log\frac{1}{p(\mathbf{y}|\mathbf{z})}d\mathbf{y}d\mathbf{z}\\
               &= -\int_{\mathbf{z}}\int_{\mathbf{y}}p(\mathbf{y},\mathbf{z})\log p(\mathbf{y}|\mathbf{z})d\mathbf{y}d\mathbf{z}\\
               &= -\mathbb{E} \log p(\mathbf{y}|\mathbf{z}).\\
        H(Z)   &\le \frac{1}{2} \sum_{i=1}^{d}\ln \mathit{Var}(\mathbf{z}_i)2\pi e = \frac{1}{2} \sum_{i=1}^{d}\ln \mathit{Var}(\mathbf{z}_i) + \frac{1}{2} \sum_{i=1}^{d} \ln2\pi e\\
               &\le \frac{1}{2} \sum_{i=1}^{d}\ln \mathit{Var}(\mathbf{z}_i) + \frac{1}{2} d \ln \ 2\pi e = \frac{1}{2} \sum_{i=1}^{d}\ln\mathbb{E}(\mathbf{z}_i^2) + \frac{1}{2} d \ln2\pi e \\
               &= \frac{1}{2} \ln \mathbb{E}(\sum_{i=1}^{d}\mathbf{z}_i^2) + \frac{1}{2} d \ln2 \pi e \le H_c\\
   \Rightarrow & \mathbb{E}||\mathbf{z}||_{2}^{2} \le e^{2H_c-d \ln2\pi e} = \rho,
    \end{split}
\end{equation}
which completes the proof.
\end{proof}

\section{Full Analysis on Differentially Private Features}
\label{appendix:full_ana_DP}

Our security analyses contain two steps, saying, differential privacy for each client and the overall analyses for \method.
Built upon previous works~\cite{abadi2016deep,dppaper}, we derive Lemma~\ref{lem::relat1} below. For \method, we attain Lemma~\ref{lem::relat} by limiting $\sigma_r \rightarrow \infty$.

\begin{lemma}
\label{lem::relat1}
For sharing raw features $x$,  $(\epsilon,\delta)$-DP holds if $\epsilon'=O({\sqrt{R\log(1/\delta)}}(\rho/{\sigma_s}+(1-\rho)/{\sigma_r}))$.
\end{lemma}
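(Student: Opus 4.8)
The plan is to treat the release of the raw feature $\mathbf{x}$ as the sequential composition of two Gaussian mechanisms acting on the additive decomposition $\mathbf{x} = \mathbf{x}_s + \mathbf{x}_r$: a mechanism $M_s$ that outputs $\mathbf{x}_s + n_s$ with $n_s \sim \mathcal{N}(\mathbf{0}, \sigma_s^2 \mathbf{I})$, and a mechanism $M_r$ that outputs $\mathbf{x}_r + n_r$ with $n_r \sim \mathcal{N}(\mathbf{0}, \sigma_r^2 \mathbf{I})$, each repeated over the $R$ data-access rounds of feature distillation. Since both mechanisms are deterministic functions of the same private record, sharing raw $\mathbf{x}$ is exactly the composition of $M_s$ and $M_r$, so it suffices to bound the privacy cost of each and add them. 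This two-mechanism view is also what makes the bound specialize cleanly to \method later, where only the sensitive component is released.

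First I would bound the two $\ell_2$ sensitivities. After rescaling the data to $[0,1]$ and normalizing $\|\mathbf{x}\| \le 1$, the clipping constraint $\|\mathbf{x}_s\| \le \rho\|\mathbf{x}\|$ from Eq.~\eqref{eq:fedfed_final} gives $\Delta_s \le \rho$ for $M_s$, while the complementary robust component inherits the remaining budget, $\Delta_r \le 1-\rho$, using the valid-partition structure of Definition~\ref{def::valid_par} (the orthogonality encoded by $I(X_s;X_r)=0$ lets the norms partition additively rather than via the triangle inequality). I would then invoke the moments-accountant guarantee of DP-SGD~\cite{abadi2016deep}: a Gaussian mechanism with sensitivity $\Delta$ and noise scale $\sigma$ composed over $R$ rounds is $(\epsilon,\delta)$-DP with $\epsilon = O(\Delta\sqrt{R\log(1/\delta)}/\sigma)$. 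Applying this to each mechanism separately yields $\epsilon_s = O(\rho\sqrt{R\log(1/\delta)}/\sigma_s)$ and $\epsilon_r = O((1-\rho)\sqrt{R\log(1/\delta)}/\sigma_r)$.

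Finally, by the sequential composition theorem the total privacy loss is the sum, giving $\epsilon' = \epsilon_s + \epsilon_r = O\!\big(\sqrt{R\log(1/\delta)}\,(\rho/\sigma_s + (1-\rho)/\sigma_r)\big)$, which is the claimed bound. As a consistency check, setting $\sigma_r \to \infty$ (keeping $\mathbf{x}_r$ local is equivalent to infinite noise on the robust part) annihilates the second term and recovers Lemma~\ref{lem::relat}, and hence underlies the noise comparison $\sigma < \sigma'$ of Theorem~\ref{the::noisecom}.

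The main obstacle I anticipate is justifying the sensitivity split $\Delta_r \le 1-\rho$ rigorously, since a naive triangle inequality only yields $\|\mathbf{x}_r\| \le (1+\rho)\|\mathbf{x}\|$; the factor $(1-\rho)$ must be extracted from the orthogonal geometry of the valid partition rather than from a generic norm bound. A secondary subtlety is pinning down precisely what $R$ counts (the number of rounds the raw record is accessed during distillation) and checking that the hidden moments-accountant constants are uniform across $M_s$ and $M_r$, so that the two $O(\cdot)$ terms can be combined under a single asymptotic constant.
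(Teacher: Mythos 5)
Your overall route---splitting the release of raw $\mathbf{x}$ into two Gaussian mechanisms on $\mathbf{x}_s$ and $\mathbf{x}_r$, bounding each privacy loss by the DP-SGD-style bound $O(\Delta\sqrt{R\log(1/\delta)}/\sigma)$ with sampling ratio $q=1$, and composing---is the same as the paper's, and your limiting check $\sigma_r\to\infty$ matches exactly how the paper then derives Lemma~\ref{lem::relat}. The gap is the step you yourself flagged: the factor $(1-\rho)$ on the robust part. Your proposed fix---extracting a sensitivity bound $\Delta_r \le 1-\rho$ from the ``orthogonality'' $I(X_s;X_r)=0$ of a valid partition---does not work, for two separate reasons. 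First, $I(X_s;X_r)=0$ in Definition~\ref{def::valid_par} is statistical independence of the random variables, not geometric orthogonality of the realized vectors $\mathbf{x}_s$ and $\mathbf{x}_r$; the definition gives you no angle information. Second, even if you had genuine orthogonality, Pythagoras with $\|\mathbf{x}_s\|=\rho\|\mathbf{x}\|$ would yield $\|\mathbf{x}_r\|=\sqrt{1-\rho^2}\,\|\mathbf{x}\|$, and $\sqrt{1-\rho^2}>1-\rho$ for $0<\rho<1$, so the claimed bound $\Delta_r\le 1-\rho$ is false even in the orthogonal case.

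The paper never claims $(1-\rho)$ as an upper bound on the robust part's norm. Its Lemma~\ref{lem::xr} proves, via the law of cosines, only the two-sided range $(1-\rho)\|\mathbf{x}\|\le\|\mathbf{x}_r\|\le(1+\rho)\|\mathbf{x}\|$, and then deliberately takes the \emph{lower} endpoint (``since we expect a loose case''), i.e., it underestimates $\epsilon'_r$. That is legitimate for the paper's purpose, because Lemma~\ref{lem::relat1} is only used to compare against \method in Theorem~\ref{the::noisecom}: underestimating the privacy loss of raw-data sharing makes the conclusion $\sigma<\sigma'$ conservative. If you instead want a literal upper bound on the privacy loss (so that ``$(\epsilon,\delta)$-DP holds'' is justified as stated), you should take $(1+\rho)$ from the triangle inequality; since $1+\rho\le 2$, and in the regime $\rho\ll 1$ used later one has $(1+\rho)/(1-\rho)=O(1)$, this changes nothing asymptotically and the stated $O(\cdot)$ form survives. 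Either repair---adopt the paper's lower-endpoint reading of Lemma~\ref{lem::xr}, or switch to $(1+\rho)$ and absorb the constant---closes your gap; the orthogonality argument cannot.
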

\begin{lemma}
\label{lem::relat}
For sharing \xef $\mathbf{x}_s$, $(\epsilon,\delta)$-DP holds if $\epsilon=O({\rho\sqrt{R\log(1/\delta)}}/{\sigma_s})$.
\end{lemma}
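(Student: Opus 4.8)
The plan is to obtain Lemma~\ref{lem::relat} as the limiting case $\sigma_r \to \infty$ of Lemma~\ref{lem::relat1}, which is available since it is stated just above. In Lemma~\ref{lem::relat1} the raw $\mathbf{x} = \mathbf{x}_s + \mathbf{x}_r$ is released after perturbing its two additive components with independent Gaussian noise of scales $\sigma_s$ and $\sigma_r$, and the cost is shown to be $\epsilon' = O(\sqrt{R\log(1/\delta)}(\rho/\sigma_s + (1-\rho)/\sigma_r))$, where $\rho$ and $1-\rho$ bound the $\ell_2$-sensitivities of $\mathbf{x}_s$ and $\mathbf{x}_r$ under the clipping $\|\mathbf{x}_s\|_2 \le \rho\|\mathbf{x}\|$ on data rescaled to $[0,1]$. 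In \method only the protected \xef $\mathbf{x}_p = \mathbf{x}_s + \mathbf{n}$ leave the client, while the \xrf $\mathbf{x}_r$ are never transmitted. First I would argue that withholding $\mathbf{x}_r$ is, from the adversary's viewpoint, identical to releasing a perturbation of $\mathbf{x}_r$ whose noise scale is driven to infinity, since in both cases the released view carries no information about $\mathbf{x}_r$. Taking $\sigma_r \to \infty$ in the bound of Lemma~\ref{lem::relat1} kills the term $(1-\rho)/\sigma_r$ and yields $\epsilon = O(\rho\sqrt{R\log(1/\delta)}/\sigma_s)$, which is the claim.

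For completeness I would also sketch the mechanism-level argument underlying the limit, so the bound is seen as a genuine upper bound rather than a formal substitution. The map $\mathbf{x} \mapsto \mathbf{x}_s + \mathbf{n}$ with $\mathbf{n}\sim\mathcal{N}(\mathbf{0},\sigma_s^2\mathbf{I})$ is a Gaussian mechanism applied to the query $\mathbf{x}_s$; the constraint $\|\mathbf{x}_s\|_2 \le \rho\|\mathbf{x}\|$ with $\|\mathbf{x}\|_2\le 1$ bounds its $\ell_2$-sensitivity across neighbouring datasets by a constant multiple of $\rho$. The per-round Rényi/moment cost of this mechanism is therefore $O(\rho^2/\sigma_s^2)$, and composing over the $R$ feature-distillation rounds through the moments accountant of \citep{abadi2016deep} — rather than naive composition — produces the $\sqrt{R\log(1/\delta)}$ dependence, exactly as in the DP-SGD analysis with $\rho$ playing the role of the normalised clipping bound. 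Dropping $\mathbf{x}_r$ from the output is a post-processing step, which can only decrease the privacy loss, so the single-component bound is valid.

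The main obstacle is making the $\sigma_r \to \infty$ step rigorous rather than heuristic: one must verify that the bound of Lemma~\ref{lem::relat1} is monotone and continuous in $1/\sigma_r$ down to $0$, and that the limiting mechanism — releasing only $\mathbf{x}_p$ — is dominated by (not merely the formal limit of) the two-component mechanism. The cleanest way to discharge this is the data-processing inequality: since $\mathbf{x}_p$ is a deterministic, data-independent projection of the pair $(\mathbf{x}_s+\mathbf{n}, \mathbf{x}_r+\mathbf{n}')$, any $(\epsilon',\delta)$ guarantee for the pair transfers to $\mathbf{x}_p$, and increasing $\sigma_r$ only relaxes the former. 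A secondary technical point, inherited from Lemma~\ref{lem::relat1}, is controlling the hidden constants in the sensitivity bound under the chosen neighbouring-dataset convention and the subsampling amplification across the $R$ rounds; these are absorbed into the $O(\cdot)$ and do not affect the stated scaling in $\rho$, $\sigma_s$, $R$, and $\delta$.
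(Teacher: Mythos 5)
Your proposal follows essentially the same route as the paper: the paper likewise treats the locally-retained $\mathbf{x}_r$ as if it were perturbed with noise of scale $\sigma_r \to \infty$, takes that limit in the bound of Lemma~\ref{lem::relat1} so the $(1-\rho)/\sigma_r$ term vanishes, and obtains the per-round cost from the DP-SGD analysis of \citep{abadi2016deep} with $q=1$, $T=R$, and the clipping bound $\|\mathbf{x}_s\|\le\rho\|\mathbf{x}\|$ playing the role of the sensitivity. Your added post-processing/data-processing argument to make the limit rigorous is a small strengthening the paper glosses over, but it does not change the approach.
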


\subsection{Analysis and Proof for Theorem~\ref{the::noisecom}}
Recall that DP-SGD~\cite{abadi2016deep} samples i.i.d. noise from unbias Gaussian distribution $\mathcal{N}(\mathbf{0}, S^2_{f}\sigma^2)$, where $S_{f}$ is the sensitivity.
At the algorithmic level, the noise sampled from $\mathcal{N}(\mathbf{0}, \sigma^2C^2\mathbf{I})$ is added to a batch of gradients handled by clipping value $C$. 

For aligning analyses with conventional DP guarantees, we adopt similar notations and expressions.
Specifically, we denote the noise distribution operated on $\mathbf{x}_r$ to be $\mathcal{N}(\mathbf{0}, \sigma^2_r\mathbf{I})$ and the noise distribution operated on $\mathbf{x}_s$ to be $\mathcal{N}(\mathbf{0}, \sigma^2_s\mathbf{I})$.
Notably, \method only adds DP noise to $\mathbf{x}_s$ in practice. 
Since $\mathbf{x}_r$ is kept by the corresponding client, an adversary could attain nothing (or be regarded as random data without any entropy).
This could be regarded as adding a sufficiently large noise on $\mathbf{x}_r$, thus we consider the $\sigma_r\rightarrow \infty$.
As for $\mathbf{x}_s$,  we employ the $\ell_2$-norm clipping when selecting the noise level $\sigma_s$.

Now, let's come back to the setting of sharing all raw data $\mathbf{x}$.
We still regard $\mathbf{x}$ to be two parts, i.e., $\mathbf{x}=\mathbf{x}_r+\mathbf{x}_s$ for the latter comparison to \method.
This trick is motivated by cryptographic standard proof with ideal/real paradigm~\cite{canetti2001universally}. 
For both $\mathbf{x}_r$ and $\mathbf{x}_s$, the noise addition follows DP-SGD's idea, except for adding noise to the sampled data for updating gradients, rather than the latter-computed real gradients.
We remark that this method works for FL but may be different for centralized training (with identical training set in the whole training process). 

Using the property of post-processing~\cite{dppaper}, the noise added to the data passes to (or projects to) the following gradient updates.
For the view of each client, we know that Theorem~\ref{the::dpsgd}\footnote{Previously ``$\geq$'', we take the special case ``$=$'' the same as DP-SGD.} holds for $\sigma$ operated on $\mathbf{x}$ from \cite{abadi2016deep}.
\begin{theorem}
\label{the::dpsgd}
There exist constants $c_1$ and $c_2$ so that given the sampling probability $q=L/N$ and the number of steps $T$, for any $\epsilon < c_1q^2T$.
The algorithm of DP-SGD is $(\epsilon,\delta)$-differentially private for any $\delta>0$ if we choose
    $\sigma = c_2\frac{q\sqrt{T\log(1/\delta)}}{\epsilon}$.
\end{theorem}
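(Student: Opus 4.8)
The plan is to prove the statement via the moments accountant technique, which tracks a scaled logarithm of the moment generating function of the privacy loss rather than composing $(\epsilon,\delta)$ guarantees directly. First I would fix two adjacent datasets $d,d'$ differing in a single record and, for a single noisy clipped-SGD step $\mathcal{M}$ with output $o$, define the privacy loss $c(o)=\log\frac{\Pr[\mathcal{M}(d)=o]}{\Pr[\mathcal{M}(d')=o]}$ together with the associated log moment $\alpha_{\mathcal{M}}(\lambda)=\max_{d,d'}\log\mathbb{E}_{o\sim\mathcal{M}(d)}[e^{\lambda c(o)}]$. The entire argument rests on three properties of $\alpha$: it composes additively across adaptive steps, it admits a clean per-step upper bound for the subsampled Gaussian mechanism, and it converts back into an $(\epsilon,\delta)$ statement through a Chernoff/Markov tail bound.

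Concretely, I would establish the tail-bound conversion first, since it is the most elementary: applying Markov's inequality to $e^{\lambda c}$ shows that the composed mechanism is $(\epsilon,\delta)$-differentially private for $\delta=\min_{\lambda}\exp(\alpha_{\mathcal{M}}(\lambda)-\lambda\epsilon)$. Next, the composition property follows from the tower rule, since the privacy loss of an adaptive sequence is the sum of the per-step losses, so the moment generating functions multiply and the log moments add, giving $\alpha_{\mathcal{M}}(\lambda)\le\sum_{t=1}^{T}\alpha_{\mathcal{M}_t}(\lambda)$. The substantive step is the single-step bound: for one $\ell_2$-clipped Gaussian step with sampling rate $q$ and noise multiplier $\sigma$, I would reduce the worst case to comparing the one-dimensional mixtures $\mu_0=\mathcal{N}(0,\sigma^2)$ and $\mu_1=(1-q)\mathcal{N}(0,\sigma^2)+q\,\mathcal{N}(1,\sigma^2)$, then expand $\mathbb{E}[(\mu_1/\mu_0)^{\lambda}]$ via a binomial/Taylor expansion to obtain, in the regime $\sigma\ge 1$ and $\lambda\lesssim\sigma^2$, a bound of the form $\alpha(\lambda)\le q^2\lambda(\lambda+1)/\sigma^2+O(q^3\lambda^3/\sigma^3)$, whose leading term is $O(q^2\lambda^2/\sigma^2)$.

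Combining these, composition over $T$ steps yields $\alpha_{\mathcal{M}}(\lambda)\le c\,Tq^2\lambda^2/\sigma^2$ for a constant $c$. Substituting into the tail bound, I would choose $\lambda$ to minimize $cTq^2\lambda^2/\sigma^2-\lambda\epsilon$; the optimum $\lambda^{\star}=\epsilon\sigma^2/(2cTq^2)$ produces the exponent $-\epsilon^2\sigma^2/(4cTq^2)$, so setting $\delta=\exp(-\epsilon^2\sigma^2/(4cTq^2))$ and solving for $\sigma$ gives exactly $\sigma=c_2\,q\sqrt{T\log(1/\delta)}/\epsilon$ for a suitable constant $c_2$, as claimed.

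The main obstacle is the single-step moment bound, where two points require care. First, the binomial expansion of the mixture moment must be controlled uniformly: the error terms beyond the quadratic are negligible only when $\lambda$ stays within a range such as $\lambda\le\sigma^2\log(1/(q\sigma))$, which is precisely the origin of the hypothesis $\epsilon<c_1q^2T$. That upper bound on $\epsilon$ guarantees the optimizing $\lambda^{\star}$ lands inside the admissible range, so the quadratic moment bound remains legitimate at the point where it is used. Second, I must verify that the $\ell_2$-norm clipping genuinely controls sensitivity, so that a single differing record shifts the Gaussian mean by at most the clipping scale, thereby justifying the reduction to the one-dimensional mixture above. Once these two issues are settled, the remaining optimization over $\lambda$ and the algebraic inversion for $\sigma$ are routine.
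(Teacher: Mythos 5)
Your proof is correct and takes essentially the same route as the paper's: the paper does not prove Theorem~\ref{the::dpsgd} independently but imports it from DP-SGD~\cite{abadi2016deep}, whose proof is exactly the moments-accountant argument you reconstruct (privacy-loss log moments, additive composition, the subsampled-Gaussian mixture bound $\alpha(\lambda)\le q^2\lambda(\lambda+1)/\bigl((1-q)\sigma^2\bigr)+O(q^3\lambda^3/\sigma^3)$, and the Markov tail conversion $\delta=\min_\lambda\exp(\alpha(\lambda)-\lambda\epsilon)$). Your identification of the hypothesis $\epsilon<c_1q^2T$ as the condition guaranteeing the optimizing $\lambda^{\star}$ stays within the admissible range $\lambda\lesssim\sigma^2\log(1/(q\sigma))$, and of $\ell_2$-clipping as what bounds the sensitivity in the reduction to the one-dimensional mixtures, likewise matches the original argument.
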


For strong composition theorem, the choice of $\sigma$ in Theorem~\ref{the::dpsgd} is $O({q\sqrt{T\log(1/\delta)}}/{\epsilon})$.
In DP-SGD, the $T$ denotes the number of training steps, which means the times of using/sampling private data for DP training.
Such meaning represents communication rounds in FL, in which each client receives globally shared data at each round of communication.
In the \method, we denote the number of communication rounds to be $R$.
Now, let's handle the $q$, where $q$ represents the ratio of lot size and the inputting dataset size.  
In DP-SGD, the per-lot data is directly sampled from the database of size $N$.
In other words, a lot of data is the subset of the input dataset.
In the \method, the training data fed to each client's model is the same as the database size, i.e., $q=N/N=1$.

DP-SGD asks for the generality and thus bounds $\|f(\cdot)\|=\mathbf{e}$. 
Yet, \method sets $\sigma \propto \|\mathbf{x}_s\|$, which does not ask for a general case.
We use the notation $\| \mathbf{x}_s\|_2$ to represent the domain of \xef. On the other hand, $\| \mathbf{x}\|_2 $ represents the domain of all features (namely, the ``domain of datasets'').
We additionally borrow the definition of $l_2$-norm and move $\|\mathbf{x}_s\|$ out in the proof.
By Equation~\ref{eq:fedfed_realization}, the client takes  $\rho{\|\mathbf{x}\|}$ for selecting $\sigma$. 
Since we aim to make an asymptotic analysis here, we ignore the scalars and constants here.
Thus, we attain $O({\rho\sqrt{R\log(1/\delta)}}/{\epsilon})$ for selecting $\sigma$.
By rearranging variables, we have $\epsilon = {\rho\sqrt{R\log(1/\delta)}}/{\sigma}$ for a loose analysis.

Before continuing the proof, we derive Lemma~\ref{lem::xr} to get the relation between $\|\mathbf{x}_r\|$ and $\|\mathbf{x}\|$ for latter usage.
\begin{lemma}
\label{lem::xr}
Let $\mathbf{x}_r=\mathbf{x}-\mathbf{x}_s$.  
Then, it holds that $(1-\rho) \|\mathbf{x}\|\leq\|\mathbf{x}_r\|\leq (1+\rho) \|\mathbf{x}\|$  if $\|\mathbf{x}_s\| = \rho \|\mathbf{x}\|$ satisfies.    
\end{lemma}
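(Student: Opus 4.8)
The plan is to derive the two bounds independently, each as a one-line consequence of the triangle inequality applied to the additive decomposition $\mathbf{x}_r = \mathbf{x} - \mathbf{x}_s$, and then substitute the normalization hypothesis $\|\mathbf{x}_s\| = \rho\|\mathbf{x}\|$. No structural insight is needed here; the lemma is a purely metric statement about norms in the measure space on which the partition $\mathbf{x} = \mathbf{x}_r + \mathbf{x}_s$ lives, so the only tools required are the forward and reverse triangle inequalities.

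For the upper bound, I would start from $\|\mathbf{x}_r\| = \|\mathbf{x} - \mathbf{x}_s\|$ and apply the ordinary triangle inequality to get $\|\mathbf{x} - \mathbf{x}_s\| \le \|\mathbf{x}\| + \|\mathbf{x}_s\|$. Substituting the hypothesis $\|\mathbf{x}_s\| = \rho\|\mathbf{x}\|$ then yields $\|\mathbf{x}_r\| \le \|\mathbf{x}\| + \rho\|\mathbf{x}\| = (1+\rho)\|\mathbf{x}\|$, which is the right-hand inequality. For the lower bound, I would instead invoke the reverse triangle inequality, $\|\mathbf{x} - \mathbf{x}_s\| \ge \bigl|\,\|\mathbf{x}\| - \|\mathbf{x}_s\|\,\bigr| \ge \|\mathbf{x}\| - \|\mathbf{x}_s\|$, and again substitute $\|\mathbf{x}_s\| = \rho\|\mathbf{x}\|$ to obtain $\|\mathbf{x}_r\| \ge (1-\rho)\|\mathbf{x}\|$. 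Combining the two chains gives the claimed sandwich $(1-\rho)\|\mathbf{x}\| \le \|\mathbf{x}_r\| \le (1+\rho)\|\mathbf{x}\|$.

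There is essentially no hard step; the only point worth flagging is that the lower bound is meaningful (i.e.\ nonnegative and hence nontrivial) precisely because the DP-clipping construction in Section~\ref{subsection:dp} enforces $0 < \rho < 1$, so that $1-\rho > 0$. I would note this explicitly so that the inequality is used downstream only in the regime where it carries information, namely in relating $\|\mathbf{x}_r\|$ back to $\|\mathbf{x}\|$ when comparing the noise scales $\sigma_r$ and $\sigma_s$ in the surrounding privacy analysis.
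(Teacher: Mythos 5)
Your proof is correct, and it reaches the same sandwich bound by a genuinely more direct route than the paper. The paper does not invoke the triangle inequality as a known fact; instead it squares the hypothesis, expands $\|\mathbf{x}-\mathbf{x}_r\|^2 = \|\mathbf{x}\|^2 + \|\mathbf{x}_r\|^2 - 2\|\mathbf{x}\|\,\|\mathbf{x}_r\|\cos\langle\mathbf{x},\mathbf{x}_r\rangle$ via the inner product, sets $\|\mathbf{x}_r\| = \alpha\|\mathbf{x}\|$, bounds $-1 \le \cos\langle\mathbf{x},\mathbf{x}_r\rangle \le 1$, and solves the resulting quadratic inequality $(\alpha-1)^2 \le \rho^2$ to obtain $1-\rho \le \alpha \le 1+\rho$, i.e.\ both bounds at once. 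The two arguments are mathematically equivalent --- the paper's cosine expansion is precisely the standard inner-product derivation of the (forward and reverse) triangle inequalities, inlined --- but your version buys brevity and slightly more generality: it holds in any normed space, whereas the paper's computation presupposes the Euclidean inner-product structure. Conversely, the paper's quadratic delivers the two-sided bound in a single step rather than as two separate chains. Your closing remark that $0 < \rho < 1$ (from the clipping construction) is what makes the lower bound nontrivial is also consistent with the paper, which invokes exactly that condition when discarding the spurious branch of the quadratic inequality; the only nuance is that under the reverse triangle inequality this condition is needed to drop the absolute value $\bigl|\,\|\mathbf{x}\| - \|\mathbf{x}_s\|\,\bigr|$, a point you handle correctly.
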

\textit{Proof of Lemma~\ref{lem::xr}}:    
$\|\mathbf{x}-\mathbf{x}_r\|=\rho\|\mathbf{x}\|\Rightarrow\|\mathbf{x}-\mathbf{x}_r\|^2=\rho^2\|\mathbf{x}\|^2\Rightarrow \|\mathbf{x}\|^2 + \|\mathbf{x}_r\|^2-2\mathbf{x}^\top \mathbf{x}_r = \rho^2 \|\mathbf{x}\|^2\Rightarrow \|\mathbf{x}\|^2 + \|\mathbf{x}_r\|^2-2\|\mathbf{x}\| \|\mathbf{x}_r\|\cos\langle \mathbf{x},\mathbf{x}_r\rangle = \rho^2 \|\mathbf{x}\|^2$.
Since $\|\mathbf{x}\|>0$ and $\|\mathbf{x}_r\|>0$, we get, $$\|\mathbf{x}\|/\|\mathbf{x}_r\| + \|\mathbf{x}_r\|/\|\mathbf{x}\|-2\cos\langle \mathbf{x},\mathbf{x}_r\rangle = \rho^2 \|\mathbf{x}\|/\|\mathbf{x}_r\|.$$
Let $\|\mathbf{x}_r\|=\alpha \|\mathbf{x}\|$, and our objective is to get the $\alpha$.
By replacing $\|\mathbf{x}\|/\|\mathbf{x}_r\|$ and $\|\mathbf{x}_r\|/\|x\|$ with $1/\alpha$ and $\alpha$ respectively, we get, $$1/\alpha + \alpha-2\cos\langle \mathbf{x},\mathbf{x}_r\rangle= {\rho^2}/{\alpha}.$$
Since $-1 \leq \cos\langle \mathbf{x},\mathbf{x}_r\rangle \leq 1$, we get, $$-2\leq 1/\alpha + \alpha - {\rho^2}/{\alpha} \leq 2 \Rightarrow -2\alpha\leq 1/ + \alpha^2 - {\rho^2} \leq 2\alpha.$$
Since $0<\rho<1$, we take $1/ + \alpha^2 - {\rho^2} \leq 2\alpha$ and get $(\alpha-1)^2 \leq \rho ^2$.
Then, we get the meaningful answers $ -\rho \leq\alpha - 1 \leq\rho \Rightarrow 1-\rho\leq\alpha\leq1+\rho$. 
Therefore, we attain, $$(1-\rho)\|\mathbf{x}\|\leq\alpha\|\mathbf{x}\|\leq(1+\rho)\|\mathbf{x}\| \Rightarrow(1-\rho)\|\mathbf{x}\|\leq\|\mathbf{x}_r\|\leq(1+\rho)\|\mathbf{x}\|.$$
Following above, we can establish the relationship between $\|\mathbf{x}\|_2$ and $\|\mathbf{x}_s\|_2$ as follows: $$\|\mathbf{x}_s\|_2 \leq \rho \|\mathbf{x}\|_2.$$
And the relation between noise level $\sigma_s$ and $\mathbf{x}_s$ is:
$$ \sigma_s \propto \|\mathbf{x}_s\|_2 \le \rho \|\mathbf{x}\|_2=\rho M,$$
where we re-scale data into $[0,1]$ and assume the $\ell_2$-norm of data is less than $M>0$. To ensure the $\|\mathbf{x}_e\|_2 \leq \rho M$, we clip the norm for each sample, which is widely used in DP~\citep{abadi2016deep}.
\qed

Now, let's go back to derive Lemma~\ref{lem::relat1} and Lemma~\ref{lem::relat}, and continue the proof.
For sharing raw data $\mathbf{x}$, we employ the same method as aforementioned.
Since $\|\mathbf{x}_s\| = \rho \|\mathbf{x}\|$, we get $\epsilon'_e = \rho\sqrt{R\log(1/\delta)}/{\sigma_s}$.
By Lemma~\ref{lem::xr}, we know that $ (1-\rho) \leq\frac{\|\mathbf{x}_r\|}{\|\mathbf{x}\|}\leq (1+\rho) $ holds.
Thus, we attain, $$(1-\rho)\sqrt{R\log(1/\delta)}/{\sigma_r}\leq \epsilon'_r \leq (1+\rho)\sqrt{R\log(1/\delta)}/{\sigma_r}.$$
Here, we take $\epsilon'_r \geq (1-\rho)\sqrt{R\log(1/\delta)}/{\sigma_r}$ since we expect a loose case. 
By combining $\epsilon'_e$ and $\epsilon'_r$, we attain, $$\epsilon'=O({\sqrt{R\log(1/\delta)}}(\rho/{\sigma_s}+(1-\rho)/{\sigma_r}))$$ for sharing raw $x$, as Lemma~\ref{lem::relat1}.
Now, let's take \method into consideration.
The major difference is that $\mathbf{x}_r$ is kept locally in the whole training process.
The root cause lies in that the $\mathbf{x}_r$ owned by the client is privacy-\textit{insensitive} to this corresponding client.
We regard $\sigma_r$ operated on $\mathbf{x}_r$ to be a sufficiently large value.
Thus, given $\rho \ll 1$, we get, $$\lim_{\sigma_r\rightarrow \infty}\epsilon_r = {(1-\rho)\sqrt{R\log(1/\delta)}}/{\sigma_r} \rightarrow 0.$$
That is, for sharing $\mathbf{x}_s$ in \method, we have $\epsilon=\epsilon_s=O(\rho / \sigma_s \sqrt{\log(\frac{1}{\delta})})$, as Lemma~\ref{lem::relat}.

Let's conversely think  Lemma~\ref{lem::relat1} and Lemma~\ref{lem::relat}. 
Now, the proof in the following is very straightforward.
If we take identical $\sigma_s$ operated on $\mathbf{x}_s$ for \method and sharing raw $x$. 
We know that $\epsilon' > \epsilon$.
Conversely, if we take identical $\epsilon,\epsilon'$, we should increase $1/\sigma_s$ and thus reduce $\sigma_s$, given constant  $\rho\sqrt{R\log(1/\delta)}$.
Thus, for protecting training data in FL, we can attain Theorem~\ref{the::noisecom} to summarize the superiority of \method when asking for an identical privacy guarantee.
Theorem~\ref{the::noisecom} explains the reason for the superior model performance of \method, which intrinsically  boils down to a relatively small $\sigma$ compared with sharing raw data $\mathbf{x}$.

\subsection{Analysis and Proof for Theorem~\ref{the::comp}}

Previously, we showcase the view of each client for analyzing privacy.
Albeit we do not aim at a new DP theorem, we expect a tighter privacy analysis for \method by using the advanced result~\cite{icml/KairouzOV15}.
Vadhan and Wang~\cite{tcc/VadhanW21} prove that when the interactive mechanisms being composed are pure differentially private, their concurrent composition achieves privacy parameters (with respect to pure or approximate differential privacy) that match the (optimal) composition theorem for noninteractive differential privacy.
To be specific, we follow the proof logic of Theorem~\ref{the::odpcom2}~\cite{icml/KairouzOV15}  for analyzing globally shared data from all clients.

Given $(\epsilon_k, \delta)$-DP at each client side, we utilize the composition theorem to analyze overall privacy in \method. 
The concept of sensitivity  below is originally used for sharing a dataset for achieving $(\epsilon,\delta)$-differential privacy.
\begin{definition}[Sensitivity~\cite{icml/KairouzOV15}] \label{defi11}
	 The sensitivity of a query function $\mathcal{F}:\mathbb{D}\rightarrow\mathbb{R}$ for any two neighboring datasets $D,D'$ is $\Delta = \max_{D,D'}\|\mathcal{F}(D)-\mathcal{F}(D')\|$,
where $\|\cdot\|$ denotes $L_1$ or $L_2$ norm.
\end{definition}
Privacy loss is a random variable that accumulates the random noise added to the algorithm/model, which is utilized in Theorem~\ref{the::odpcom2}.
\begin{definition}[Privacy Loss~\cite{icml/KairouzOV15}].
	\label{def::priloss}
	Let $\mathcal{M}: \mathbb{D} \rightarrow \mathbb{R}$ be a randomized mechanism with input domain $D$ and range $R$.
	Let $D,D'$ be a pair of adjacent dataset and $\mathsf{aux}$ be an auxiliary input. For an outcome $o\in\mathbb{R}$, the privacy loss at $o$ is defined by 
$\mathcal{L}_{\mathsf{Pri}}^{(o)}\triangleq\log({{\rm Pr}[\mathcal{M}(\mathsf{aux},D)=o]}/{{\rm Pr}[\mathcal{M}(\mathsf{aux},D')=o]})$.
\end{definition}

\begin{theorem}[Composition Theorem~\cite{icml/KairouzOV15}]
	\label{the::odpcom2}
           For any $\epsilon>0,\delta\in[0,1]$, and $\hat{\delta}\in[0,1]$, the class of $(\epsilon,\delta)$-differentially private mechanisms satisfies $(\hat{\epsilon}_{\hat{\delta}}, 1-(1-\hat{\delta})\Pi_{i}(1-\delta_i))$-differential privacy under $k$-fold adaptive composition for $\hat{\epsilon}_{\hat{\delta}}=\min\{k\epsilon, (e^\epsilon-1)\epsilon k/(e^\epsilon+1)+\epsilon\sqrt{2k\log (e+\sqrt{k\epsilon^2/\hat{\delta}}}),(e^\epsilon-1)\epsilon k/(e^\epsilon+1)+\epsilon\sqrt{2k\log (1/\hat{\delta})}\}$.
\end{theorem}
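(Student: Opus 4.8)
The plan is to prove the optimal composition guarantee by reducing an arbitrary family of $(\epsilon,\delta)$-differentially private mechanisms to a single canonical worst-case mechanism, composing $k$ copies of that mechanism exactly, and transferring the resulting guarantee back by a domination argument. First I would adopt the hypothesis-testing characterization of privacy: a mechanism $M$ is $(\epsilon,\delta)$-DP on a neighboring pair $D,D'$ if and only if, for every randomized rejection rule with false-positive rate $x=\Pr_{D'}[\text{reject}]$ and false-negative rate $y=1-\Pr_{D}[\text{reject}]$, the pair $(x,y)$ lies in the region $R(\epsilon,\delta)=\{(x,y):\,y\ge 1-\delta-e^{\epsilon}x\ \text{and}\ x\ge 1-\delta-e^{\epsilon}y\}$. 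This recasts the guarantee purely in terms of the achievable error region of the induced binary test, which is the object that tensorizes cleanly under composition.

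The main technical step is a domination lemma. There is a canonical four-outcome mechanism $\widetilde M_{\epsilon,\delta}$ on symbols $\{a_0,a_+,a_-,a_1\}$ with, under one input, $\Pr[a_0]=\delta,\ \Pr[a_+]=(1-\delta)\tfrac{e^{\epsilon}}{1+e^{\epsilon}},\ \Pr[a_-]=(1-\delta)\tfrac{1}{1+e^{\epsilon}},\ \Pr[a_1]=0$, and the mirror-image distribution under the neighbor, so that the single-shot privacy region of every $(\epsilon,\delta)$-DP mechanism contains $R(\epsilon,\delta)$, the region of $\widetilde M_{\epsilon,\delta}$. I would then show this domination is preserved under $k$-fold adaptive composition, so that the composed region of any $(\epsilon,\delta)$-DP mechanisms $M_1,\dots,M_k$ contains the composed region of the corresponding canonical mechanisms $\widetilde M_1,\dots,\widetilde M_k$; hence it suffices to analyze the canonical composition. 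The symbols $a_0,a_1$ carry privacy loss $\pm\infty$ (they ``reveal'' the input and occur with probability $\delta_i$), while $a_+,a_-$ carry bounded loss $\pm\epsilon$.

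Next I would split the composed loss according to whether any revealing symbol appears. Letting $\bar B$ be the event that no coordinate outputs $a_0$ or $a_1$, we have $\Pr[\bar B]=\prod_i(1-\delta_i)$; conditioned on $\bar B$ the total privacy loss is $L=\sum_{i=1}^{k}L_i$, where each $L_i$ is bounded in $[-\epsilon,\epsilon]$ with $\Pr[L_i=\epsilon]=\tfrac{e^{\epsilon}}{1+e^{\epsilon}}$ and mean $\mu=\epsilon\tfrac{e^{\epsilon}-1}{e^{\epsilon}+1}$, so $\mathbb{E}[L]=k\mu=\tfrac{(e^{\epsilon}-1)\epsilon k}{e^{\epsilon}+1}$. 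Three tail bounds on $\Pr[L>\hat\epsilon_{\hat\delta}]\le\hat\delta$ then produce the three candidates in the minimum: the deterministic bound $L\le k\epsilon$ gives the first term; Azuma--Hoeffding on the bounded increments, $\Pr[L-\mathbb{E}L>t]\le\exp(-t^2/(2k\epsilon^2))$, yields $t=\epsilon\sqrt{2k\log(1/\hat\delta)}$ and hence the third term; and a sharper variance-aware Chernoff/Bennett estimate, optimizing the moment-generating-function parameter, yields the $\epsilon\sqrt{2k\log(e+\sqrt{k\epsilon^2/\hat\delta})}$ refinement that is the second term. Finally I would combine the concentration failure probability $\hat\delta$ on $\bar B$ with $\Pr[B]=1-\prod_i(1-\delta_i)$ to conclude that the composition is $(\hat\epsilon_{\hat\delta},\,1-(1-\hat\delta)\prod_i(1-\delta_i))$-DP, with $\hat\epsilon_{\hat\delta}$ the minimum of the three quantities.

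The hard part is the domination lemma in its composition-closed form: showing not only that every $(\epsilon,\delta)$-DP mechanism is individually dominated by $\widetilde M_{\epsilon,\delta}$ in the privacy-region order, but that this order is preserved under tensorization, so that the coordinatewise worst case is simultaneously the worst case for the product. This needs a coupling/majorization argument on the induced likelihood-ratio distributions together with a convexity property of the composed region; once it is in place, the concentration steps that generate the three bounds are routine.
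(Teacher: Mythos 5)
Your proposal is correct in substance, and it shares the paper's central device: both reduce an arbitrary $(\epsilon,\delta)$-DP mechanism to the canonical four-outcome mechanism via the hypothesis-testing (privacy-region) characterization --- your $\widetilde M_{\epsilon,\delta}$ is exactly the paper's $\mathcal{M}^{\dag}$ with distributions $\mathcal{P}^{\dag,0},\mathcal{P}^{\dag,1}$, and your domination-under-composition lemma is the same simulation/post-processing step the paper invokes when it treats the privacy region of any mechanism as an intersection of canonical regions. Where you genuinely diverge is the quantitative endgame. The paper follows the Kairouz--Oh--Viswanath route of computing the \emph{exact} composed privacy region: it evaluates $\mathsf{Dt}_{(i-2j)\epsilon}(\mathcal{P}^i,(\mathcal{P}')^i)$ in closed form as a binomial sum over likelihood-ratio masses, obtaining the optimal composition curve, from which the stated min-of-three expression is then read off (the paper in fact asserts rather than derives this last translation). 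You instead bypass the exact formula entirely: conditioning on the event $\bar B$ that no revealing symbol $a_0,a_1$ occurs (probability $\prod_i(1-\delta_i)$), you treat the privacy loss as a sum of i.i.d.\ $\pm\epsilon$ increments with mean $k\epsilon(e^{\epsilon}-1)/(e^{\epsilon}+1)$ and obtain the three candidates from a deterministic bound, Hoeffding, and a sharper MGF optimization, then recombine failure probabilities to get $1-(1-\hat\delta)\prod_i(1-\delta_i)$. This is the Dwork--Rothblum--Vadhan-style concentration argument applied to the canonical mechanism, and it actually supplies the step the paper's proof omits --- how the closed-form $\hat\epsilon_{\hat\delta}$ arises --- at the cost of discarding the exact region, which is the stronger piece of information the paper's route retains. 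Two caveats: your second term (the $\log(e+\sqrt{k\epsilon^2/\hat\delta})$ refinement) is the one place where ``Chernoff/Bennett'' is not yet a proof --- in the original reference that term requires a specific technical derivation from the exact binomial tail, not an off-the-shelf inequality --- and your domination lemma should be phrased via the randomized post-processing simulation (each $M_i$'s output pair is a stochastic degradation of the canonical pair), which is what makes closure under \emph{adaptive} composition immediate; the coupling/majorization language you use is heavier than needed.
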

\begin{proof}\let\qed\relax


Definition~\ref{def::priloss} gets $\mathcal{L}_{\mathsf{Pri}}^{(o)}$ on an outcome variable $o$ over databases $D$ and $D'$.
This proof starts with a particular random mechanism $\mathcal{M}^{\dag}$ with further generalization.
	The mechanism $\mathcal{M}^{\dag}$ does not depend on the database or the query but relies on hypothesis $\mathsf{hp}$.
    For $\mathsf{hp}=0$, the outcome $O_i$ of $\mathcal{M}^{\dag}_i$ is independent and identically distributed from a discrete random distribution $O^{\mathsf{hp}=0}\sim \mathcal{P}^{\dag,0}$.
    $\mathcal{P}^{\dag,0}(o)$ is defined to be: $\delta \text{ for } o=0; (1-\delta)e^{\epsilon}/(1+e^{\epsilon}) \text{ for } o=1;(1-\delta)/(1+e^{\epsilon}) \text{ for } o=2; 0 \text{ for } o=3.$
   For $\mathsf{hp}=1$, the outcome $O_i$ of $\mathcal{M}^{\dag}_i$ is   $O^{\mathsf{hp}=1}\sim \mathcal{P}^{\dag,1}$.
   $\mathcal{P}^{\dag,1}(o)$ is defined to be: $0 \text{ for } o=0; (1-\delta)/(1+e^\epsilon)\text{ for } o=1; (1-\delta)e^{\epsilon}/(1+e^\epsilon) \text{ for } o=2; \delta \text{ for } o=3.$
   
   Let $\mathcal{R}(\epsilon,\delta)$ be privacy region of a single access to $\mathcal{M}^{\dag}$.
   The privacy region consists of two rejection regions with errors, i.e., rejecting true null-hypothesis (type-I error) and retaining false null-hypothesis (type-II error).
   Let $\epsilon^\dag_k,\delta^\dag_k$ be $\mathcal{M}^{\dag}_i$'s parameters for defining privacy.
   $\mathcal{R}(\mathcal{M},D,D')$ of any mechanism $\mathcal{M}$ can be regarded as an intersection of $\{(\epsilon^\dag_k,\delta^\dag_k)\}$ privacy regions.
  For an arbitrary mechanism $\mathcal{M}$, we need to compute its privacy region using the $(\epsilon^\dag_k,\delta^\dag_k)$ pairs.
  Let $D,D'$ be neighbouring databases and $\mathcal{O}$ be the outputting domain.
  Define (symmetric)  $\mathcal{P},\mathcal{P}'$ to be probability density function of the outputs $\mathcal{M}(D),\mathcal{M}(D')$, respectively.
   Assume a permutation $\pi$ over $\mathcal{O}$ such that $\mathcal{P}'(o)=\mathcal{P}(\pi(o))$.
   
   Let $S$ denote the complement of a rejection region.
   Since $\mathcal{R}(\mathcal{M},D,D')$ is convex, we have $1-\mathcal{P}(S)\geq -e^{\epsilon^\dag_k}\mathcal{P}'(S)+1-\delta^\dag_k \Rightarrow \mathcal{P}(S)-e^{\epsilon^\dag_k}\mathcal{P}'(S)\leq \delta^\dag_k $.
   Define $\mathsf{Dt}_{\epsilon^\dag}(\mathcal{P},\mathcal{P}')=\max_{S\subseteq\mathcal{O}}\{\mathcal{P}(S)-e^{\epsilon^\dag} \mathcal{P}'(S)\}$.
   Thus, $\mathcal{M}$'s privacy region is the set: $\{(\epsilon^\dag_k,\delta^\dag_k):\epsilon_k^\dag\in[0,\infty)] \text{ s.t. } \mathcal{P}(o)=e^{\epsilon_k^\dag}\mathcal{P}'(o), \delta^\dag_k =\mathsf{Dt}_{\epsilon^\dag_k}(\mathcal{P},\mathcal{P}')\}$.
  Next, we consider composition on random mechanisms $\mathcal{M}_1,\ldots,\mathcal{M}_i$.
  By accessing $\mathcal{M}^{\dag}_i$, $\mathcal{P}(O^{1,\mathsf{hp}}=o_1,\ldots,O^{i,\mathsf{hp}}=o_i)=\Pi^{i}_{j=1}\mathcal{P}^{\dag,\mathsf{hp}}(o_j)$.
  By algebra on two discrete distributions,
  $\mathsf{Dt}_{(i-2j)\epsilon}(\mathcal{P}^i,(\mathcal{P}')^i)=1-(1-\delta)^i+(1-\delta)^i\allowbreak{\sum_{l=0}^{j-1}\left(\begin{smallmatrix}i\\l\end{smallmatrix}(e^{\epsilon(i-l)}-e^{\epsilon(i-2j+l)})\right)}/{(1+e^{\epsilon})^k}$.
  Hence, privacy region is an interaction of $i$ regions, parameterized by $1-(1-\hat{\delta})\Pi_{i}(1-\delta_i)$.
\end{proof}	
Now, by Lemma~\ref{lem::relat}, we get Theorem~\ref{the::comp} directly.
In summary, \method protects two types of data features using two different protective manners, i.e., small noise for \xef and extremely large noise for \xrf, and thus attains higher model performance and stronger security in the same time.

\section{More Results on Attack}
\label{apendix:Attack_res}
\subsection{More Details on Model Inversion Attack}\label{appendix:model_inversion_attack}

\begin{figure}[htbp]\label{fig:model_inversion}
   \centering
    \subfigure[Raw data $\mathbf{x}$]{\includegraphics[width=0.2\textwidth]{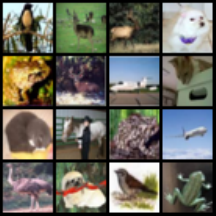}\label{fig:appendix_inversion_attack_1}}
    \hspace{.45in}
    \subfigure[Attack on  $\mathbf{x}_s$]{\includegraphics[width=0.2\textwidth]{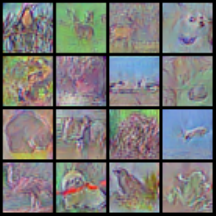}\label{fig:appendix_inversion_attack_2}}
    \hspace{.45in}
    \subfigure[Attack on  $\mathbf{x}_p$]{\includegraphics[width=0.2\textwidth]{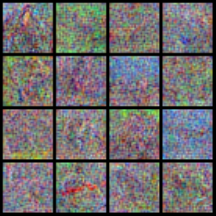}\label{fig:appendix_inversion_attack_3}}
    \caption{Model inversion attack. White-Box attack \xef $\mathbf{x}_s$ and globally shared data $\mathbf{x}_p, \mathbf{x}_p=\mathbf{x}_s + \mathbf{n},\mathbf{n} \sim \mathcal{N}(\mathbf{0},\sigma_s^2\mathbf{I})$. 
    }\label{fig:appendix_inversion_attack}
\end{figure}

In this section, we present additional results of the model inversion attack. When the central server assumes the role of the attacker without access to the original data, it exhibits a diminished performance in attacking the model.
The raw data $\mathbf{x}$ of Figure~\ref{fig::attack_result} (a) and (b) can be found in Figure~\ref{fig:appendix_inversion_attack_1}. As we can see, $\mathbf{x}_s$ still causes privacy leakage and Figure~\ref{fig:appendix_inversion_attack_2} also gives the intuitive necessity of DP protection on $\mathbf{x}_s$.

\begin{figure}[ht]
\centering
\includegraphics[width=0.65\linewidth]{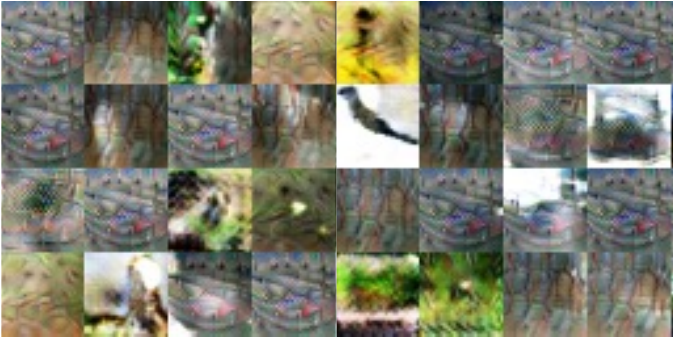}
\caption{Generative-based model inversion attack results.}\label{fig:gmi_re}
\end{figure}

Inspired by generative model inversion methods~\citep{zhang2020secret,chen2021knowledge}, we conduct another experiment where one of the clients acts as the attacker. Similar to GMI~\citep{zhang2020secret}, the method involves leveraging an auxiliary dataset and a public dataset to launch an attack on the target model. In our experiments, we utilize the globally shared dataset as the auxiliary dataset, while the local private data serves as the public dataset because the client has access to its own local data.  Under this scenario (results in Figure~\ref{fig:gmi_re}), the attacker’s performance is better than when the server acts as the attacker, while the attacker is still unable to recover data from the shared features. The model architectures of generative-based model inversion attack are listed in Table~\ref{tab:model_arch_gmi}.

\begin{table}[htbp]
\centering
\caption{The model architectures used in generative-based model inversion attack}\label{tab:model_arch_gmi}
\begin{tabular}{ccccc}
\toprule[1.5pt]
\multicolumn{5}{c}{The encoder structure of the generator takes as input the globally shared data} \\ \midrule[1.5pt]
Type              & Kernel            & Dilation           & Stride            & Outputs           \\ \midrule[1.5pt]
conv              & 5x5               & 1                  & 1x1               & 32                \\
conv              & 3x3               & 1                  & 2x2               & 64                \\
conv              & 3x3               & 1                  & 1x1               & 64                \\
conv              & 3x3               & 1                  & 2x2               & 128               \\
conv              & 3x3               & 1                  & 1x1               & 128               \\
conv              & 3x3               & 1                  & 1x1               & 128               \\
conv              & 3x3               & 2                  & 1x1               & 128               \\
conv              & 3x3               & 4                  & 1x1               & 128               \\
conv              & 3x3               & 8                  & 1x1               & 128               \\
conv              & 3x3               & 16                 & 1x1               & 128               \\ \midrule[1.5pt]
\multicolumn{5}{c}{The encoder structure of the generator takes as input the latent vector}        \\ \midrule[1.5pt]
linear            &                   &                    &                   & 2048              \\
deconv            & 5x5               &                    & 1/2x1/2           & 256               \\
deconv            & 5x5               &                    & 1/2x1/2           & 128               \\ \midrule[1.5pt]
\multicolumn{5}{c}{The decoder structure of the generator}                                         \\ \midrule[1.5pt]
deconv            & 5x5               &                    & 1/2x1/2           & 128               \\
deconv            & 5x5               &                    & 1/2x1/2           & 64                \\
conv              & 3x3               &                    & 1x1               & 32                \\
conv              & 3x3               &                    & 1x1               & 3                 \\ \midrule[1.5pt]
\multicolumn{5}{c}{The global discriminator structure}                                             \\ \midrule[1.5pt]
conv              & 3x3               &                    & 2x2               & 64                \\
conv              & 3x3               &                    & 2x2               & 128               \\
conv              & 3x3               &                    & 2x2               & 256               \\
conv              & 3x3               &                    & 2x2               & 512               \\
conv              & 1x1               &                    & 4x4               & 1                 \\ \bottomrule[1.5pt]
\end{tabular}
\end{table}

\subsection{More Details on Membership Inference Attack}
\label{appendix:mia}
\begin{figure}[htbp]
\centering
\includegraphics[width=0.65\linewidth]{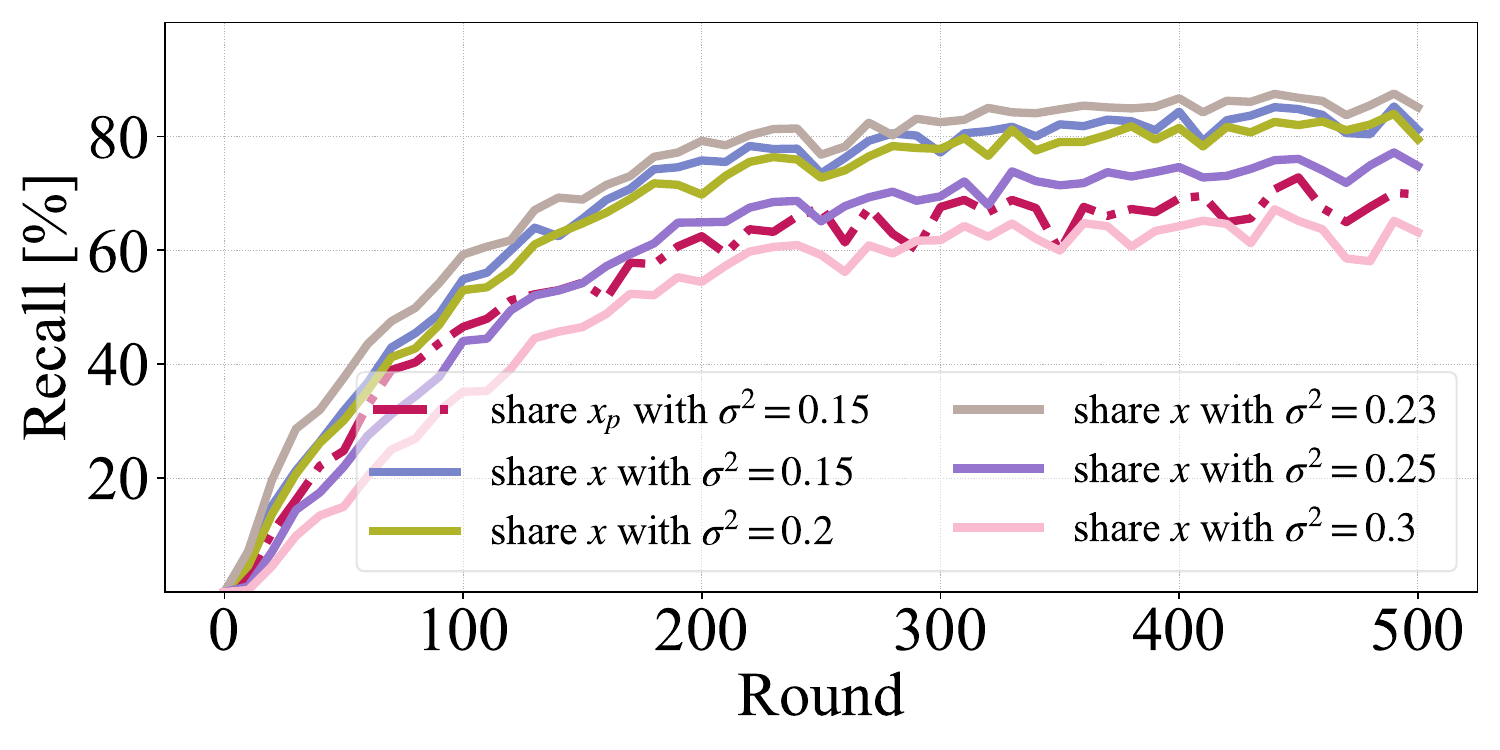}
\caption{Attack process. Perform membership inference attack on global model per 10 communication rounds.}\label{fig::mia_process}
\end{figure}
\textit{Is Theorem~\ref{the::noisecom} true empirically?}

Membership inference attack(MIA)~\citep{shokri2017membership,yan2022membership} attempts to infer whether a particular data point is in the target model's training set.

Specifically, the global model on the server side can be regarded as the target model in FL. In our setting, we train a shadow model with globally shared data. Then the membership attack can be regarded as a binary classification task, member data or non-member data. The input of the attack model is the top-k vector of the output from the shadow model. To compare the superiority of sharing partial data rather than the complete data with DP, we conduct MIA to explore the divergence experimentally.  
Sharing raw data causes more information leakage (higher recall of MIA) than partial data with the same DP level. However, the \method needs a relatively small noise $\sigma$ to achieve comparable protection. The attack process is shown in Figure~\ref{fig::mia_process}.

\section{Supplementary for Experiments}
\label{app::expmore}
\subsection{Visualization of Data Heterogeneity}

\label{appendix:VisDataHetrto}

\begin{figure}[htbp]
\setlength{\abovedisplayskip}{-1pt}
    \subfigbottomskip=2pt
    \subfigcapskip=1pt
    \setlength{\abovecaptionskip}{0.cm}
  \centering
  
     \subfigure[LDA ($\alpha = 0.1$)]{\includegraphics[height=0.2\textwidth,width=0.27\textwidth]{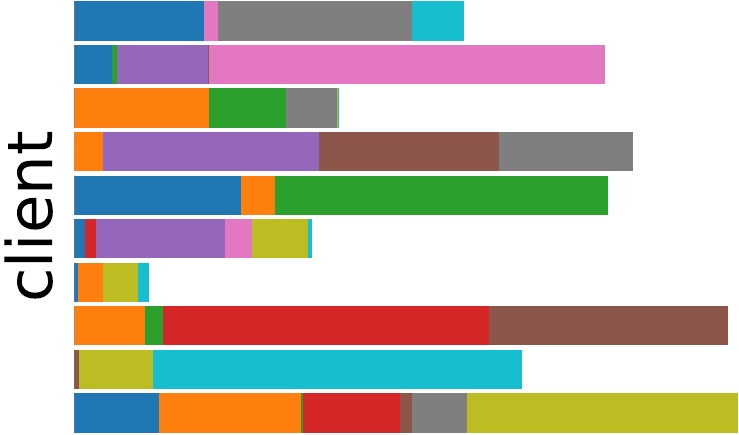}}
     \hspace{.2in}
     \subfigure[$\#C=2$]{\includegraphics[height=0.2\textwidth,width=0.27\textwidth]{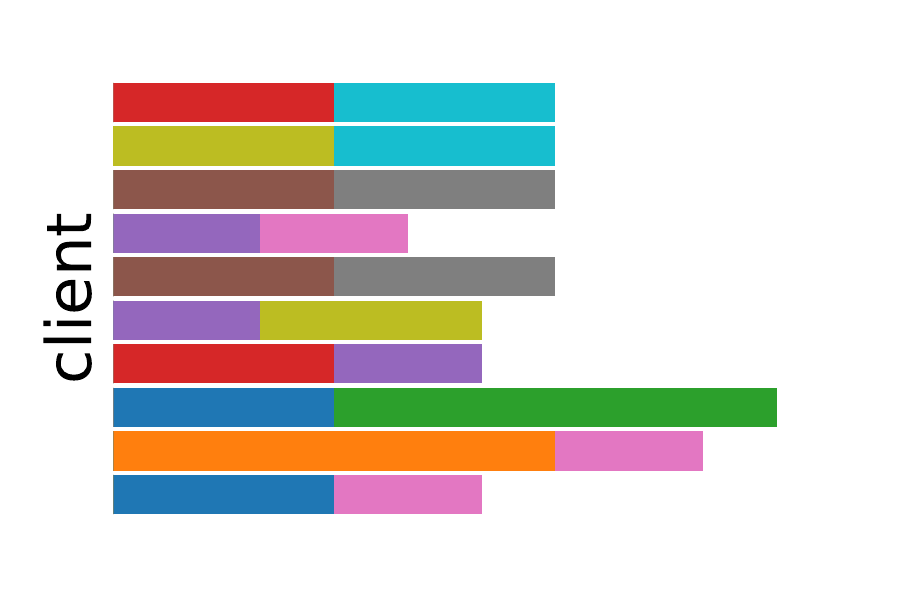}}
     \hspace{.2in}
    \subfigure[Subset]{\includegraphics[height=0.2\textwidth,width=0.27\textwidth]{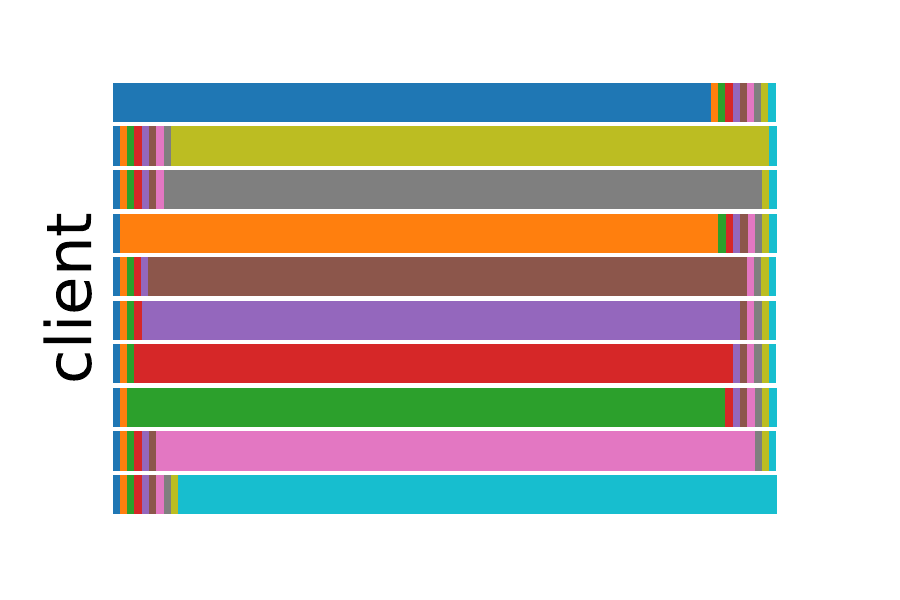}}
    \quad
    \caption{Data distribution in various FL heterogeneity scenarios. Different colours denote different labels and the length of each line denotes the data number. As we can see, in our FL setting, we mainly perform on two kinds of Non-IID scenarios, including label skew and quantity skew.}
    \label{fig:data_dis}
\end{figure}
We show the visualization of data distribution in Figure~\ref{fig:data_dis}. The LDA partition and the $\#C=2$ partition have the label skew and the quantity skew simultaneously. And the Subset partition only has the label skew. $\#C=k$ means each client only has $k$ different labels from dataset, and $k$ controls the unbalanced degree. The subset method makes each client have all classes from the data, but one dominant class far away outnumbers other classes.

\subsection{More Guessing Games}\label{appendix:guess_game}

\begin{figure}[ht]
\centering
\includegraphics[width=0.8\linewidth]{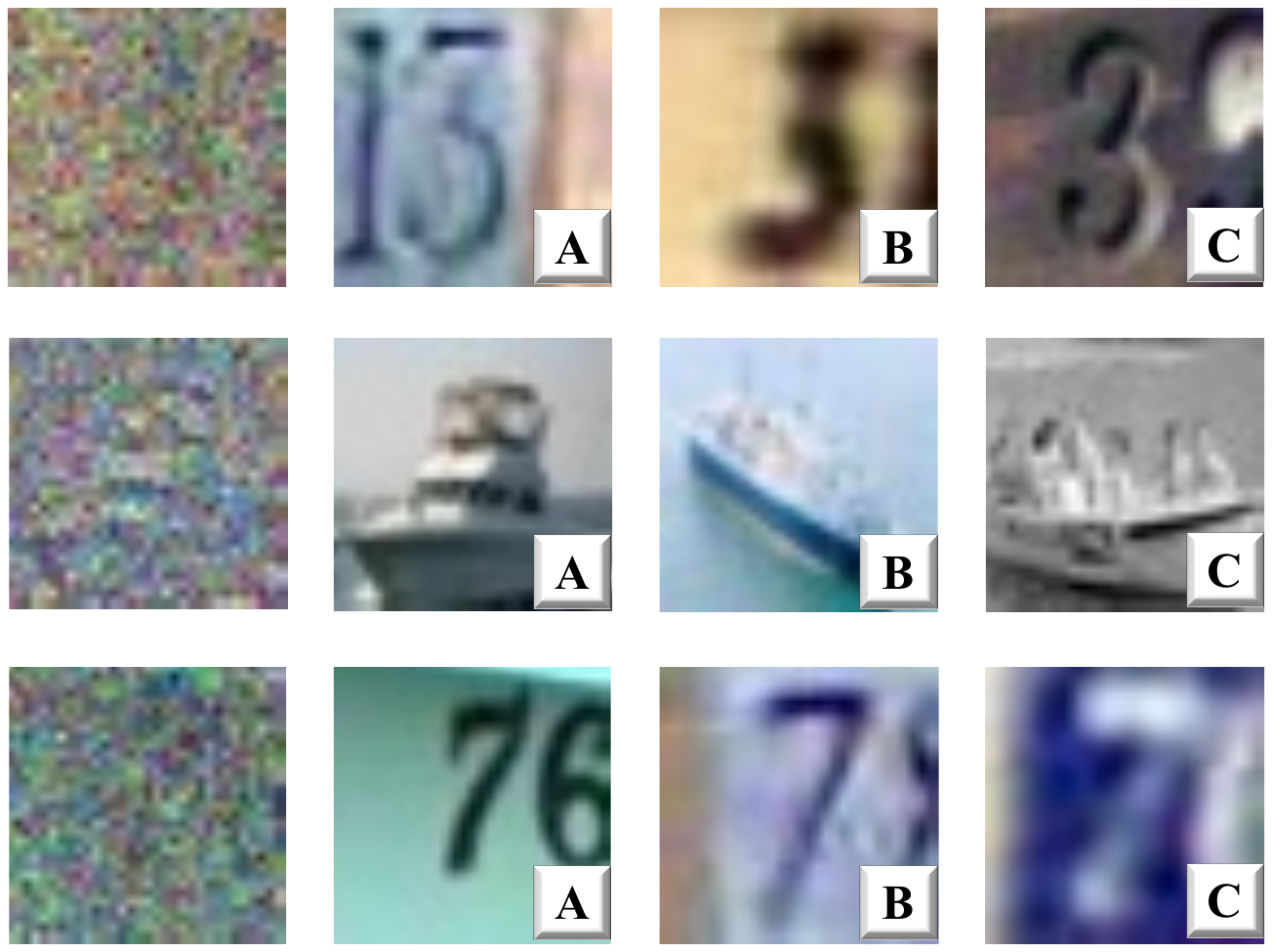}
\caption{Guessing Games  \\
Question: Which one is the first image from in each row? A or B or C? 
}\label{fig:appendix_guess}
\end{figure}
Let's play more guessing games here. We selected samples from the same category for the purpose of reducing the difficulty.
The answer in Figure~\ref{fig::guess} is C.
The answers in Figure~\ref{fig:appendix_guess} are (B, A, A) in order.

\subsection{Sharing Protected Partial Features vs. Protected Raw Data}
\label{appendix:diff_share_data}
\begin{figure}[ht]
\centering
\includegraphics[width=0.3\linewidth]{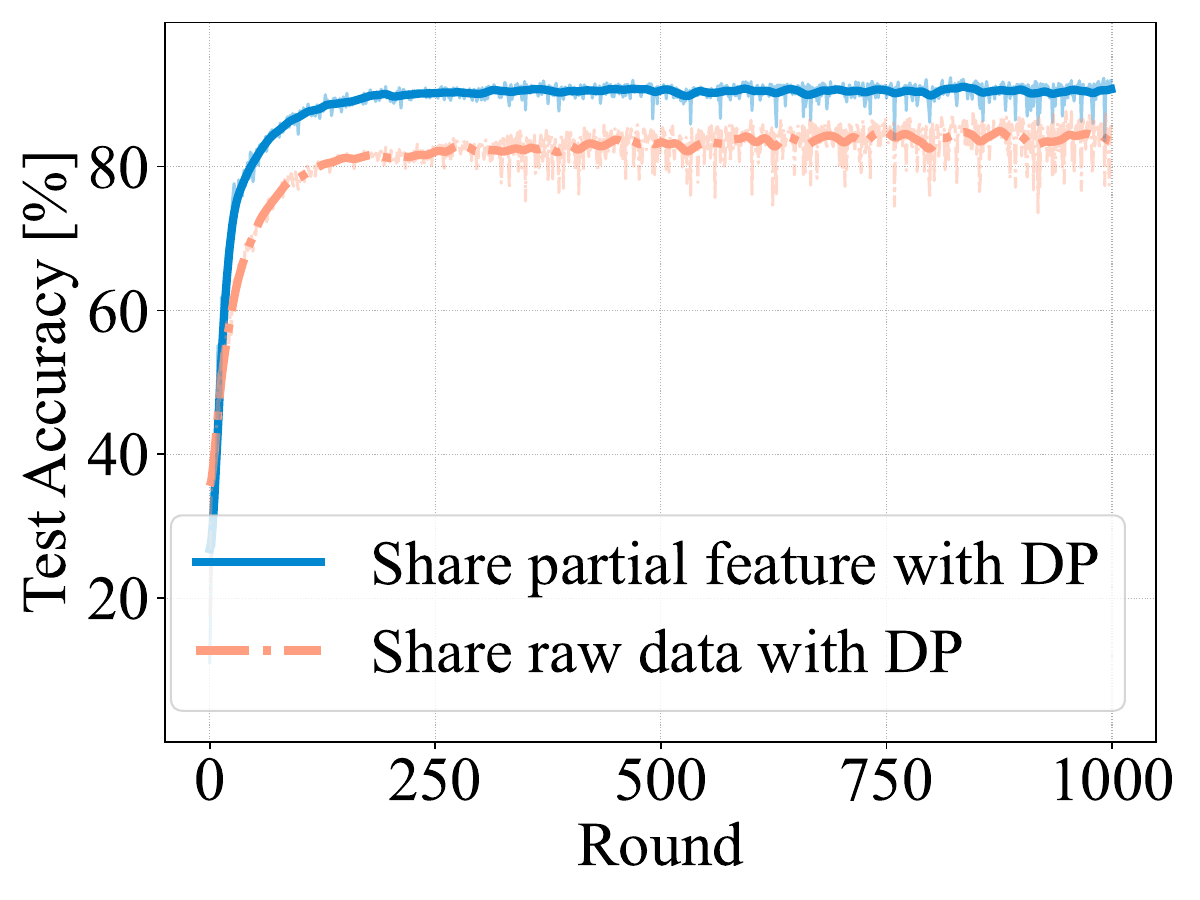}
\caption{Performance on different information sharing strategies with DP protection: raw data and $\mathbf{x}_p$ under the same protection strength.}\label{fig:appendix_diff_share_data}
\end{figure}

In this section, we compare the performance divergence of applying DP to protect different sharing information, i.e., \xef $\mathbf{x}_p$ and raw data $\mathbf{x}$. The results is shown in Figure~\ref{fig:appendix_diff_share_data}

\subsection{More experimental results}
\label{appendix:conv_results}

\begin{table}[ht]
\centering
\caption{Different sampling rates of \method under $\alpha=0.1, E=1, K = 100$ over CIFAR-10}
\label{tab:dif_sample_rate}
\setlength{\arrayrulewidth}{1.5pt}
\begin{tabular}{cccccc}
\toprule[1.5pt]
Sampling rates                 & 5\%     & 10\%    & 20\%    & 40\%    & 60\%    \\ \midrule[1.5pt]
Accuracy                       & 83.67 & 84.06 & 87.98 & 89.17 & 89.62 \\ \midrule[1.5pt]
Round to reach target accuracy & 182     & 163     & 90      & 70      & 61      \\ \bottomrule[1.5pt]
\end{tabular}
\end{table}

\begin{figure}[htbp]
   \centering
    \subfigure[$\alpha=0.1,E=1,K=10$]{\includegraphics[width=0.31\textwidth]{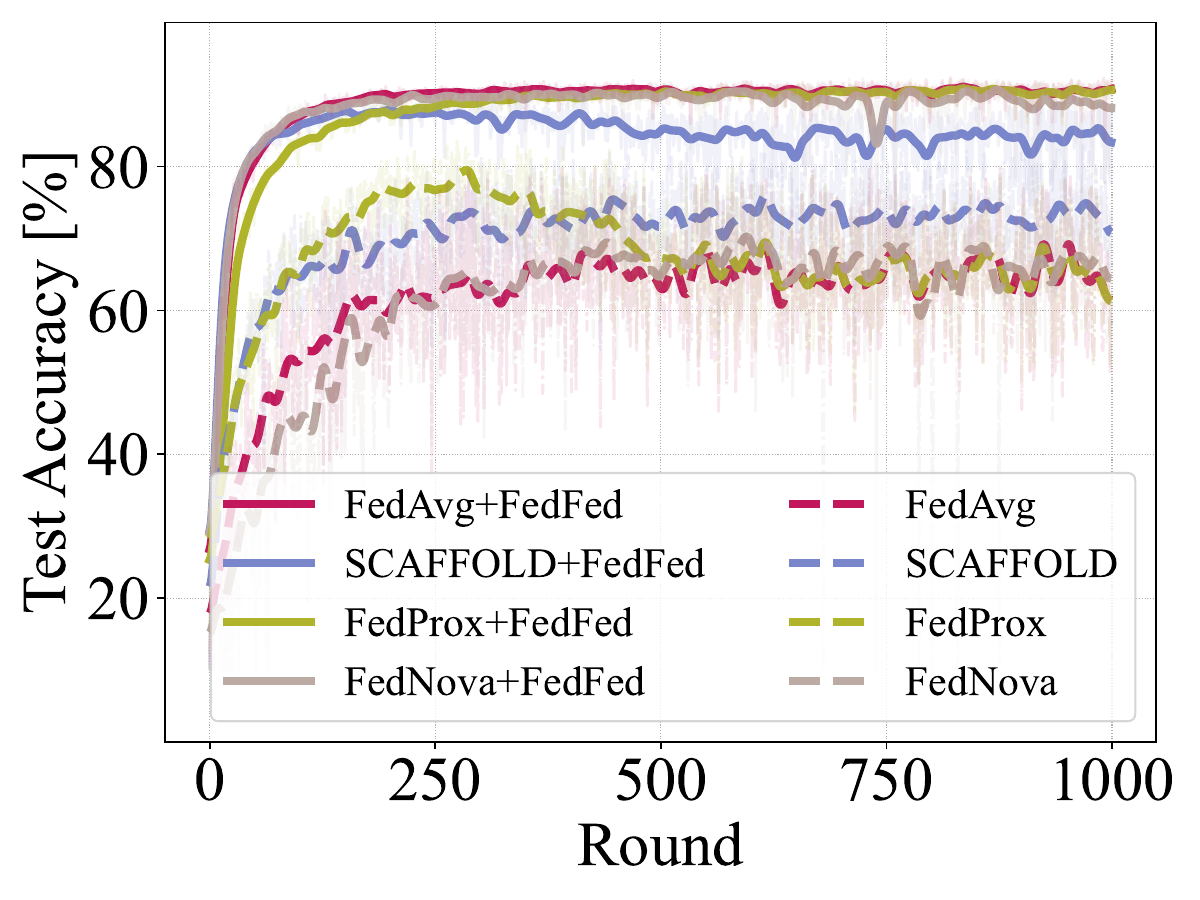}}\label{fig:conv_cifar10_1}
    \subfigure[$\alpha=0.1,E=5,K=10$]{\includegraphics[width=0.31\textwidth]{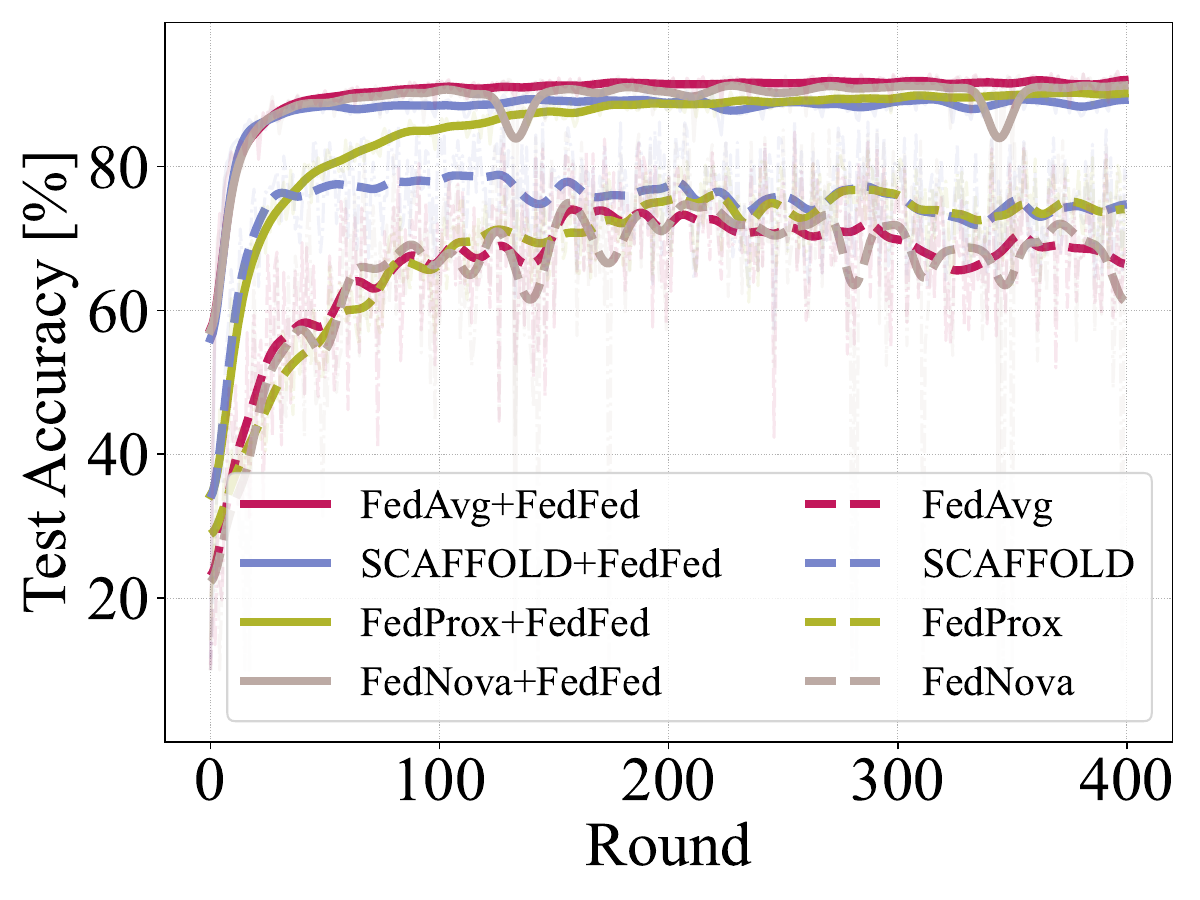}}\label{fig:conv_cifar10_2}
    \subfigure[$\alpha=0.1,E=1,K=100$]{\includegraphics[width=0.31\textwidth]{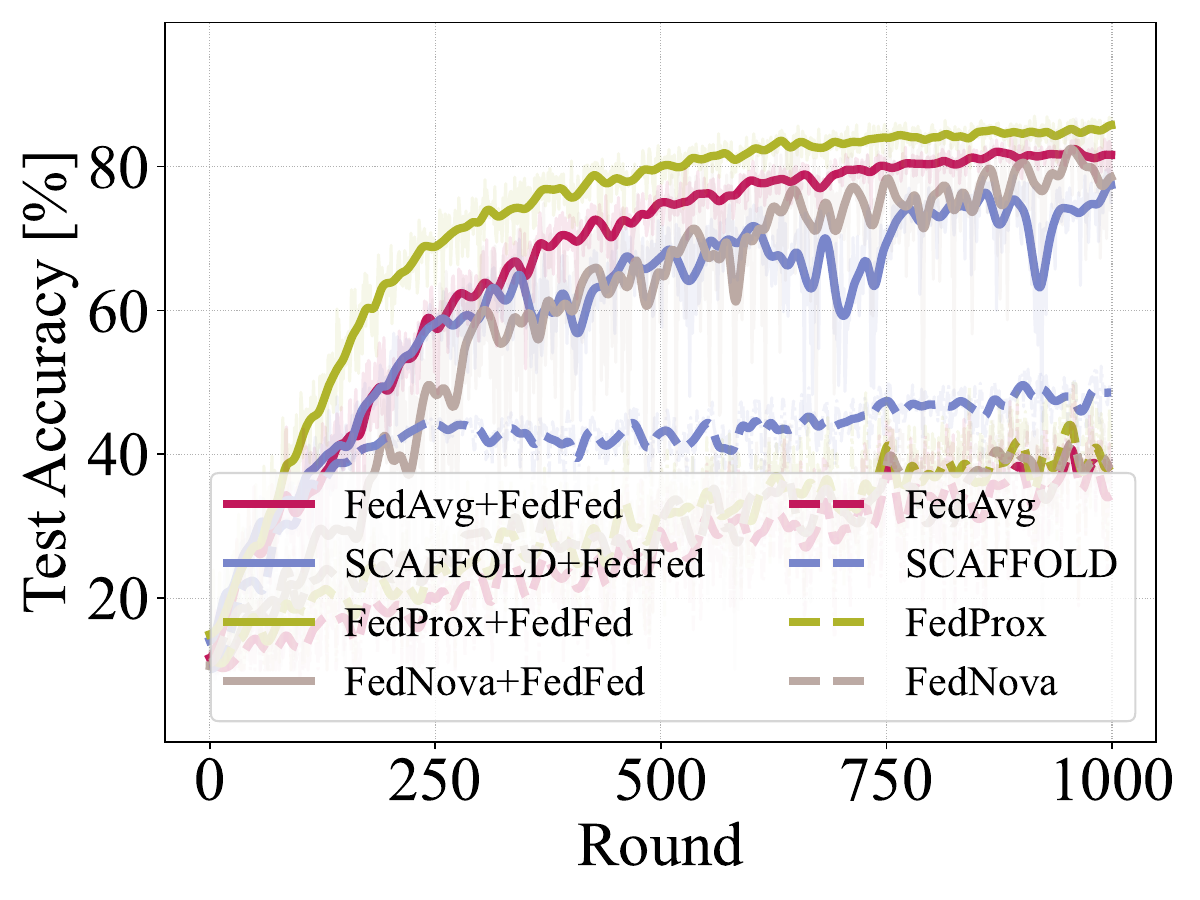}}\label{fig:conv_cifar10_3}
    \caption{Convergence and test accuracy comparison on CIFAR-10.}
    \label{fig:conv_cifar10}
\end{figure}

\begin{figure}[htbp]
   \centering
    \subfigure[$\alpha=0.1,E=1,K=10$]{\includegraphics[width=0.31\textwidth]{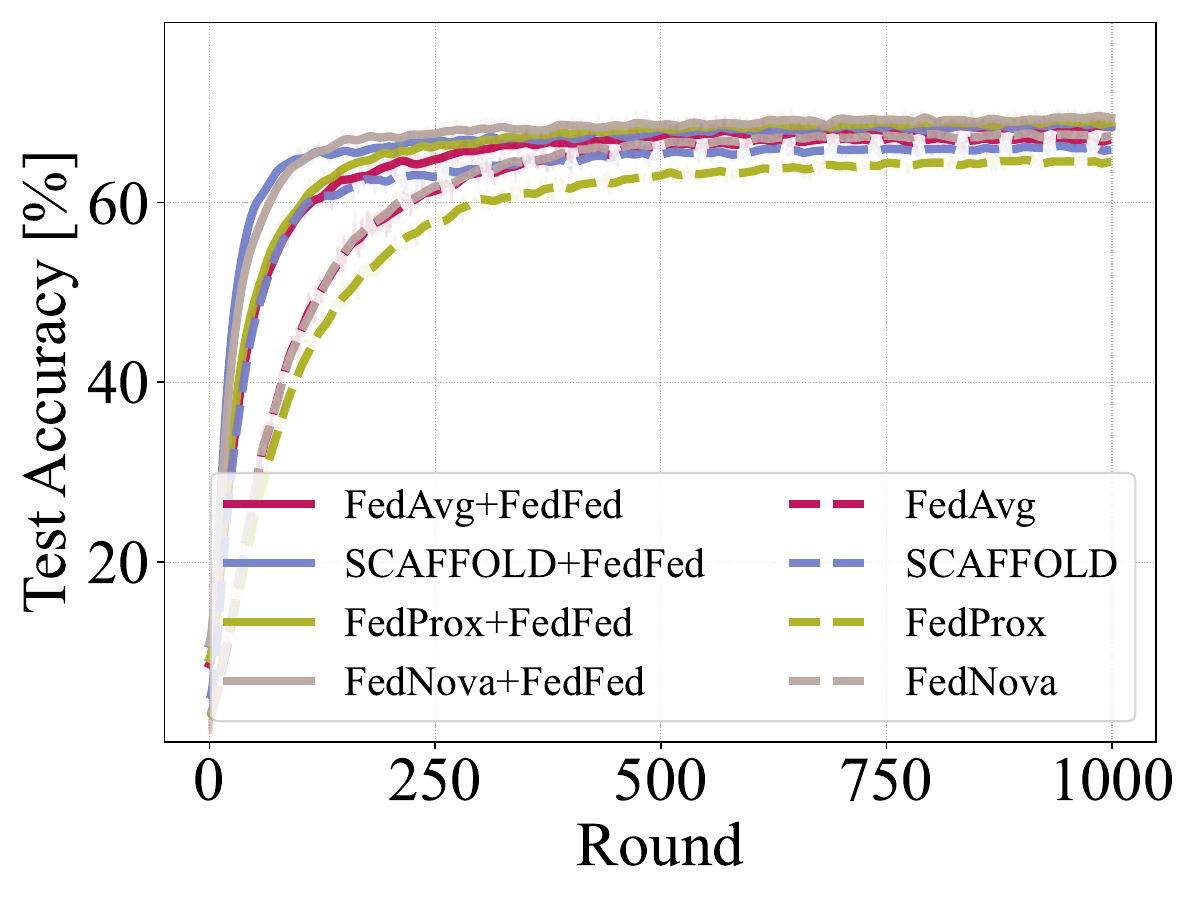}}\label{fig:conv_cifar100_1}
    \subfigure[$\alpha=0.1,E=5,K=10$]{\includegraphics[width=0.31\textwidth]{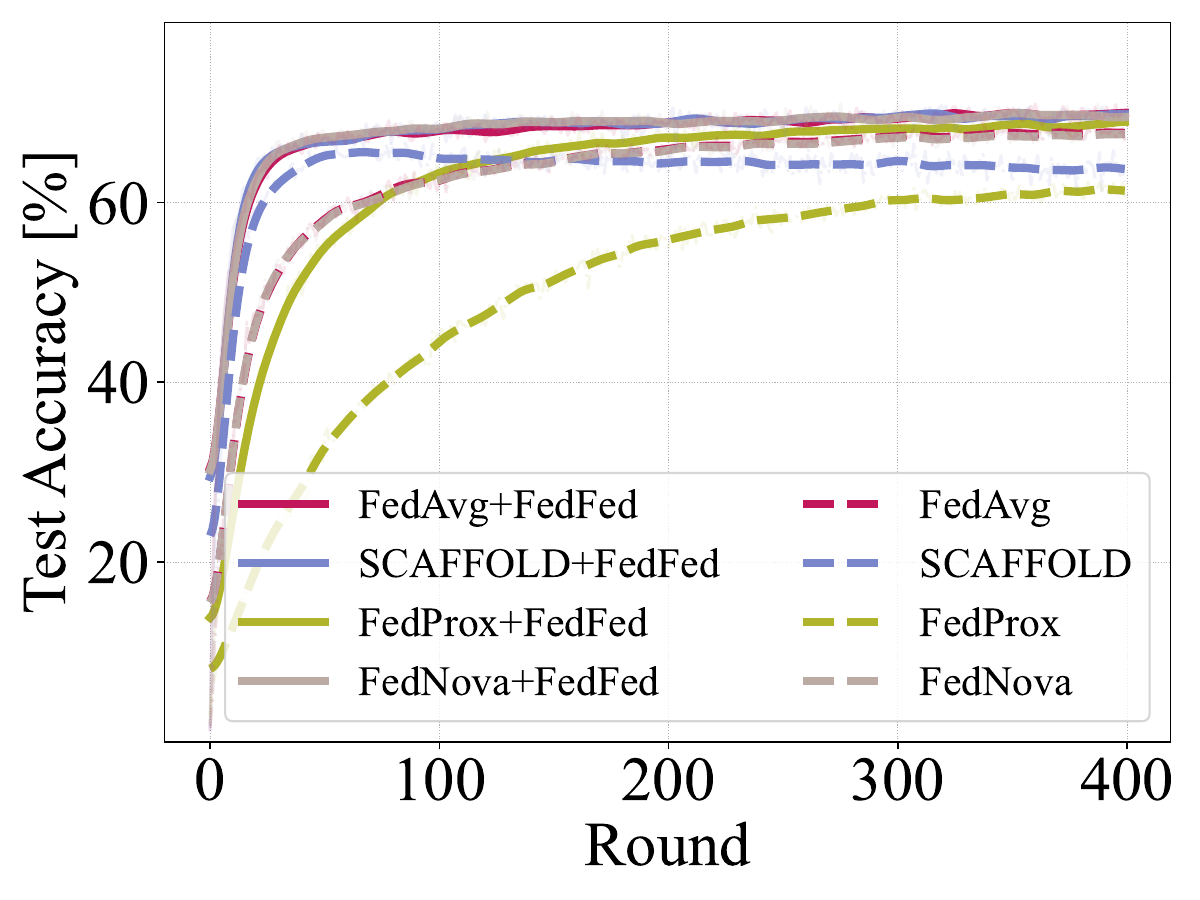}}\label{fig:conv_cifar100_2}
    \subfigure[$\alpha=0.1,E=1,K=100$]{\includegraphics[width=0.31\textwidth]{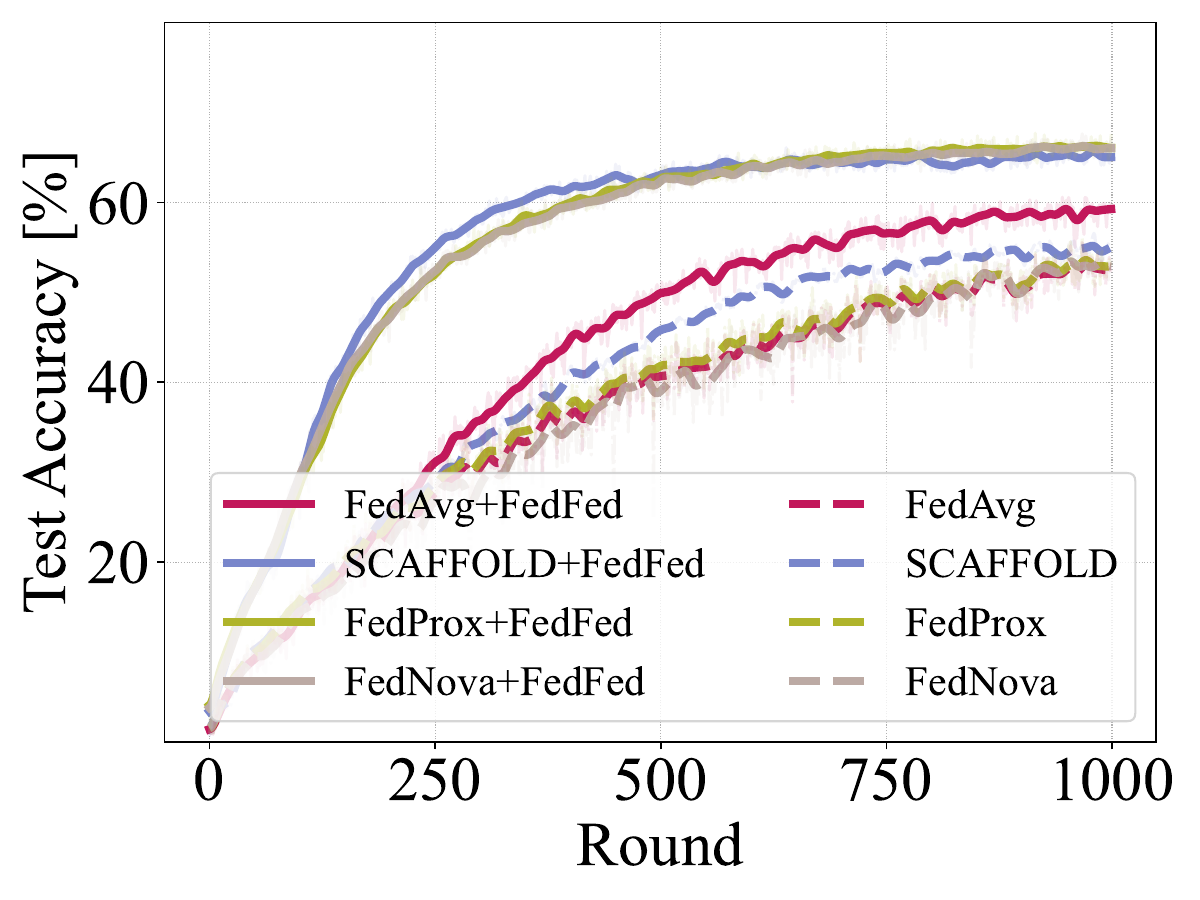}}\label{fig:conv_cifar100_3}
    \caption{Convergence and test accuracy comparison on CIFAR-100.}
    \label{fig:conv_cifar100}
\end{figure}

\begin{figure}[htbp]
   \centering
    \subfigure[$\alpha=0.1,E=1,K=10$]{\includegraphics[width=0.31\textwidth]{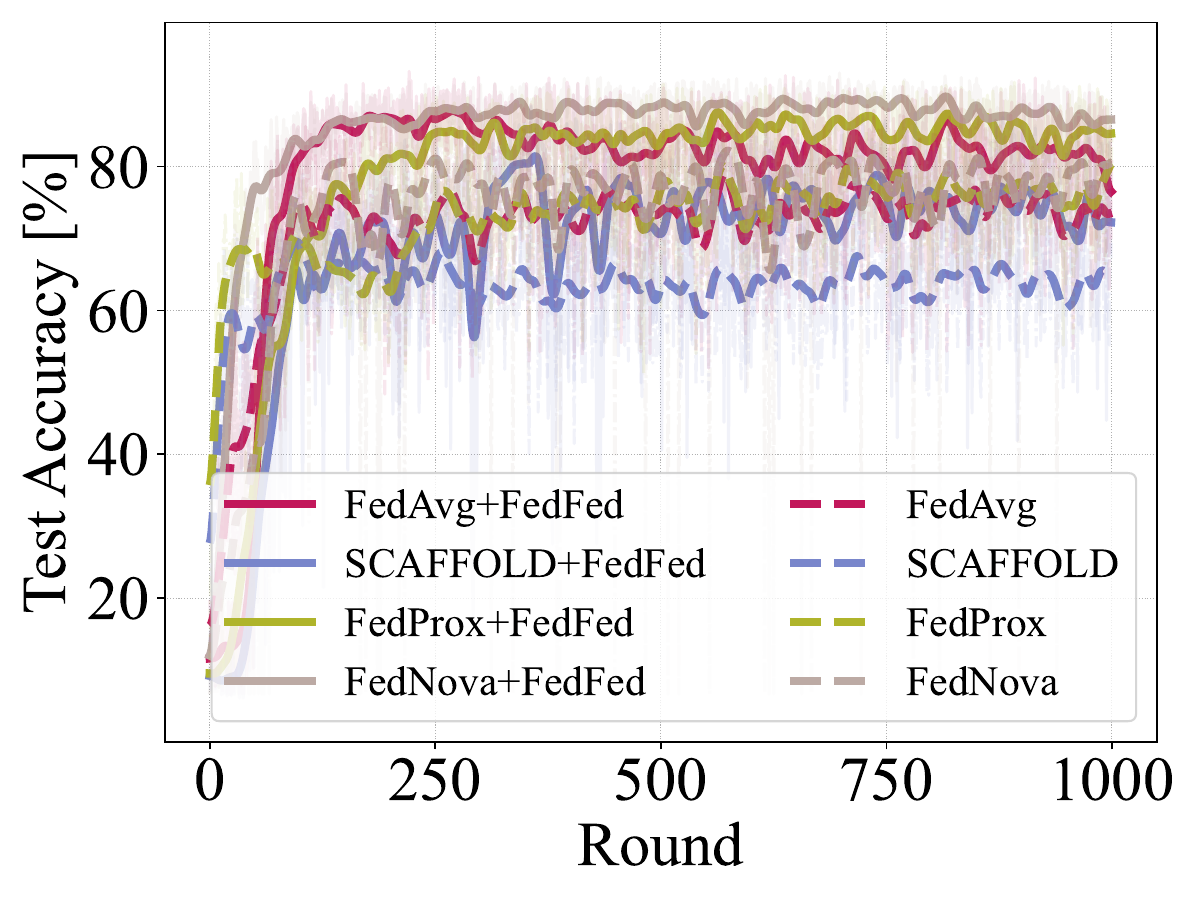}}\label{fig:conv_svhn_1}
    \subfigure[$\alpha=0.1,E=5,K=10$]{\includegraphics[width=0.31\textwidth]{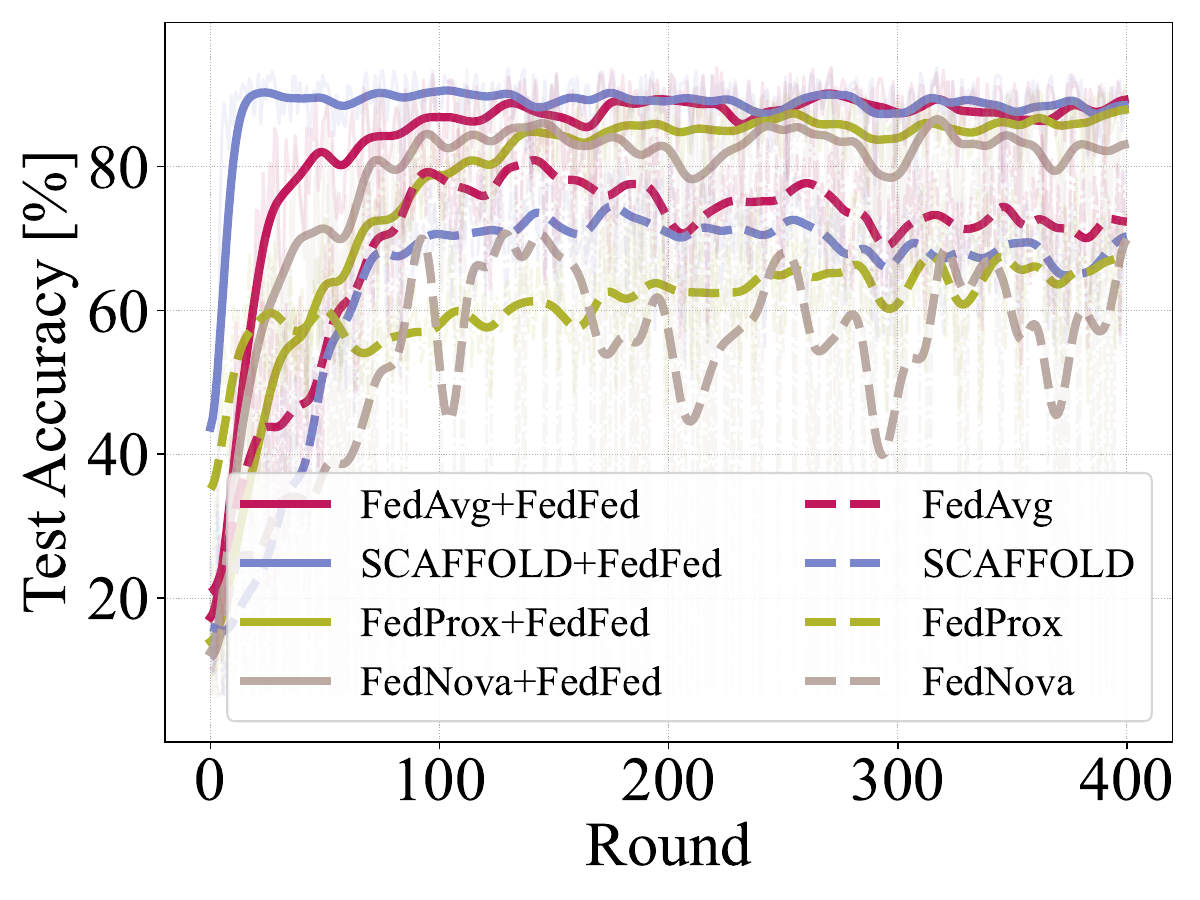}}\label{fig:conv_svhn_2}
    \subfigure[$\alpha=0.05,E=1,K=10$]{\includegraphics[width=0.31\textwidth]{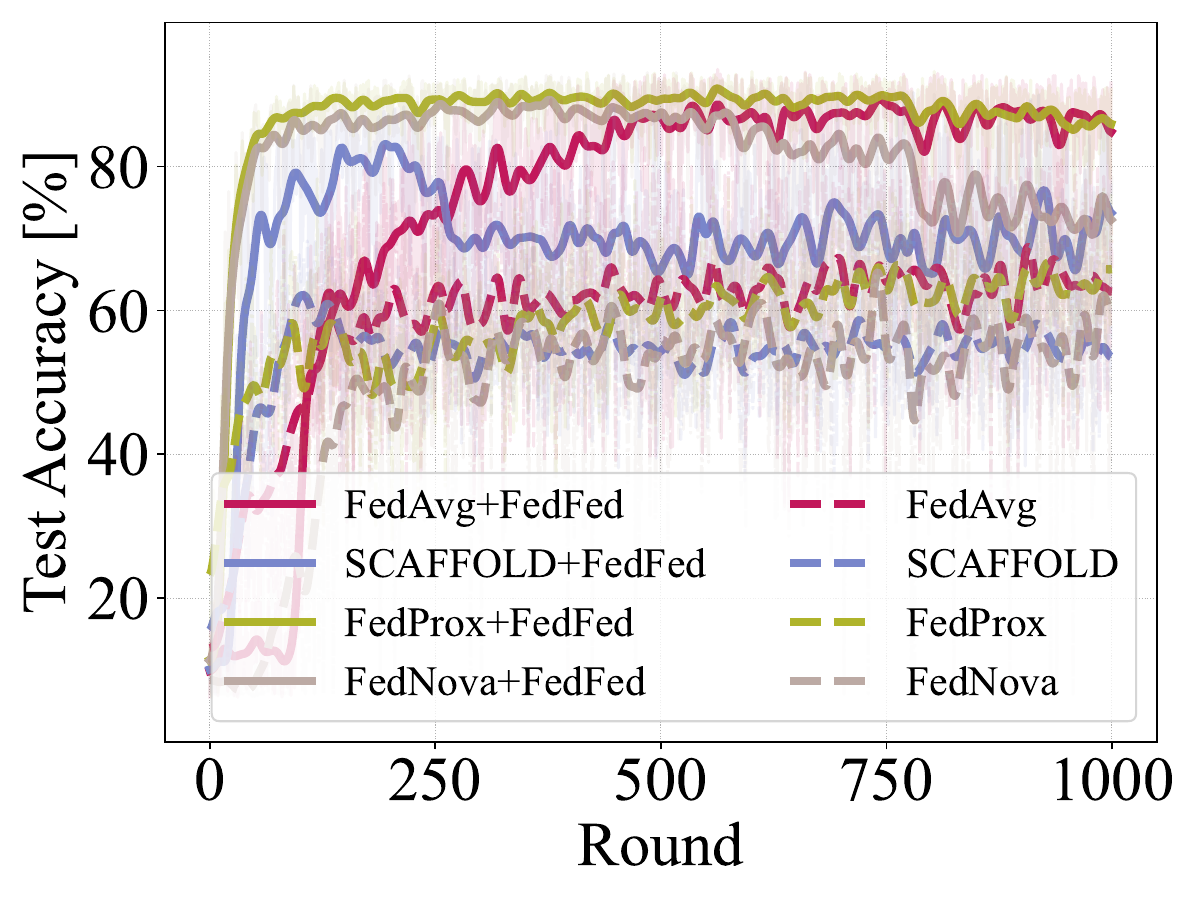}}\label{fig:conv_svhn_3}
    \caption{Convergence and test accuracy comparison on SVHN.}
    \label{fig:conv_svhn}
\end{figure}

\begin{figure}[ht]
   \centering
    \subfigure[$\alpha=0.1,E=1,K=100$]{\includegraphics[width=0.31\textwidth]{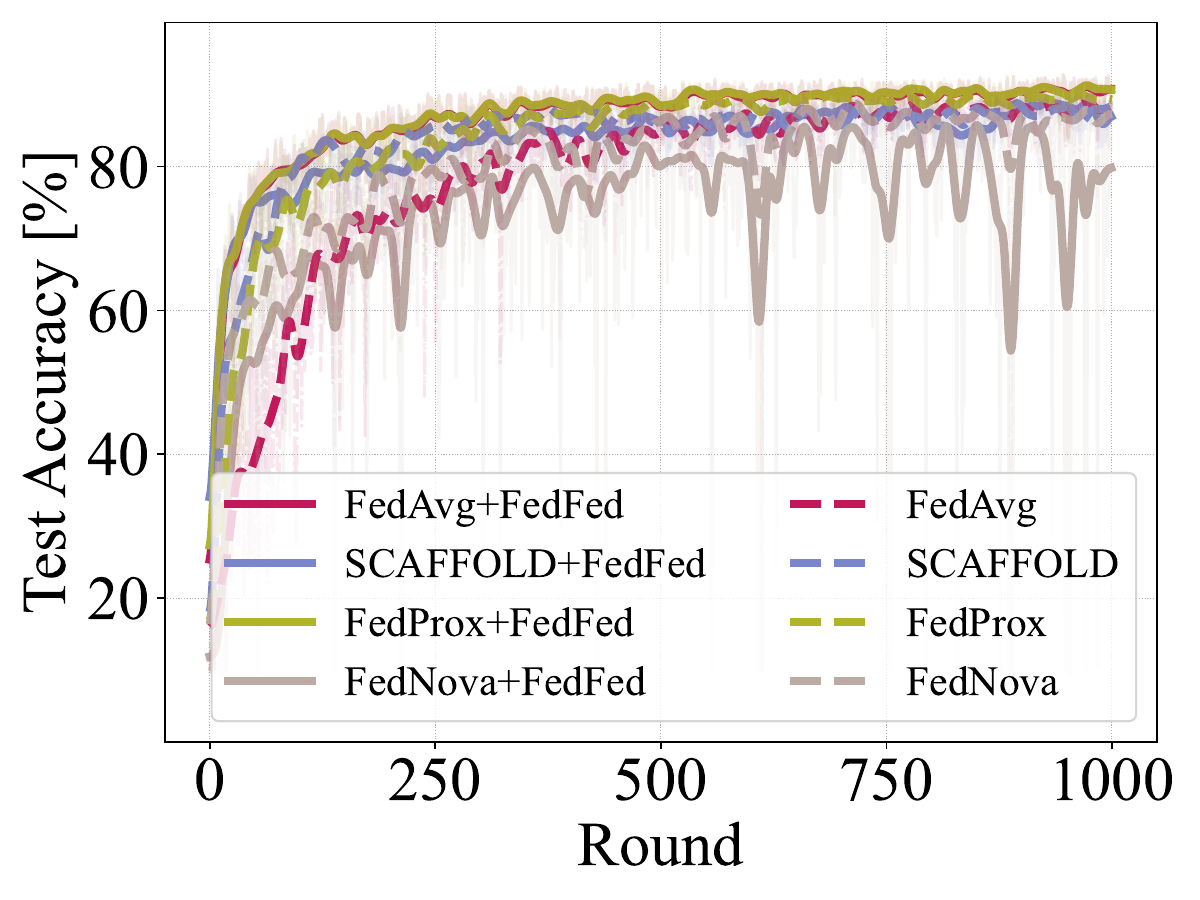}}\label{fig:conv_fmnist_1}
    \subfigure[$\alpha=0.1,E=5,K=10$]{\includegraphics[width=0.31\textwidth]{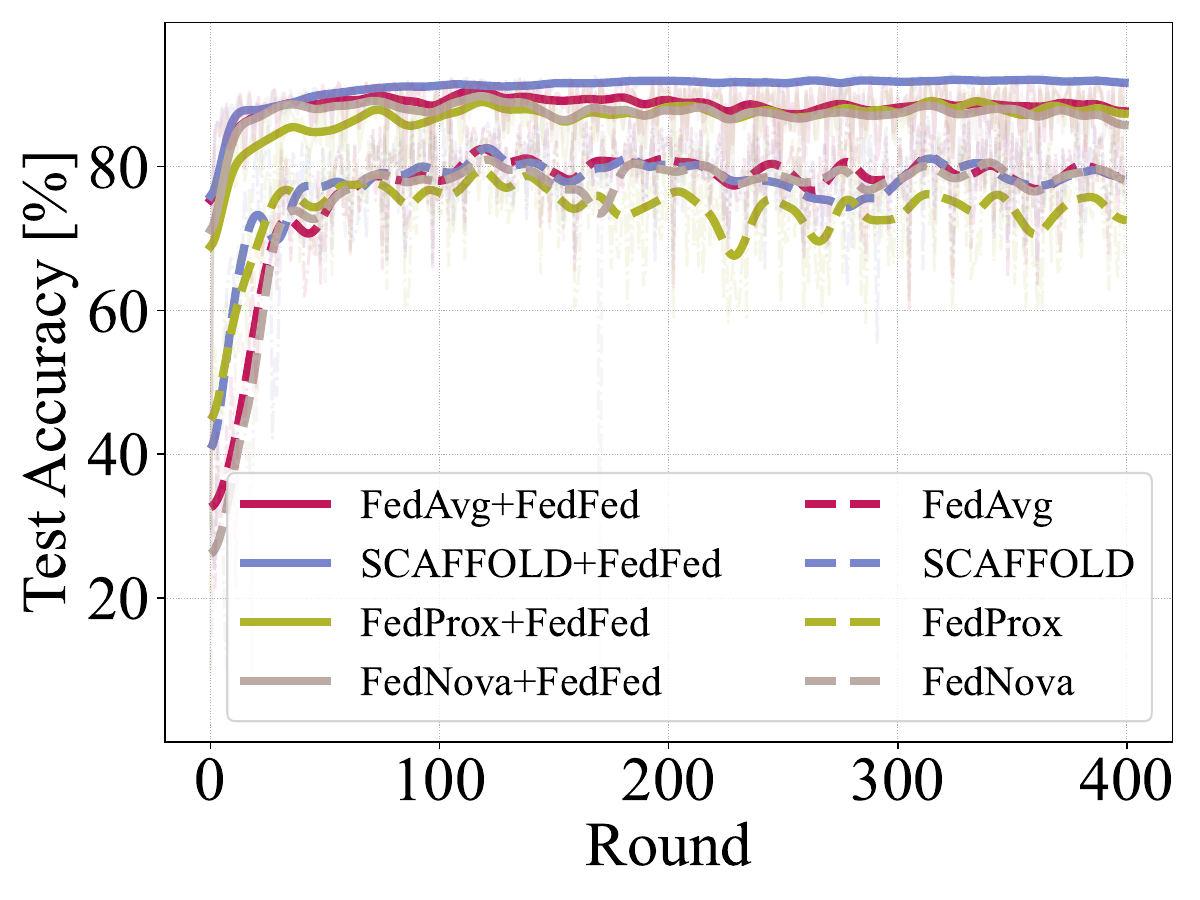}}\label{fig:conv_fmnist_2}
    \subfigure[$\alpha=0.05,E=1,K=10$]{\includegraphics[width=0.31\textwidth]{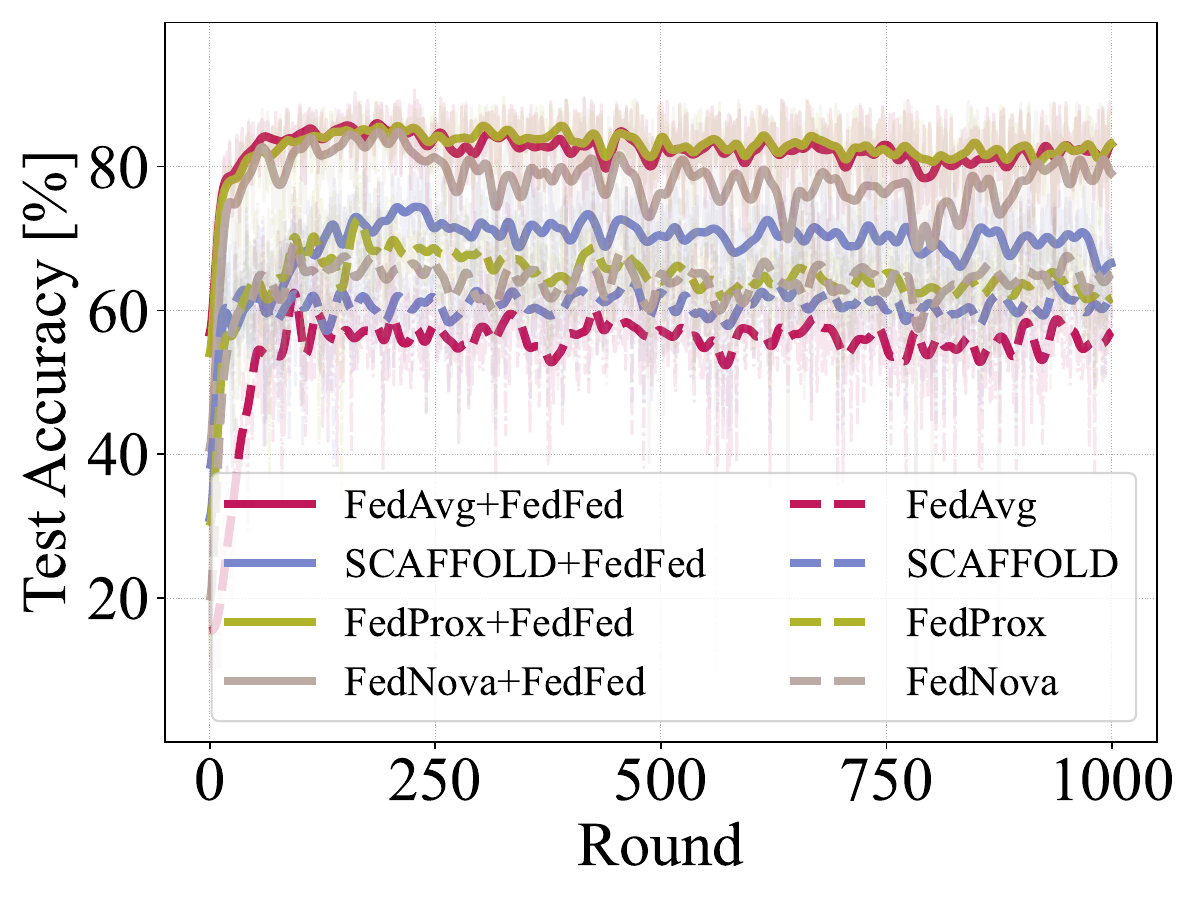}}\label{fig:conv_fmnist_3}
    \caption{Convergence and test accuracy comparison on FMNIST.}
    \label{fig:conv_fmnist}
\end{figure}

We provide more comparison results in this section to demonstrate the superiority of \method. Figure~\ref{fig:conv_cifar10}, Figure~\ref{fig:conv_cifar100}, Figure~\ref{fig:conv_svhn}, Figure~\ref{fig:conv_fmnist} have shown our enhancement in four datasets. In this paper, we mainly have four settings in our experiments: (1) $\alpha = 0.1, E=1, K=10$; (2) $\alpha = 0.1, E=5, K=10$; (3) $\alpha = 0.05, E=1, K=10$; (4) $\alpha = 0.1, E=1, K=100$. 

In addition, we investigate the impact of varying sampling rates on the performance of our FL system, and present the results in Table~\ref{tab:dif_sample_rate}. We observe that increasing the sampling proportion leads to gradual improvements in the model's performance, as well as faster convergence. This is due to a larger number of clients participating in each round, resulting in more consistent update directions for the aggregated model and the global model.

\begin{table}[htbp] 
        \caption{Top-1 Accuracy of $\alpha=0.5, E=1, K=10$ with $50\%$ sampling rate. }
	\centering
        \label{tab:hetero_1}
        \begin{tabular}{ccccc}
        \toprule[1.5pt]
        &CIFAR-10&FMNIST&SVHN&CIFAR-100\\
        \midrule[1.5pt]
        FedAvg&87.68 & 90.32&91.11&68.98\\ 
        \midrule[1.5pt]
        \method(Ours)& 93.21&94.01&93.71&69.52\\
        \bottomrule[1.5pt]
        \end{tabular}
\end{table}

We also conduct tests on \method using various levels of heterogeneity, such as $\alpha=0.5$ and $\alpha=1$. The results are presented in Tables~\ref{tab:hetero_1} and ~\ref{tab:hetero_2}. As the value of $\alpha$ increases, the original performance of FedAvg demonstrates significant improvement, while the performance enhancement of FedAvg deployed with \method is comparatively lower. This discrepancy can be attributed to the reduction in heterogeneity as $\alpha$ increases. When $\alpha$ reaches 1.0, the data distribution among clients becomes nearly homogeneous. As a result, the effectiveness of \method in mitigating heterogeneity through information sharing diminishes significantly.

\begin{table}[htbp] 
        \caption{Top-1 Accuracy of $\alpha=1.0, E=1, K=10$ with $50\%$ sampling rate. }
	\centering
        \label{tab:hetero_2}
        \begin{tabular}{ccccc}
        \toprule[1.5pt]
        &CIFAR-10&FMNIST&SVHN&CIFAR-100\\
        \midrule[1.5pt]
        FedAvg&89.45 & 92.38&92.03&69.02\\ 
        \midrule[1.5pt]
        \method(Ours)& 94.01&94.31&93.89&70.31\\
        \bottomrule[1.5pt]
        \end{tabular}
\end{table}

\subsection{More Results on Impact of DP noise}\label{appendix:impact_DP}

\begin{figure}[htbp]
   \centering
    \subfigure[Test accuracy on CIFAR10 with different noise level $\sigma_s^2$]{\includegraphics[width=0.3\textwidth]{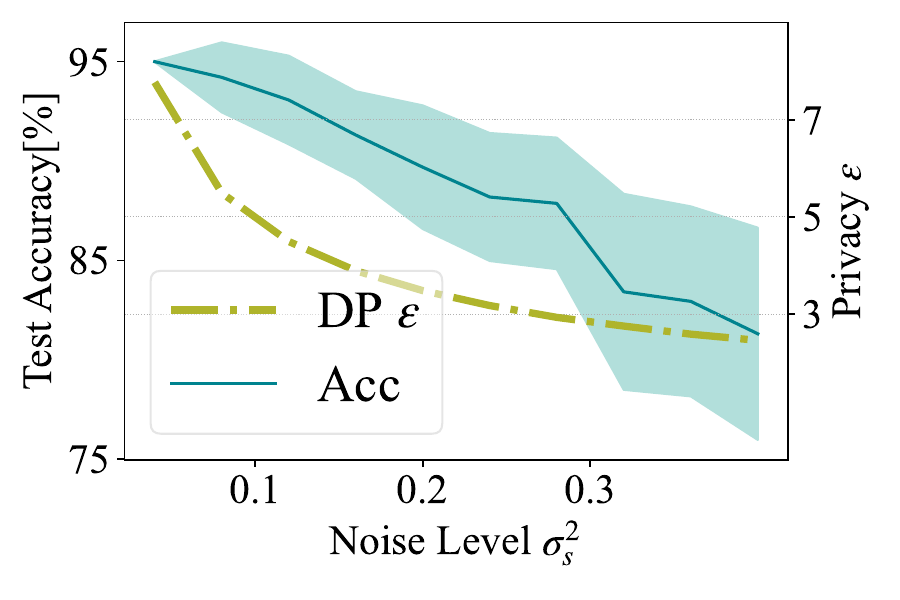}}\label{fig:appendix_privacy_performance_1}
    \subfigure[Test accuracy on SVHN with different noise level $\sigma_s^2$]{\includegraphics[width=0.3\textwidth]{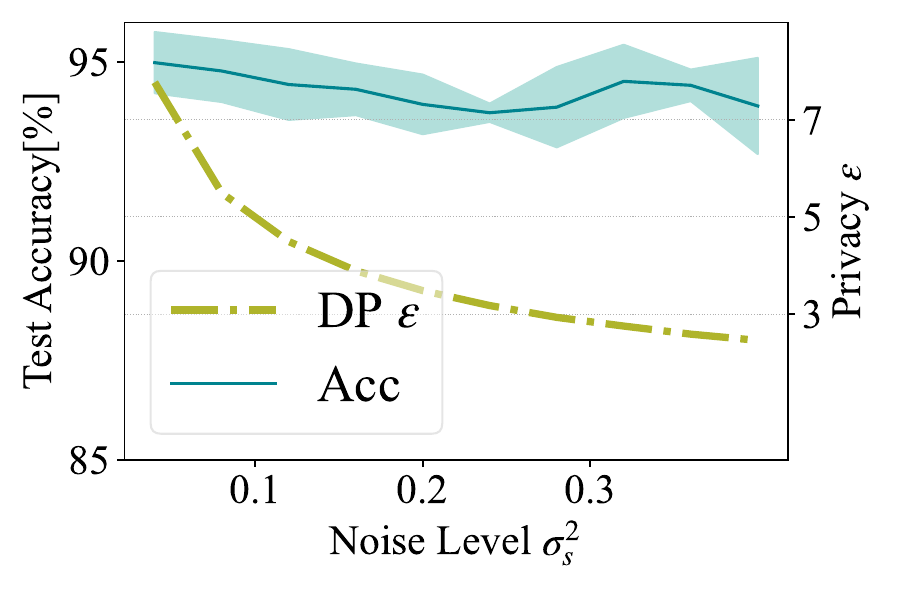}}\label{fig:appendix_privacy_performance_2}
    \subfigure[Test accuracy on CIFAR100 with different noise level $\sigma_s^2$]{\includegraphics[width=0.3\textwidth]{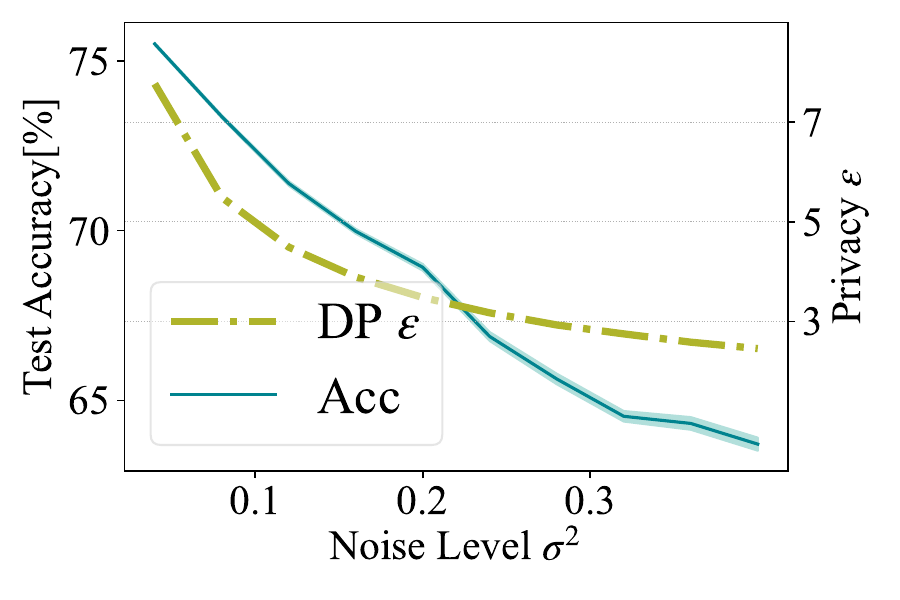}}\label{fig:appendix_privacy_performance_3}
    \caption{The impact of DP noise on different datasets.}
    \label{fig:appendix_privacy_performance}
\end{figure}
In this section, we present additional results on the impact of DP noise on three other datasets, as shown in Figure~\ref{fig:appendix_privacy_performance}. Our findings indicate that as the level of DP noise increases, the privacy budget decreases and the ability to protect information increases, but the performance of the model decreases accordingly. Thus, it is crucial to find a balance between privacy and performance in practical applications of our \method.

\subsection{More Explanation: Performance-Sensitive Features vs. Performance-Robust Features}
\label{exp:featureexplain}

For an intuitive understanding of why there is no drastic performance degradation for utilizing $\mathbf{x}_s$ as a substitute for raw data $\mathbf{x}$. Formally, e.g. in the classification task, a classifier trained by $\mathbf{x}$ gets comparable test accuracy on $\mathbf{x}_s$ giving the verification that $\mathbf{x}_s$ contains the primary features for generalization and vice versa. Specifically, three classifiers trained on $\mathbf{x}$, $\mathbf{x}_s$, and $\mathbf{x}_r$, Then test the generalization ability on $\mathbf{x}$, $\mathbf{x}_r$, and $\mathbf{x}_s$, separately. The result is shown in Figure~\ref{fig:dif_fea_com}.

\begin{figure}[htbp]
   \centering
    \subfigure[The generalization ability of classifier trained by raw data $\mathbf{x}$.]{\includegraphics[width=0.28\textwidth]{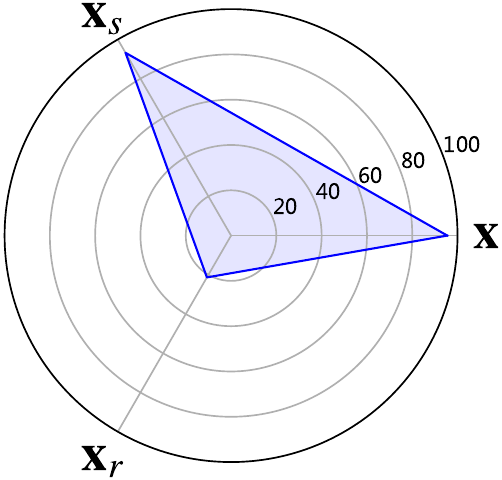}}
    \hspace{.1in}
    \subfigure[The generalization ability of classifier trained by \xef $\mathbf{x}_s$.]{\includegraphics[width=0.28\textwidth]{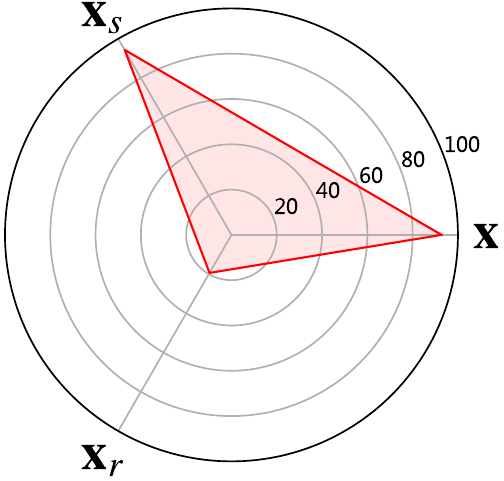}}
    \hspace{.1in}
    \subfigure[The generalization ability of classifier trained by \xrf $\mathbf{x}_r$.]{\includegraphics[width=0.28\textwidth]{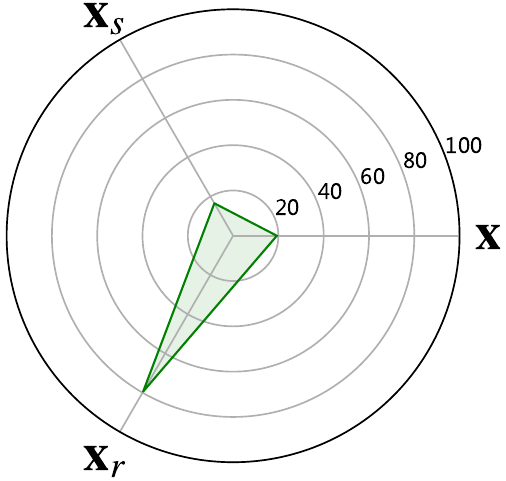}}
    \caption{The generalization ability graphs of three classifiers trained by different data types. The generalization ability means a well-trained model tests unseen information and measures the capability to finish a certain task, i.e., classification test accuracy. In our setting, we evaluate the generalization ability of classifiers on different information forms: raw data $\mathbf{x}$, \xef $\mathbf{x}_s$, \xrf $\mathbf{x}_r$.}\label{fig:dif_fea_com}
\end{figure}

As described above, $\mathbf{x}_r$ extract from $\mathbf{x}$ in an IB manner contains more visual information about private data $\mathbf{x}$ than $\mathbf{x}_s$. To measure the visual similarity among \xrf $\mathbf{x}_r$, \xef $\mathbf{x}_s$, and raw data $\mathbf{x}$. We choose PSNR \citep{hore2010image, song2023robust}, prominent visual information comparisons of images, to calculate pixels error, the result is shown in Table~\ref{tab:psnr}. 

\begin{table}[htbp] 
        \caption{The visual similarity of various feature types. PSNR of $\mathbf{x}$ and $\mathbf{x}_r$, $\mathbf{x}$ and  $\mathbf{x}_s$. }
	\centering
        \label{tab:psnr}
        \begin{tabular}{ccc}
        \toprule[1.5pt]
        &$\mathbf{x}_r$&$\mathbf{x}_s$\\
        \midrule[1.5pt]
        $\mathbf{x}$& $31.47\pm1.57$&$18.87\pm0.98$ \\ \bottomrule[1.5pt]
        \end{tabular}
\end{table}

Consequently, higher PSNR denotes that $\mathbf{x}_r$ has the most visual information of $\mathbf{x}$ than $\mathbf{x}_s$. While $\mathbf{x}_r$ shares similarities with $\mathbf{x}$ in terms of its form or structure, it exhibits limitations that hinder its ability to enhance performance. At the same time, $\mathbf{x}_s$ has the commensurate generalization ability like $\mathbf{x}$ to achieve our goal.

\subsection{Overheads Analysis of \method}
\label{appendix:ana_com_cost}

For the extra computation induced by \method, we provide two views to demonstrate the limited costs of extra computation, i.e., training time and FLOPs.

Table~\ref{tab:com_train_time} shows that training a generator requires less than $10\%$ of the time needed to train a classifier. Therefore, the additional training time required for the generator is limited.

Tables~\ref{tab:com_flops} and \ref{tab:com_params} show the model complexity results, which indicate that the computation required for the generator is approximately $8.2\%$ of that required for the classifier. Additionally, the number of communication rounds needed for the generator is significantly lower than that required by the classifier, with only 15 rounds compared to 1000 rounds for the classifier (namely, $T_d =15$ and $T_r=1000$).

The use of \method incurs additional memory usage, as the size of the globally-shared dataset is equivalent to the combined size of all clients' data. This may have limited impacts on the bandwidth, as the generated data is only sent once. To quantitatively measure its effects, we compared the size of the dataset with that of the classifiers in Table~\ref{tab:com_shared_params}. Sharing the generated dataset is equivalent to sharing classifiers in multiple rounds. For example, for CIFAR10, the data requires approximately 586MB of memory, while the classifier requires about 48MB. Thus, sharing the global dataset requires the same amount of communication as sending a classifier in approximately 14 communication rounds.

Based on the aforementioned analysis, we provide a comprehensive examination of the communication overheads associated with the \method.
Consider $K$ clients in the FL system. Let $m$ be the size of a single local model. The size of local private data is $\|\mathcal{D}_k\| = a$. Then, the ratio of the entire dataset to a model parameter  is $\gamma$, where $ \gamma = \frac{aK}{m}\approx14$. The extra communication cost for a single client is $(m+m)*T_d+a+aK$ where $(m+m)*T_d$ denotes the cost of download/upload models for $T_d$ rounds in Algorithm 1. Here, the $a$ denotes the performance-sensitive features sent by each client, and $aK$ is the data received by each client from the globally shared dataset. In the general process of FL, the overall communication costs are $(m+m)T_r\beta$, where $\beta$ is the sampling rate of a client. Therefore, the ratio of the extra communication overhead to the general FL process is:
\begin{equation}
    \frac{(m+m)\cdot T_d+a(K+1)}{(m+m)\cdot T_r\cdot \beta} = \frac{T_d}{T_r\cdot\beta}+\frac{a(K+1)}{2m\cdot T_r\cdot\beta} = \frac{T_d}{T_r\cdot\beta}+\frac{\gamma}{2 T_r \cdot\beta}+\frac{\gamma}{2 K\cdot T_r \cdot \beta}
\end{equation}
Here, we detail two examples in our experiments:
\begin{itemize}
        \item[*] For $K=10, T_d=15$, $T_r=1000$, and $\beta=50\%$, the extra communication costs are approximately $4.54\%$. 
    \item[*] When $K=100$, $T_d=15$, $T_r=1000$, and $\beta=10\%$, the extra communication costs are approximately $22.07\%$.
\end{itemize}

To facilitate the further deployment of \method, we offer three strategies tailored to different storage hardware configurations:
\begin{enumerate}
    \item One-time download: Local clients download the globally shared dataset once. A globally shared dataset costs approximately $14\times$
 the storage of a classifier model. 
    \item Partial download: A small portion of the globally shared dataset is selected and downloaded. This strategy incurs approximately $1.5\times$ communication cost compared to the previous strategy, while the storage required by the clients is the same as that of local private data. This represents a storage-friendly choice that may involve a trade-off in terms of performance.
    \item Intermittent download: A small set of globally shared dataset is downloaded after every $Z$
 rounds. This approach reduces the communication overhead to $\frac{1}{Z}$
 of that of strategy 2 while maintaining the storage overhead at the size of the local data.
\end{enumerate}

\begin{table}[htbp]
\setlength{\arrayrulewidth}{1.5pt}
 \begin{minipage}{0.49\linewidth}
  \centering
  
  \caption{Comparison of training time between Classifier and Generator}
\label{tab:com_train_time}
  \resizebox{0.9\textwidth}{!}{
        \begin{tabular}{cc}
\toprule[1.5pt]
Training time of Generator & 5,251 s   \\ \midrule[1.5pt]
Training time of Classifier                  & 62,901 s                    \\ \bottomrule[1.5pt]
\end{tabular}
  }
  
 \end{minipage}
 \hfill
 \begin{minipage}{0.49\linewidth}  
  \centering
  
\caption{Parameters of Classifier and Generator}
\label{tab:com_params}
  \resizebox{1\textwidth}{!}{
        \begin{tabular}{cc}
\toprule[1.5pt]
Parameters of  Generator & 48M    \\ \midrule[1.5pt]
Parameters  of Classifier               &42MB                   \\ \bottomrule[1.5pt]
\end{tabular}
  }
 \end{minipage}
\end{table}

\begin{table}[htbp]
\setlength{\arrayrulewidth}{1.5pt}
 \begin{minipage}{0.45\linewidth}
  \centering
  
  \caption{FLOPs of Classifier and Generator}
\label{tab:com_flops}
  \resizebox{1\textwidth}{!}{
        \begin{tabular}{cc}
\toprule[1.5pt]
FLOPs of training Generator &16.03M FLOPs  \\ \midrule[1.5pt]
FLOPs of training Classifier                 & 556.66M FLOPs                    \\ \bottomrule[1.5pt]
\end{tabular}
  }
  
 \end{minipage}
 \hfill
 \begin{minipage}{0.49\linewidth}  
  \centering
  
\caption{Parameters of Local Classifier and Globally-shared Data }
\label{tab:com_shared_params}
  \resizebox{1\textwidth}{!}{
\begin{tabular}{cc}
\toprule[1.5pt]
Parameters of Local Classifier & 42MB   \\ \midrule[1.5pt]
Parameters of Globally-shared Data             &586MB                \\ \bottomrule[1.5pt]
\end{tabular}
  }
 \end{minipage}
\end{table}

\section{More Related Works}
\label{appendix:MoreRelatedWork}
\subsection{Federated Learning with Heterogeneous Data.}
\label{appendix:MoreRelatedWork_FL_nonIID}

\begin{table}[htbp]
\centering
\caption{Existing method about information sharing strategy to mitigate data heterogeneity in Federated Learning.} 
\vspace{1pt}
\centering
\small{
\scalebox{0.9}{
\begin{tabular}{cccc}
\toprule[1.5pt]
                  &  Shared Info &  Protection on Shared Info & Attack Test     \\
\midrule[1pt]

FD+FAug~\citep{jeong2018communication}  & {Model output \& Synthetic info}  &  \XSolidBrush  &  --- \\
FedMD~\citep{li2019fedmd} &logits & \XSolidBrush &---\\
XorMixFL~\citep{shin2020xor} & Statistic of data   &  \XSolidBrush &  --- \\
FedDF~\citep{lin2020ensemble} & Statistic of logits & \XSolidBrush & ---\\
FedProto~\citep{tan2022fedproto} & Abstract class prototypes & \XSolidBrush & ---\\
FedFTG~\citep{zhang2022fine} & Synthetic info & \XSolidBrush & ---\\
 CCVR~\citep{luo2021no}  &  Statistic of logits  &  \XSolidBrush &     Model inversion attack    \\
 Fed-ZDAC~\citep{hao2021towards}& Batch normalization features   &  \CheckmarkBold &  --- \\
 FedAUX~\citep{sattler2021fedaux} & Synthetic info &\CheckmarkBold & ---\\\hline
\textbf{\method (ours)} & Protected partial features   &  \CheckmarkBold &  \makecell[c]{Model inversion attack \\ Membership inference attack} \\
\bottomrule[1.5pt] 
\end{tabular}}}
\label{tab:demystifySharingData}
\end{table}

FL allows the distributed clients to train a model across multiple datasets jointly.
It ``protects'' user privacy by controlling data accessibility among different clients, i.e., only the data owner has the right to access the corresponding data. 
FedAvg \citep{mcmahan2017communication} is the seminal work designed to reduce communication via  more local training epochs and fewer communication rounds. 
In the following, many works \citep{zhao2018federated,li2022federated} observe that the FedAvg's divergence is considerable compared with centralized training if clients hold heterogeneous distribution.
Even worse, the gap accumulates as the weight aggregates, potentially hurting model performance.

Recently, a series of works have tried to calibrate the updated direction of local training from the global model. 
FedProx \citep{li2020federated} adds an $L_2$ distance as the regularization term in the objective function,  providing a theoretical guarantee of convergence. 
Similarly, 
FedIR~\citep{hsu2020federated} operates on a mini-batch by self-normalized weights to address the non-identical class distribution. SCAFFOLD~\citep{karimireddy2020scaffold} restricts the model using the previous knowledge. 
Besides, MOON~\citep{li2021model} introduces contrastive learning at the model level to correct the divergence between clients and the server.

Meanwhile, recent works propose designing new model aggregation schemes. FedAvgM~\citep{hsu2019measuring} performs momentum on the server side. FedNova~\citep{wang2020tackling} adopts a normalized averaging method to eliminate objective inconsistency. A study~\citep{cho2020client} also indicates that biasing client selection with higher local loss can speed up the convergence rate. The coordinate-wise averaging of weights also induces noxious performance. FedMA~\citep{wang2020federated} conducts a Bayesian non-parametric strategy for heterogeneous data. FedBN~\citep{li2021fedbn} focuses on feature shift Non-IID and performs local batch normalization before averaging models. 

Another existing direction for tackling data heterogeneity is sharing data. This line of work mainly assembles the data of different clients to construct a global IID dataset, mitigating client drift by replenishing the lack of information of clients~\citep{zhao2018federated}. Existing methods include synthesizing data based on the raw data by GAN~\citep{jeong2018communication}. However, the synthetic data is generally relatively similar to the raw data, leading to privacy leakage at some degree. Adding noise to the shared data is another promising strategy~\citep{chatalic2022compressive,cai2021data}. Some methods employ the statistics of data~\citep{shin2020xor} to synthesize for sharing, which still contains some raw data content. Other methods distribute intermediate features \citep{hao2021towards}, logits \citep{chang2019cronus,luo2021no}, or learn the new embedding \citep{tan2022fedproto}. These tactics will increase the difficulty of privacy protection because some existing methods can reconstruct images based on feature inversion methods~\citep{zhao2020makes}. Most of the above methods share information without a privacy guarantee or with strong privacy-preserving but poor performance, posing the privacy-performance dilemma.

Concretely, in FD~\citep{jeong2018communication} all clients leverage a generative model collaboratively for data generation in a homogeneous distribution. For better privacy protection, G-PATE \citep{long2021g} performs discriminators with local aggregation in GAN.  Fed-ZDAC(Fed-ZDAS) \citep{hao2021towards}, depending on which side to play augmentation, introduce zero-shot data augmentation by gathering intermediate activations and batch normalization(BN) statistics to generate fake data. Inspired by mixup data. Cronus \citep{chang2019cronus} transmits the logits information while CCVR \citep{luo2021no} collects statistical information of logits to sample fake data. FedFTG \citep{zhang2022fine} use a generator to explore the input space of the local model and transfer local knowledge to the global model. FedDF \citep{lin2020ensemble} utilizes knowledge distillation based on unlabeled data or a generator and then conducts \emph{AVGLOGITS}. We summarize existing information-sharing methods in Table~\ref{tab:demystifySharingData} to compare the dissimilarity with \method. The main difference between FedDF and \method is that our method distils raw data into two parts (\xef $x_e$ and \xrf $x_r$) rather than transferring distilled knowledge. We provide \textit{hierarchical protections} to preserve information privacy while overcoming the privacy-performance dilemma.

\subsection{Differential Privacy with Federated Learning}
\label{appendix:MoreRelatedWork_FL_DP}

Carlini~et.al~\citep{uss/Carlini0EKS19}  found that memorizing sensitive data occurs in early training, regardless of data rarity and model.
Training with differential privacy~\citep{nips/ZhuLH19,sp/NasrSH19} is a feasible solution to avoid its risk, albeit at some loss in utility. 
Differential privacy guarantees that an adversary should not discern whether a client’s data was used.

Huang~et al~\citep{tifs/HuangHGCG20} and Wei~et al~\citep{tifs/WeiLDMYFJQP20} are the first (to their knowledge) to analyze the relation between convergence and utility in FL.
Andrew~et al~\citep{andrew2021differentially} explore setting an adaptive clipping norm in the federated setting rather than using a fixed one.
Andrew~et al~\citep{andrew2021differentially} explore setting an adaptive clipping norm in the federated setting rather than using a fixed one.
They show that  adaptive clipping to gradients can perform as well as any fixed clip chosen by hand.
Hoeven~et al~\citep{nips/Hoeven19} introduce data-dependent bounds and apply symmetric noise in online learning, which allows data providers to pick noise distribution.
Sun~et al~\citep{ijcai/SunQC21} explicitly vary ranges of weights at different layers in a DNN and shuffle high-dimensional parameters at an aggregation for
easing explodes of privacy budgets.
Peng~et al~\citep{peng2021differentially} study the knowledge embedding problem using DP protection in FL. 
Applying differential privacy and its variants to the federated setting become more prevailing nowadays.


\end{document}